\newcommand{\hQ}{\widehat Q}
\newcommand{\hV}{\widehat V}
\newcommand{\hpi}{\widehat \pi}
\newcommand{\mcB}{\mathcal B}
\newcommand{\rTheta}{\rm \Theta}
\newcommand{\mP}{\mathbb P}
\newcommand{\mB}{\mathbb B}
\newcommand{\mcH}{\mathcal H}
\newcommand{\mcF}{\mathcal F}
\newcommand{\mcG}{\mathcal G}
\newcommand{\mcN}{\mathcal N}
\newcommand{\mE}{\mathbb E}
\newcommand{\mcs}{\mathcal S}
\newcommand{\mca}{\mathcal A}
\newcommand{\mcx}{\mathcal X}
\newcommand{\mcl}{\mathcal L}
\newcommand{\mcd}{\mathcal D}
\newcommand{\mf}{\mathcal F}
\newcommand{\mcv}{\mathcal V}
\newcommand{\ltwo}[1]{\left\|#1\right\|_2}
\newcommand{\lone}[1]{\left|#1\right|}
\newcommand{\lF}[1]{\left\|#1\right\|_F}
\newcommand{\lsigma}[1]{\left\|#1\right\|_{{\rm\Sigma}({\rm\Theta}_0)^{-1}}}
\newcommand{\llambda}[1]{\left\|#1\right\|_{{\rm\Lambda}({w}_0)^{-1}}}
\newcommand{\lomega}[1]{\left\|#1\right\|_{{\rm\Omega}_N^{-1}}}
\newcommand{\lH}[1]{\left\|#1\right\|_{\mathcal H}}
\newcommand{\lM}[1]{\left\|#1\right\|_{M}}
\newcommand{\lHK}[1]{\left\|#1\right\|_{\mathcal H_{K}}}
\newcommand{\lHKH}[1]{\left\|#1\right\|_{\mathcal H_{K_H}}}
\newcommand{\lsigmab}[1]{\left\|#1\right\|_{{\rm\Sigma}^{-1}}}
\newcommand{\lsigmabp}[1]{\left\|#1\right\|_{{\rm\Sigma}^{\prime-1}}}
\newcommand{\llambdab}[1]{\left\|#1\right\|_{{\rm\Lambda}^{-1}}}
\newcommand{\llambdabh}[1]{\left\|#1\right\|_{{\rm\Lambda}_h^{-1}}}
\newcommand{\llambdabp}[1]{\left\|#1\right\|_{{\rm\Lambda}^{\prime-1}}}
\newcommand{\lomegab}[1]{\left\|#1\right\|_{{\rm\Omega}}}
\newcommand{\lomegabp}[1]{\left\|#1\right\|_{{\rm\Omega}^{\prime}}}
\newcommand{\lupsilon}[1]{\left\|#1\right\|_{{\rm\Upsilon}}}
\newcommand{\lupsilonp}[1]{\left\|#1\right\|_{{\rm\Upsilon}^{\prime}}}
\DeclareMathOperator*{\argmin}{argmin}
\DeclareMathOperator*{\argmax}{argmax}
\newcommand{\mR}{\mathbb{R}}
\newtheorem{theorem}{Theorem}
\newtheorem{corollary}{Corollary}
\newtheorem{lemma}{Lemma}
\newtheorem{assumption}{Assumption}
\newtheorem{definition}{Definition}
\title{Provably Efficient Offline Reinforcement Learning with Trajectory-Wise Reward}
\author{Tengyu Xu \quad Yue Wang \quad Shaofeng Zou\quad Yingbin Liang
	\thanks{ Tengyu Xu and Yingbin Liang are with the Department of Electrical and Computer Engineering, The Ohio State University (email: xu.3260@buckeyemail.osu.edu, liang.889@osu.edu);
 Yue Wang and Shaofeng Zou are with the Department of Electrical Engineering, University at Buffalo, Buffalo, NY 14228 USA (email: ywang294@buffalo.edu, szou3@buffalo.edu).
}
}
\begin{document}

\maketitle
\begin{abstract}
  The remarkable success of reinforcement learning (RL) heavily relies on observing the reward of every visited state-action pair. In many real world applications, however, an agent can observe only a score that represents the quality of the whole trajectory, which is referred to as the {\em trajectory-wise reward}. In such a situation, it is difficult for standard RL methods to well utilize trajectory-wise reward, and large bias and variance errors can be incurred in policy evaluation. In this work, we propose a novel offline RL algorithm, called Pessimistic vAlue iteRaTion with rEward Decomposition (PARTED), which decomposes the trajectory return into per-step proxy rewards via least-squares-based reward redistribution, and then performs pessimistic value iteration based on the learned proxy reward. To ensure the value functions constructed by PARTED are always pessimistic with respect to the optimal ones, we design a new penalty term to offset the uncertainty of the proxy reward. For general episodic MDPs with large state space, we show that PARTED with overparameterized neural network function approximation achieves an $\tilde{\mathcal{O}}(D_{\text{eff}}H^2/\sqrt{N})$ suboptimality, where $H$ is the length of episode, $N$ is the total number of samples, and $D_{\text{eff}}$ is the effective dimension of the neural tangent kernel matrix. To further illustrate the result, we show that PARTED achieves an $\tilde{\mathcal{O}}(dH^3/\sqrt{N})$ suboptimality with linear MDPs, where $d$ is the feature dimension, which matches with that with neural network function approximation, when $D_{\text{eff}}=dH$. To the best of our knowledge, PARTED is the first offline RL algorithm that is provably efficient in general MDP with trajectory-wise reward.
\end{abstract}

\section{Introduction}
Reinforcement learning (RL) aims at searching for an optimal policy in an unknown environment \cite{sutton2018reinforcement}. To achieve this goal, an instantaneous reward is typically required at every step so that RL algorithms can maximize the cumulative reward of a Markov Decision Process (MDP). In recent years, RL has achieved remarkable empirical success with a high quality reward function \cite{mnih2015human,levine2016end,silver2017mastering,senior2020improved,maei2011gradient}. However, in many real-world scenarios, instantaneous rewards are hard or impossible to be obtained. For example, in the autonomous driving task \cite{shalev2016safe}, it is very costly and time consuming to score every state-action pair that the agent (car) visits. 
In contrast, it is fairly easy to score the entire trajectory after the agent completing the task \cite{chatterji2021theory}. Therefore, in practice, it becomes more reasonable to adopt trajectory-wise reward schemes, in which only a return signal that represents the quality of the entire trajectory is revealed to the agent in the end. In recent years, trajectory-wise rewards have become prevalent in many real-world applications \cite{gong2019decentralized,olivecrona2017molecular,lin2018efficient,hein2017benchmark,rahmandad2009effects}.

Although trajectory-wise rewards are convenient to be obtained, it is often challenging for standard RL algorithms to utilize such a type of rewards well due to the high bias and variance it can introduce in the policy evaluation process \cite{arjona2019rudder}, which leads to unsatisfactory policy optimization results. To address such an issue, \cite{chatterji2021theory,pacchiano2021dueling} proposed to encode the whole trajectory and search for a non-Markovian trajectory-dependent optimal policy using the contextual bandit method. Although this type of approaches have promising theoretical guarantees, they are difficult to be implemented in practice due to the difficulty of searching the large trajectory-dependent policy space whose dimension increases exponentially with the horizon length. Another type of approaches widely adopted in practice is called {\em reward redistribution}, which learns a reward function by allocating the trajectory-wise reward to every visited state-action pairs based on their contributions \cite{arjona2019rudder,liu2019sequence,gangwani2020learning,ren2021learning,efroni2021reinforcement}. Since the reward function in reward redistribution is typically learned via solving a supervised learning problem, such an approach is sample-efficient and can be integrated into the existing RL frameworks easily. However, most of existing reward redistribution approaches do not have theoretical performance guarantee. So far, only \cite{efroni2021reinforcement} proposes a provably efficient reward redistribution algorithm, but is only applicable to tabular episodic MDP and requires both reward and transition kernel to be horizon-independent. 

Despite the superior performance of the reward redistribution method, all previous algorithms considered only the {\em online} setting, which are not applicable to many critical domains where offline sampling is preferred (or can be required), as interactive data collection could be very costly and risky \cite{shalev2016safe,gottesman2019guidelines}. 
{\em How to design reward redistribution in offline RL for trajectory-wise rewards is an important but fully unexplored problem.} For such a problem, designing reward redistribution algorithms can be hard due to the insufficient sample coverage issue \cite{wang2020statistical} in offline RL. Further challenges can be encountered when we try to design {\em provably} efficient reward distribution algorithms for general MDPs with large state space and horizon-dependent rewards and transition kernels, which has not been studied in online setting.

The goal of this work is to design an offline RL algorithm with reward redistribution for trajectory-wise rewards, which has provable efficiency guarantee for general episodic MDPs.

\subsection{Main Contributions}
In this paper, we consider episodic MDP with possibly infinity state space and horizon-dependent reward function and transition kernel.
The trajectory-wise reward adopts a standard sum-form as considered previously in \cite{han2021off,zheng2018learning,klissarov2020reward,oh2018self,ren2021learning,efroni2021reinforcement}, in which only the summation of rewards over the visited state-action pairs is revealed at the end of each episode.  

We propose a novel Pessimistic vAlue iteRaTion with rEward Decomposition (PARTED) algorithm for offline RL with trajectory-wise rewards, which incorporates a least-square-based reward redistribution into the pessimistic value iteration (PEVI) algorithm \cite{jin2021pessimism,yin2021near,yin2020near,yin2022near,yin2021towards}. Differently from the standard PEVI with instantaneous reward, in which reward and value function can be learned together by solving a single regression problem, in PARTED, reward need to be learned separately from the value function by training a regression model to decompose the trajectory return into per-step proxy rewards. In order to capture the reward and value function for a large state space, we adopt overparameterized neural networks for  function approximation. Moreover, to offset the estimation error of proxy rewards, we design a penalty function by transfering the uncertainty from the covariance matrix of trajectory features to step-wise proxy rewards via an "one-block-hot" vector, which is new in the literature. 

We show that our proposed new penality term ensures that the value functions constructed by PARTED are always pessimistic with respect to the optimal ones. Furthermore, with overparameterized neural network function approximation, we show that PARTED achieves an $\tilde{\mathcal{O}}(D_{\text{eff}}H^2/\sqrt{N})$ suboptimality, where $H$ is the length of episode, $N$ is the total number of samples, and $D_{\text{eff}}$ is the effective dimension of neural tangent kernel matrix. To further illustrate our result, we show that PARTED achieves an $\tilde{\mathcal{O}}(dH^3/\sqrt{N})$ suboptimality in the linear MDP setting, where $d$ is the feature dimension, which matches that in the neural network function approximation setting when $D_{\text{eff}} = dH$. To the best of our knowledge, PARTED is the first-known offline RL algorithm that is provably efficient in general episodic MDPs with trajectory-wise rewards.

\subsection{Related Works}

\textbf{Trajectory-Wise Reward RL. } Policy optimization with trajectory-rewards is extremely difficult. A variety of practical strategies have been proposed to resolve this technical challenge by redistributing trajectory rewards to step-wise rewards. 
RUDDER \cite{arjona2019rudder} trains a return predictor of state-action sequence with LSTM \cite{hochreiter1997long}, and the reward at each horizon is then assigned by the difference between the predications of two adjacent sub-trajectories. Later, \cite{liu2019sequence} improves RUDDER and utilizes a Transformer \cite{vaswani2017attention} for better reward learning. IRCR \cite{gangwani2020learning} assigns the proxy reward of a state-action pair as the normalized value of trajectory returns that contain the correspondingly state-action pair. RRD \cite{ren2021learning} learns a proxy reward function by solving a supervised learning problem together with a Monte-Carlo sampling strategy. 
Although those methods have achieved great empirical success, they all lack overall theoretical performance guarantee.

Differently from empirical studies, existing theoretical works of trajectory-wise reward RL are rare and focus only on the online setting. One line of research assumes trajectory reward being non-Markovian, and thus focuses on searching for a non-Markovian, trajectory-dependent optimal policy. 
\cite{chatterji2021theory} assumes that trajectory-wise reward is a binary signal generated by a logistic classifier with trajectory embedding as the input. In this setting, the policy optimization problem is reduced to a linear contextual bandit problem in which the trajectory embedding is the contextual vector. \cite{pacchiano2021dueling} considers a similar setting as \cite{chatterji2021theory} but assumes only having access to a binary preference score between two trajectories instead of an absolute reward of a trajectory. Another line of research assumes that the trajectory-wise reward is the summation of underlying step-wise Markovian rewards. The goal of this line of work is to search for an optimal Markovian policy.
\cite{cohen2021online} adopted a mirror descent approach so that the summation of rewards alone is sufficient to perform the policy optimization. This approach relies on the on-policy unbiased sampling of trajectory rewards, and can hardly be extended to the offline setting.
\cite{efroni2021reinforcement} proposed to recover the reward by solving a least-squared regression problem that fits the summation of reward estimation toward the trajectory reward. 

To our best knowledge, offline RL with trajectory-wise rewards (where no interaction with the environment is allowed) has not been studied before, and our work develops the first-known algorithm for such a setting with provable sample efficiency guarantee. Further, although our reward redistribution approach applies the least-square based method, which has also been adopted in \cite{efroni2021reinforcement}, our algorithm is designed for general MDPs with possibly infinite state and horizon-dependent rewards and transition kernels, which is very different from that in \cite{efroni2021reinforcement} designed for tabular MDPs with time-independent rewards and transition kernels.

\textbf{Offline RL. }The major challenge in offline RL is the insufficient sample coverage in the pre-collected dataset, which arises from the lack of exploration \cite{wang2020statistical,liu2020provably}. To address such a challenge, two types of algorithms have been studied: (1) regularized approaches, which prevent the policy from visiting states and actions that are less covered by the dataset \cite{dadashi2021offline,fujimoto2019off,fujimoto2019benchmarking,wang2020critic,fujimoto2021minimalist}; (2) pessimistic approach, which penalize the estimated values of the less-covered state-action pairs \cite{buckman2020importance, kumar2020conservative}. So far, a number of provably efficient pessimistic offline RL algorithm have been proposed in both tabular MDP setting \cite{yin2020near,shi2022pessimistic,yan2022efficacy,li2022settling,yin2021optimal,ren2021nearly,xie2021policy,yin2021towards,rashidinejad2021bridging} and linear MDP setting \cite{jin2021pessimism,xie2021bellman,zanette2021provable,wang2020statistical,zanette2021exponential,foster2021offline,yin2022near}.
However, the efficiency of all those works relies on both the availability of instantaneous reward and special structures of MDP, which can hardly be satisfied in practical settings. In this work, we take a first step towards relaxing those two assumptions by proposing PARTED, which is provably efficient in general episodic MDPs with trajectory-wise rewards. 


\section{Preliminary and Problem Formulation}

\subsection{Episodic Markov Decision Process}\label{sc: MDP}

An episodic Markov decision process (MDP) is defined by a tuple $(\mcs, \mca, \mP,r,H)$, where $\mcs$ and $\mca$ are the state and action spaces, $H>0$ is the length of each episode, and
$\mP=\{\mP_h\}_{h\in[H]}$ and $r=\{r_h\}_{h\in[H]}$ are the transition kernel and reward, respectively, where $[n]=\{1,2,\cdots,n\}$ for integer $n\geq 1$. 
We assume $\mcs$ is a measurable space of possibly infinite cardinality and $\mca$ is a finite set. For each $h\in[H]$, $\mP_h(\cdot|s,a)$ denotes the transition probability when action $a$ is taken at state $s$ at timestep $h$, and $r_h(s,a) \in [0,1]$ is a {\bf random} reward that is observed with state-action pair $(s,a)$ at timestep $h$. We denote the mean of the reward as $R_h(s,a)=\mE[r_h(s,a)|s,a]$ for all $(s,a)\in\mcs\times\mca$. For any policy $\pi=\{\pi_h\}_{h\in[H]}$, we define the state value function $V^\pi_h(\cdot):\mcs\rightarrow \mR$ and state-action value function $Q^\pi_h(\cdot):\mcs\times\mca \rightarrow \mR$ at each timestep $h$ as
\begin{flalign*}
	V^\pi_h(s)=\mE_\pi\left[\sum_{t=h}^{H}r_t(s_t,a_t)\Bigg| s_h=s\right],\quad Q^\pi_h(s,a)=\mE_\pi\left[\sum_{t=h}^{H}r_t(s_t,a_t)\Bigg| (s_h,a_h)=(s,a)\right],
\end{flalign*}
where the expectation $\mE_\pi$ is taken with respect to the randomness of the trajectory induced by policy $\pi$, which is obtained by taking action $a_t\sim\pi_t(\cdot|s_t)$ and transiting to the next state $s_{t+1}\sim\mP_t(\cdot|s_t,a_t)$ at timestep $t\in[H]$. At each timestep $h\in[H]$, for any $f: \mcs\rightarrow \mR$, we define the transition operator as $(\mP_hf)(s,a) = \mE\left[f(s_{h+1})|(s_h,a_h)=(s,a)\right]$ and the Bellman operator as $(\mB_hf)(s,a)=R_h(s,a)+(\mP_hf)(s,a)$. For episodic MDP $(\mcs, \mca, \mP,r,H)$, we have
\begin{flalign*}
	Q^\pi_h(s,a)=(\mB_h V^\pi_{h+1})(s,a),\quad V^\pi_h(s)=\langle Q^\pi_h(s,\cdot),\pi_h(\cdot|s)\rangle_\mca, \quad V^\pi_{H+1}(s) = 0,
\end{flalign*}
where $\langle\cdot,\cdot\rangle_\mca$ denotes the inner product over $\mca$. We define the optimal policy $\pi^*$ as the policy that yields the optimal value function, i.e., $V^{\pi^*}_h(s)=\sup_{\pi}V^\pi_h(s)$  for all $s\in\mcs$ and $h\in[H]$. For simplicity, we denote $V^{\pi^*}_h$ and $Q^{\pi^*}_h$ as $V^*_h$ and $Q^*_h$, respectively. The Bellman optimality equation is given as follows
\begin{flalign}\label{eq: 1}
Q^*_h(s,a)=(\mB_h V^*_h)(s,a),\quad V^*_h(s)=\argmax_{a\in\mca} Q^\pi_h(s,\cdot), \quad V^*_{H+1}(s) = 0,
\end{flalign}
The goal of reinforcement learning is to learn the optimal policy $\pi^*$. For any fixed $\pi$, we define the performance metric as 
\begin{flalign*}
	\text{SubOpt}(\pi,s)=V^*_1(s) - V^\pi_1(s),
\end{flalign*}
which is the suboptimality of the policy $\pi$ given the initial state $s_1=s$. 

\subsection{Trajectory-Wise Reward and Offline RL}
In the trajectory-wise reward setting, the transition of the environment is still Markovian and the agent can still observe and interact with the environment instantly as in standard MDPs. However, unlike standard MDPs in which the agent can receive an instantaneous reward $r_h(s,a)$ for every visited state-action pair $x$ at each timestep $h$, in the trajectory-wise reward setting, only a reward that is associated with the whole trajectory can be observed at the end of the episode, i.e., $r(\tau)$ where $\tau=\{(s^\tau_1,a^\tau_1),\cdots,(s^\tau_H,a^\tau_H)\}$ denotes a trajectory and $(s^\tau_h,a^\tau_h)$ is the $h$-th state-action pair in trajectory $\tau$, which is called "trajectory reward" in the sequel. In this work, we consider the setting in which the trajectory reward is the summation of the underlying instantaneous reward in the trajectory of MDP $(\mcs, \mca, \mP,r,H)$, i.e., $r(\tau)=\sum_{h=1}^Hr_h(s^\tau_h,a^\tau_h)$. We denote the mean of the trajectory reward as $R(\tau)=\mE[r(\tau)|\tau] = \sum_{h=1}^H R_h(s^\tau_h,a^\tau_h)$. Such a sum-form reward has been commonly considered in both theoretical \cite{efroni2021reinforcement} and empirical studies \cite{han2021off,zheng2018learning,klissarov2020reward,oh2018self,ren2021learning}. It models the situations where the agent's goal is captured by a certain metric with additive properties, e.g., the energy cost of a car during driving, the click rate of advertisements during a time interval, or the distance of a robot's running. Such a form of reward can be more general than the standard RL feedback and is expected to be more common in practical scenarios.
Note that RL problems under trajectory-wise rewards is very challenging,
as traditional policy optimization approach typically fails due the obscured feedback received from the environment, which causes large value function evaluation error \cite{han2021off}.

We consider the offline RL setting, in which a learner has access only to a pre-collected dataset $\mcd$ consisting of $N$ trajectories $\{\tau_i, r(\tau_i)\}_{i,h=1}^{N,H}$ rolled out from some possibly unknown behavior policy $\mu$, where $\tau_i$ and $r(\tau_i)$ are the $i$-th trajectory and the observed trajectory reward of $\tau_i$, respectively. Given this batch data $\mcd$ with only trajectory-wise rewards and a target accuracy $\epsilon$, our goal is to find a policy $\pi$ such that $\text{SubOpt}(\pi,s)\leq \epsilon$ for all $s\in\mcs$.

\subsection{Overparameterized Neural Network}\label{subsc: overpnn}
In this paper, we consider the function approximation setting, in which the state-action value function is approximated by a two-layer neural network. To simplify the notation, we denote $\mcx=\mcs\times\mca$ and view it as a subset of $\mR^d$. We further assign a feature vector $x\in\mcx$ to represent a state-action pair $(s,a)$. Without loss of generality, we assume that $\ltwo{x}=1$ for all $x\in\mcx$. We also allow $x=0$ to represent a null state-action pair. 
We now define a two-layer neural network $f(\cdot,b,w):\mcx\rightarrow\mR$ with $2m$ neurons and weights $(b,w)$ as
\begin{flalign}\label{eq: 2}
	f(x;b,w)=\frac{1}{\sqrt{2m}}\sum_{r=1}^{2m}b_j\cdot\sigma(w_r^\top x),\qquad \forall x\in\mcx,
\end{flalign}
where $\sigma(\cdot):\mR\rightarrow\mR$ is the activation function, $b_r\in\mR$ and $w_r\in\mR^d$ for all $r\in[2m]$, and $b=(b_1,\cdots,b_{2m})^\top\in\mR^{2m}$ and $w=(w^\top_1,\cdots,w^\top_{2m})^\top\in\mR^{2md}$. We make the following assumption for $\sigma(\cdot)$, which can be satisfied by a number of activation functions such as ReLU and $\tanh(\cdot)$.
\begin{assumption}\label{ass}
	For all $x\in\mcx$, we have $\lone{\sigma^\prime(x)}\leq C_\sigma<+\infty$ and $\sigma^\prime(0)=0$.
\end{assumption}
\vspace{-0.2cm}
We initialize $b$ and $w$ via a symmetric initialization scheme \cite{gao2019convergence,bai2019beyond}: for any $1\leq r\leq m$ we set $b_{0,r}\sim \text{Unif}(\{-1,1\})$ and $w_{0,r}\sim N(0,I_d/d)$, where $I_d$ is the identity matrix in $\mR^d$, and for any $m+1\leq r\leq 2m$, we set $b_{0,r}=-b_{0,r-m}$ and $w_{0,r}=w_{0,r-m}$. Under such an initialization, the initial neural network is a zero function, i.e. $f(x;b_0,w_0)=0$ for all $x\in\mcx$, where $b_0=[b_{0,1},\cdots,b_{0,2m}]^\top$ and $w_0=[w_{0,1}^\top,\cdots,w_{0,2m}^\top]^\top$ are initialization parameters. During training, we fix the value of $b$ at its initial value and only optimize $w$. To simplify the notation, we denote $f(x;b,w)$ as $f(x;w)$ and $\nabla_wf(x,w)$ as $\phi(x,w)$. 

{\bf Notations.} We use $\widetilde{\mathcal{O}}(X)$ to refer to a quality that is upper bounded by $X$, up to poly-log factors of $d, H, N, m$ and $(1/\delta)$. Furthermore, we use $\mathcal{O}(X)$ to refer to a quantity that is upper bounded by $X$ up to constant multiplicative factors. We use $I_d$ as the identity matrix in dimension $d$. Similarly, we denote by $\mathbf{0}_d\in\mR^d$ as the vector whose components are zeros. For any square matrix $M$, we let $\ltwo{M}$ denote the operator norm of $M$. Finally, for any positive definite matrix $M\in\mR^{d\times d}$ and any vector $x\in\mR^d$, we define $\lM{x} = \sqrt{x^\top M x} $.

\section{Algorithm}\label{sc: alg}
In this section, we propose a Pessimistic vAlue iteRaTion with rEward Decomposition (PARTED) algorithm based on the neural network function approximation. PARTED shares a similar structure as that of pessimistic value iteration (PEVI) \cite{jin2021pessimism,xie2021policy,yin2020near}, but has a very different design due to trajectory-wise rewards. In PEVI, a pessimistic estimator of the value function is constructed from the dataset $\mcd$ and the Bellman optimality equation is then iterated based such an estimator. Since instantaneous rewards are available in PEVI, given a function class $\mcG$, PEVI constructs an estimated Bellman backup of value function $(\widehat{\mB}_h\widehat{V}_{h+1})$ by solving the following regression problem for all $h\in[H]$ in the backward direction:
\begin{flalign}
	 (\widehat{\mB}_h\widehat{V}_{h+1}) = \argmin_{g_h\in\mcG} L^h_{\text{PEVI}}(g_h) =  \sum_{\tau\in\mcd}\left(r_h(x^\tau_h) + \widehat{V}_{h+1}(s^\tau_{h+1}) - g_h(x^\tau_h)\right)^2+\lambda \cdot \text{Reg}(g_h).\label{eq: 200}
\end{flalign}
In \cref{eq: 200}, $\widehat{V}_{h+1}(\cdot)$ is the pessimistic estimator of optimal value function constructed for horizon $h+1$, $\lambda>0$ is a regularization parameter and $\text{Reg}(\cdot)$ is the regularization function. The optimal state-action value function can then be estimated as $\widehat{Q}_h(\cdot) = \min\{(\widehat{\mB}_h\widehat{V}_{h+1})(\cdot)-{\rm\Gamma}_h(\cdot),H \}^+$, where $-{\rm\Gamma}_h$ is a negative penalty used to offset the uncertainty in $(\widehat{\mB}_h\widehat{V}_{h+1})(\cdot)$ and guarantee the pessimism of $\widehat{Q}_h$.

However, in PARTED (see \Cref{alg1}) designed for trajectory-wise rewards, since instantaneous reward $r_h(\cdot)$ is not available, we can no longer obtain $(\widehat{\mB}_h\widehat{V}_{h+1})$ in the same way as PEVI by solving the regression problem in \cref{eq: 200}. To overcome such an issue, in PARTED, we construct two estimators $\widehat{r}_h$ and $(\widehat{\mP}_h\widehat{V}_{h+1})$ for instantaneous reward ${r}_h$ and transition value function $({\mP}_h\widehat{V}_{h+1})$, respectively. The estimated Bellman backup can then be formulated as $(\widehat{\mB}_h\widehat{V}_{h+1})(\cdot) = \widehat{r}_h(\cdot) + (\widehat{\mP}_h\widehat{V}_{h+1})(\cdot)$.

{\bf Reward Redistribution. }In order to estimate the instantaneous rewards from the trajectory-wise reward, we use a neural network $f(\cdot,\theta_h)$ given in \cref{eq: 2} to represent per-step mean reward $R_h(\cdot)$ for all $h\in[H]$, where $\theta_h\in\mR^{2md}$ is the parameter. We further assume, for simplicity, that all the neural networks share the same initial weights denoted by $\theta_0\in\mR^{2md}$. We define the following loss function $L_r(\cdot):\mR^{2mdH}\rightarrow\mR$ for reward redistribution as
\begin{flalign}\label{eq: 7}
	\textstyle L_r({\rm\Theta})=\sum_{\tau\in\mcd}\left[ \sum_{h=1}^{H}f(x^\tau_h,\theta_h) - r(\tau) \right]^2 + \lambda_1\cdot\sum_{h=1}^{H}\ltwo{\theta_h-\theta_0}^2,
\end{flalign}
where ${\rm\Theta}=[\theta_1^\top,\cdots,\theta_H^\top]^\top\in\mR^{2mdH}$ and $\lambda_1>0$ is a regularization parameter. Then, the per-step proxy reward $\widehat{R}_h(\cdot)$ is obtained by solving the following optimization problem
\begin{flalign}\label{eq: 3}
	\widehat{R}_h(\cdot) = f(\cdot,\widehat{\theta}_h), \quad \text{where}\quad \widehat{\rm\Theta}=\argmin_{{\rm\Theta}\in\mR^{2mdH}}L_r( {\rm\Theta} )\,\,\,\text{and}\,\,\, \widehat{\rm\Theta}=[\widehat{\theta}_1^\top,\cdots,\widehat{\theta}_H^\top]^\top.
\end{flalign}

{\bf Transition Value Function Estimation. }Similarly, we use $H$ neural networks  given in \cref{eq: 2} with parameter $\{w_h\}_{h\in[H]}$ to estimate $\{(\mP_h\widehat{V}_{h+1})(\cdot)\}_{h\in[H]}$, where $w_h\in\mR^{2md}$ is the parameter of the $h$-th network. Specifically, for each $h\in[H]$, we define the loss function $L^h_v(w_h)$: $\mR^{2md}\rightarrow\mR$ as
\begin{flalign}
	\textstyle L^h_v(w_h) = \sum_{\tau\in\mcd}\left(\widehat{V}_{h+1}(s^\tau_{h+1}) - f(x^\tau_h,w_h)\right)^2+\lambda_2\cdot \ltwo{w_h-w_0}^2,\label{eq: 74}
\end{flalign}
where $\lambda_2>0$ is a regularization parameter and $w_0$ is the initialization shared by all neural networks. The estimated transition value function $(\widehat{\mP}_h\widehat{V}_{h+1})(\cdot): \mcx\rightarrow\mR$ can be obtained by solving the following optimization problem
\begin{flalign}\label{eq: 4}
	\textstyle (\widehat{\mP}_h\widehat{V}_{h+1})(\cdot) = f(\cdot,\widehat{w}_h),\quad\text{where}\quad \widehat{w}_h=\argmin_{w_h\in\mR^{2md}}L^h_v(w_h).
\end{flalign}

\begin{algorithm}[tb]
	\caption{Neural Pessimistic Value Iteration with Reward Decomposition (PARTED)}
	\label{alg1}
	\begin{algorithmic}
		\STATE {\bfseries Input:} Dataset $\mcd=\{\tau_i,r(\tau_i)\}_{i,h=1}^{N,H}$
		\STATE {\bfseries Initialization:} Set $\widehat{V}_{H+1}$ as zero function
		\STATE Obtain $\widehat{R}_h$ and $\widehat{\rm\Theta}$ according to \cref{eq: 3}
		\FOR{$h=H,H-1,\cdot,1$}
		\STATE Obtain $\widehat{\mP}_h\widehat{V}_{h+1}$ and $\widehat{w}_h$ according to \cref{eq: 4}
		\STATE Obtain ${\rm\Gamma}_h(\cdot,\widehat{\rm\Theta}, \widehat{w}_h )$ according to \cref{eq: 5}
		\STATE Obtain $\widehat{Q}_h(\cdot)$ and $\widehat{V}_h(\cdot)$ according to \cref{eq: 198} and let $\hpi_h(\cdot|s)=\argmax_{\pi_h}\langle \hQ_h(s,\cdot), \pi_h(\cdot|s) \rangle$
		\ENDFOR
	\end{algorithmic}
\end{algorithm}
{\bf Penality Term Construction. }It remains to construct the penalty term $\rm{\Gamma}_h$ to offset the uncertainties in $\widehat{R}_h$ and $(\widehat{\mP}_h V_{h+1})$. First consider the penalty of $\widehat{R}_h(\cdot)$ for each $h\in[H]$. For any $\tau \in\mcd$ and ${\rm\Theta}\in\mR^{2mdH}$, we define a trajectory feature ${\rm\Phi}(\tau,{\rm\Theta})=[\phi(x^\tau_1,\theta_1)^\top,\cdots,\phi(x^\tau_H,\theta_H)^\top]^\top$.
Based on ${\rm\Phi}(\tau,{\rm\Theta})$, the trajectory feature covariance matrix ${\rm\Sigma}({\rm\Theta})\in\mR^{2mdH\times2mdH}$ is then defined as
\begin{flalign*}
	\textstyle{\rm\Sigma}({\rm\Theta}) = \lambda_1\cdot I_{2mdH} + \sum_{\tau\in\mcd}{\rm\Phi}(\tau,{\rm\Theta}){\rm\Phi}(\tau,{\rm\Theta})^\top.
\end{flalign*}
We also define an "one-block-hot" vector  ${\rm\Phi}_h(x,{\rm\Theta})=[\mathbf{0}^\top_{2md},\cdots,\phi(x,\theta_h)^\top,\cdots,\mathbf{0}^\top_{2md}]^\top$ for all $x\in\mcx$, where ${\rm\Phi}_h(x,{\rm\Theta})\in\mR^{2mdH}$ is a vector in which $[{\rm\Phi}_h(x,{\rm\Theta})]_{2md(h-1)+1:2mdh}=\phi(x,\theta_h)$ and the rest entries are zero. 
The penalty term of reward for a given ${\rm\Theta}\in\mR^{2mdH}$ is defined as:
\begin{flalign}\label{eq: 10}
	b_{r,h}(x,{\rm\Theta})=\left[{\rm\Phi}_h(x,{\rm\Theta})^\top {\rm\Sigma}^{-1}({\rm\Theta}) {\rm\Phi}_h(x,{\rm\Theta})\right]^{1/2},\quad\forall x\in\mcx.
\end{flalign}
Note that the reward penalty term $b_{r,h}(x,{\rm\Theta})$ is new and first proposed in this work. By constructing $b_{r,h}(x,{\rm\Theta})$ in this way, we can capture the effect of uncertainty caused by solving the trajectory-wise regression problem in \cref{eq: 7},  which is contained in the covariance matrix ${\rm\Sigma}({\rm\Theta})$, on the proxy reward $f(\cdot,\widehat{\theta}_h)$ at each step $h\in[H]$, via the "one-block-hot" vector ${\rm\Phi}_h(\cdot,{\rm\Theta})$.

Next, we consider the penalty of $(\widehat{\mP}_h \widehat{V}_{h+1})(\cdot)$ for each $h\in[H]$. We define the per-step feature covariance matrix ${\rm\Lambda}_h(w_h)\in\mR^{2md\times2md}$ as
\begin{flalign*}
	{\rm\Lambda}_h(w) = \lambda_2\cdot I_{2md} + \sum_{\tau\in\mcd}\phi(x^\tau_h,w)\phi(x^\tau_h,w)^\top.
\end{flalign*}
Then, the penality term of $(\widehat{\mP}_h \widehat{V}_{h+1})(\cdot)$ for a given $w\in\mR^{2md}$ is defined as
\begin{flalign}\label{eq: 11}
	b_{v,h}(x,w)=\left[\phi(x,w)^\top {\rm\Lambda}_h(w)^{-1} \phi(x,w)^\top \right]^{1/2},\quad\forall x\in\mcx.
\end{flalign}
Finally, combining \cref{eq: 10,eq: 11}, the penalty term for $\widehat{\mB}_h\widehat{V}_{h+1}(\cdot)$ is constructed as
\begin{flalign}\label{eq: 5}
	\textstyle{\rm\Gamma}_h(x,{\rm\Theta},w) = \beta_1 b_{r,h}(x,{\rm\Theta}) + \beta_2 b_{v,h}(x,w),
\end{flalign}
where $\beta_1,\beta_2>0$ are parameters. The estimator of $Q_h(\cdot)$ and $V_h(\cdot)$ can then be obtained as
\begin{flalign}
	\textstyle\widehat{Q}_h(\cdot) = \min\{ \widehat{R}_h(\cdot)+ (\widehat{\mP}_h\widehat{V}_{h+1})(\cdot) - {\rm\Gamma}_h(\cdot,\widehat{\rm\Theta}, \widehat{w}_h ), H \}^{+},\quad \widehat{V}_h(\cdot) =  \argmax_{a\in\mca}\hQ_h(\cdot,\cdot).\label{eq: 198}
\end{flalign}
Furthermore, for any $h\in[H]$, we denote $\mcv_h(x,R_{\beta_1}, R_{\beta_2}, \lambda_1, \lambda_2 )$ as the class of functions that takes the form $\overline{V}_h(\cdot)= \max_{a\in\mca}\overline{Q}_h(\cdot,a)$, where
\begin{flalign*}
\textstyle\overline{Q}_h(x )=& \min\{ \langle \phi(x,\theta_0), \theta - \theta_0 \rangle+  \langle \phi(x,w_0), w- w_0 \rangle \nonumber\\
&\textstyle- \beta_1\cdot \sqrt{{\rm\Phi}_h(x,\theta_0)^\top {\rm\Sigma}^{-1}{\rm\Phi}_h(x,\theta_0) } - \beta_2\cdot \sqrt{\phi(x,w_0)^\top{\rm\Lambda}^{-1}\phi(x,w_0)  } , H\}^{+},
\end{flalign*}
in which $\ltwo{\theta-\theta_0}\leq H\sqrt{N/\lambda_1}$, $\ltwo{w-w_0}\leq H\sqrt{N/\lambda_2}$, $\beta_1\in[0, R_{\beta_1}]$, $\beta_2\in [0, R_{\beta_2}]$, $\ltwo{\rm\Sigma}\geq \lambda_1$ and $\ltwo{\rm\Lambda}\geq \lambda_2 $. To this end, for any $\epsilon>0$, we define $\mcN_{\epsilon,h}^v$ as the $\epsilon-$covering number of $\mcv_h(x,R_{\beta_1}, R_{\beta_2}, \lambda_1, \lambda_2 )$ with respect to the $\ell_\infty-$norm on $\mcx$, and we let $\mcN_{\epsilon}^v=\max_{h\in[H]}\{\mcN_{\epsilon,h}^v\}$.

\section{Main Results}\label{sc: main}

\subsection{Suboptimality of PARTED for General MDPs}\label{sc: subop}

In the overparameterized scheme, the neural network width $2m$ is considered to be much larger than the number of trajectories $N$ and horizon length $H$. Under such a scheme, the training process of neural networks can be captured by the framework of neural tangent kernel (NTK) \cite{jacot2018neural}. Specifically, conditioning on the realization of $w_0$, we define a kernel $K(x,x^\prime):\mcx\times\mcx\rightarrow\mR$ as
\begin{flalign*}
\textstyle K(x,x^\prime)=\langle \phi(x,w_0),\phi(x^\prime,w_0)\rangle= \frac{1}{2m}\sum_{r=1}^{2m} \sigma^\prime(w^{\top}_{0,r}x)\sigma^\prime(w^{\top}_{0,r}x^\prime)x^\top x^\prime, \quad \forall (x,x^\prime)\in \mcx\times\mcx,
\end{flalign*}
where $\sigma^\prime(\cdot)$ is the derivative of the action function $\sigma(\cdot)$. It can be shown that $f(\cdot,w)$ is close to its linearization at $w_0$ when $m$ is sufficiently large and $w$ is not too far away from $w_0$, i.e.,
\begin{flalign*}
\textstyle f(x,w)\approx f_0(x,w) = f(x,w_0) + \langle \phi(x,w_0), w-w_0\rangle = \langle \phi(x,w_0), w-w_0\rangle,\quad \forall x\in\mcx.
\end{flalign*}
Note that $f_0(x,w)$ belongs to a reproducing kernel Hilbert space (RKHS) with kernel $K(\cdot,\cdot)$. 
Similarly, consider the sum of $H$ neural networks $f(\tau,{\rm\Theta}) = \sum_{h=1}^{H}f(x^\tau_h, \theta_h)$ with the same initialization $\theta_0$ for each neural network, where $\tau=[x_1^\top,\cdots,x_H^\top]^\top$ and ${\rm\Theta}=[\theta_1^\top,\cdots,\theta_H^\top]^\top$. If $\theta_h$ is not too far away from $\theta_0$ for all $h\in[H]$ and $m$ is sufficiently large, it can be shown that the dynamics of $f(\tau,{\rm\Theta})$ belong to a RKHS with kernel $K_H$ defined as $K_H(\tau,\tau^\prime) = \sum_{h=1}^{H}K(x_h,x^\prime_h)$. We further define $\mcH_{K}$ and $\mcH_{K_H}$ as the RKHS induced by $K(\cdot,\cdot)$ and $K_H(\cdot,\cdot)$, respectively.

Based on the kernel $K(\cdot,\cdot)$ and $K_H(\cdot,\cdot)$, we define the Gram matrix $K_r, K_{v,h}\in\mR^{N\times N}$ as
\begin{flalign}
\textstyle K_r=[K_H(\tau_i,\tau_j)]_{i,j\in[N]},\quad\text{and}\quad K_{v,h}=[K(x^{\tau_i}_h,x^{\tau_j}_h)]_{i,j\in[N]}.\nonumber
\end{flalign}

We further define a function class as follows
\begin{flalign}
	\textstyle\mf_{B_1,B_2}=\left\{ f_\ell(x) = \int_{\mR^d}\sigma^\prime(w^\top x)\cdot x^\top\ell(w)dp(w): \sup_w\ltwo{\ell(w)}\leq B_1, \sup_w\frac{\ltwo{\ell(w)}}{p(w)}\leq B_2 \right\},\nonumber
\end{flalign}
where $\ell: \mR^d \rightarrow \mR^d$ is a mapping, $B_1,B_2$ are positive constants, and $p$ is the density of $N(0,I_d/d)$. We then make the following assumption regarding the expressive power of the above function class.
\begin{assumption}\label{ass1}
	We assume that there exist $a_1,a_2,A_1,A_2>0$ such that $R_h(\cdot)\in \mf_{a_1,a_2}$ and $(\mP_hf)(\cdot)\in \mf_{A_1,A_2}$ for any $f(\cdot):\mcx\rightarrow[0,H]$.
\end{assumption}
\Cref{ass1} ensures that both $R_h(\cdot)$ and $(\mP_h \widehat{V}_{h+1})(\cdot)$ can be captured by an infinite width neural network.
Note that \Cref{ass1} is mild since $\mf_{B_1,B_2}$ is an expressive function class as shown in Lemma C.1 of \cite{gao2019convergence}. Similar assumptions have also been adopted in many previous works that consider neural network function approximation \cite{yang2020function,cai2019neural,wang2019neural,xu2021crpo,qiu2021reward}.
Additionally, we assme that the data collection process explores the state-action space and trajectory space well. Note that similar assumptions have also been adopted in \cite{duan2020minimax,yin2022near,jin2021pessimism}.

\begin{assumption}[Well-Explored Dataset]\label{ass2}
	Suppose the $N$ trajectories in dataset $\mcd$ are independently and identically induced by a fixed behaviour policy $\mu$. There exist absolute constants $C_{\sigma}>0$ and $C_{\varsigma}>0$ such that
	\begin{flalign*}
		\textstyle\lambda_{\min}(\overline{M}({\rm\Theta}_0))\geq C_{\sigma}\quad\text{and}\quad\lambda_{\min}(\overline{m}_h({w}_0))\geq C_{\varsigma}\quad \forall h\in[H],
	\end{flalign*}
	where 
	\begin{flalign*}
		\textstyle\overline{M}({\rm\Theta}_0) = \mE_{\mu}\left[ {\rm\Phi}(\tau,{\rm\Theta}_0){\rm\Phi}(\tau,{\rm\Theta}_0)^\top \right]\quad\text{and}\quad \overline{m}_h({w}_0) = \mE_{\mu}\left[\phi(x^\tau_h,w_0)\phi(x^\tau_h,w_0)^\top \right].
	\end{flalign*}
\end{assumption}
We can now present the suboptimality of the policy $\widehat{\pi}$ obtained via \Cref{alg1}.
\begin{theorem}\label{thm1}
	Consider \Cref{alg1}. Suppose \Cref{ass}-\ref{ass2} hold. Let $\lambda_1 = \lambda_2 = 1 + 1/N$, $\beta_1= R_{\beta_1}$ and $\beta_2 = R_{\beta_2}$, in which $R_{\beta_1}$ and $R_{\beta_2}$ satisfy
	\begin{flalign*}
	&\textstyle R_{\beta_1} \ge H\left(4 a^2_2\lambda_1/d +   2\log\det\left(I+K_r/\lambda_1 \right) + 10\log(NH^2) \right)^{1/2}, \nonumber\\
	&\textstyle R_{\beta_2}\geq H\left(8 A^2_2\lambda_2/d +   4\max_{h\in[H]}\{\log\det\left(I+ K_{v,h}/\lambda_2\right)\} + 6C_\epsilon + 16\log(NH^2 \mcN_\epsilon^v)\right)^{1/2},
	\end{flalign*}
	where $\epsilon = \sqrt{\lambda_2 C_\epsilon}H/(2NC_\phi)$, $C_\epsilon\geq 1$ is an adjustable parameter, and $C_\phi>0$ is an absolute constant.
	In addition, let $m$ be sufficiently large. Then, with probability at least $1-(N^2H^4)^{-1}$, we have
	\begin{flalign}
	\textstyle \text{\rm SubOpt}(\hpi,s) \leq \widetilde{\mathcal{O}}\left(\frac{H \max \{\beta_1, \beta_2\}}{\sqrt{N}}\right) + \varepsilon_1,\nonumber
	\end{flalign}
	where
	\begin{flalign*}
		\textstyle\varepsilon_1 = \max\{\beta_1 H^{5/3}, \beta_2 H^{7/6} \} \widetilde{\mathcal{O}}\left(   \frac{N^{1/12}}{m^{1/12}} \right) +  \widetilde{\mathcal{O}}\left( \frac{H^{17/6} N^{5/3}}{m^{1/6}} \right).
	\end{flalign*}
\end{theorem}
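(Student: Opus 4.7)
The plan is to follow a pessimism-based suboptimality decomposition, as is standard in PEVI-type analyses, but with two nontrivial modifications to handle (i) the trajectory-wise regression used for reward redistribution, and (ii) the neural-network linearization error.

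First I would establish the suboptimality decomposition. Define the model evaluation error $\iota_h(x) = (\mathbb{B}_h \widehat V_{h+1})(x) - \widehat R_h(x) - (\widehat{\mathbb{P}}_h \widehat V_{h+1})(x)$. A standard argument (backward induction using the $\min\{\cdot,H\}^+$ truncation to get pessimism, followed by a telescoping expansion of $V^*_1 - V^{\widehat\pi}_1$) shows that on the event $\mathcal{E}_0 = \{\,|\iota_h(x)| \le {\rm\Gamma}_h(x,\widehat{\rm\Theta},\widehat w_h)\ \forall (h,x)\,\}$, one has both $\widehat V_h(s) \le V^*_h(s)$ pointwise and $\text{SubOpt}(\widehat\pi,s) \le 2\sum_{h=1}^H \mathbb{E}_{\pi^*}[{\rm\Gamma}_h(x_h)\mid s_1=s]$. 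This step is essentially independent of the specific forms of $\widehat R_h$ and $\widehat{\mathbb{P}}_h$ and mirrors the argument of \cite{jin2021pessimism}.

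Second I would prove that $\mathcal{E}_0$ holds with high probability by separately bounding the two components of $\iota_h$. Under the NTK linearization, $f(x,\theta) \approx \langle \phi(x,\theta_0), \theta-\theta_0\rangle$ for large $m$ and $\|\theta-\theta_0\|$ bounded, and $\widehat{\rm\Theta}-{\rm\Theta}_0$ solves a kernel ridge regression whose design matrix is ${\rm\Sigma}({\rm\Theta}_0)$ and whose responses are $\{r(\tau_i)\}$. Writing the closed-form residual and splitting it into a ridge-bias term (controlled by the RKHS norm of $R_h$, which exists by Assumption~\ref{ass1}) and a sub-Gaussian noise term $r(\tau)-R(\tau)$, Cauchy--Schwarz in the ${\rm\Sigma}^{-1}$ norm combined with a self-normalized (Abbasi--Yadkori--type) martingale concentration yields
\begin{equation*}
|\widehat R_h(x)-R_h(x)| \le b_{r,h}(x,{\rm\Theta}_0)\cdot\widetilde{\mathcal O}\bigl(\sqrt{\log\det(I+K_r/\lambda_1)}\bigr) + \text{lin.\ err.},
\end{equation*}
where the one-block-hot vector ${\rm\Phi}_h(x,{\rm\Theta}_0)$ is exactly what extracts the marginal uncertainty of the joint trajectory fit at step $h$; this calibrates $\beta_1 \ge R_{\beta_1}$. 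A parallel but step-wise argument handles the transition-value term, but since $\widehat V_{h+1}$ is data-dependent, a uniform concentration over an $\epsilon$-net of $\mcv_{h+1}$ introduces the extra $\log \mcN^v_\epsilon$ factor and calibrates $\beta_2$.

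Third, $\mathbb{E}_{\pi^*}[{\rm\Gamma}_h(x_h)]$ is bounded using Assumption~\ref{ass2}. Matrix Bernstein applied to ${\rm\Sigma}({\rm\Theta}_0)-\lambda_1 I$ and to each ${\rm\Lambda}_h(w_0)-\lambda_2 I$, together with the $\lambda_{\min}$ lower bounds $C_\sigma, C_\varsigma$, implies that with probability $\ge 1-(N^2H^4)^{-1}$ one has ${\rm\Sigma}({\rm\Theta}_0)\succeq \Theta(N)\cdot I$ on the relevant block and ${\rm\Lambda}_h(w_0)\succeq \Theta(N)\cdot I$, so $b_{r,h}(x,{\rm\Theta}_0),\,b_{v,h}(x,w_0)=\mathcal O(1/\sqrt N)$ uniformly in $x$. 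Summing over $h$ yields the main $\widetilde{\mathcal O}(H\max\{\beta_1,\beta_2\}/\sqrt N)$ term. All NN-linearization residuals, both inside the estimators and inside the penalty (when replacing $\widehat{\rm\Theta}, \widehat w_h$ by ${\rm\Theta}_0, w_0$), are handled uniformly using the feature perturbation bound $\|\phi(x,w)-\phi(x,w_0)\|=\widetilde{\mathcal O}(\|w-w_0\|^{1/2}/m^{1/4})$ and the crude norm bounds $\|\widehat\theta_h-\theta_0\|,\|\widehat w_h-w_0\|=O(H\sqrt{N/\lambda})$, producing the $\varepsilon_1$ term.

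The main obstacle I anticipate is step two, specifically the coupling between the trajectory-level ridge regression and the per-step confidence $b_{r,h}$. The identity $b_{r,h}(x,{\rm\Theta}_0)^2 = {\rm\Phi}_h(x,{\rm\Theta}_0)^\top{\rm\Sigma}({\rm\Theta}_0)^{-1}{\rm\Phi}_h(x,{\rm\Theta}_0)$ is the new ingredient of PARTED and must be shown to correctly marginalize the joint $\mathcal H_{K_H}$ uncertainty into a per-step bound sharp enough to telescope cleanly against $\pi^*$. Combining this marginalization with the uniform concentration over the value-function cover, while simultaneously controlling NN linearization error uniformly in $x\in\mcx$ and over the cover, is the technically delicate part of the analysis.
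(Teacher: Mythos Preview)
Your proposal is essentially the paper's proof: the same suboptimality decomposition into $\pm\mathbb E[\delta_h]$, the same two-part control of $\iota_h$ via self-normalized concentration in the trajectory-level RKHS for the reward and in the step-wise RKHS (with an $\epsilon$-net over $\mcv_h$) for the transition part, and the same matrix-Bernstein argument under Assumption~\ref{ass2} to turn each bonus into $\mathcal O(1/\sqrt N)$.

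Two quantitative details need correction. First, the NTK feature-perturbation rate you invoke is off: the paper (its Lemma~\ref{lemma4}, from \cite{yang2020function,cai2019neural}) uses $\ltwo{\phi(x,w)-\phi(x,w_0)}=\widetilde{\mathcal O}\bigl((\ltwo{w-w_0}/\sqrt m)^{1/3}\bigr)$ and $\lone{f(x,w)-\langle\phi(x,w_0),w-w_0\rangle}=\widetilde{\mathcal O}\bigl((\ltwo{w-w_0}^{4}/\sqrt m)^{1/3}\bigr)$, not the $1/2,\,1/4$ exponents you state; the $1/3$ is precisely what produces the $m^{-1/12}$ and $m^{-1/6}$ powers in $\varepsilon_1$, so with your rate you would not recover the theorem's stated error. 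Second, because of this linearization error the event $\mathcal E_0$ only yields \emph{approximate} pessimism: one gets $\delta_h\ge -\varepsilon_b$ rather than $\delta_h\ge 0$, so the suboptimality picks up an additive $H\varepsilon_b$ before the penalty sum is bounded (this is how the $\widetilde{\mathcal O}(H^{17/6}N^{5/3}/m^{1/6})$ piece of $\varepsilon_1$ enters). Your write-up treats the NN residual as living only in the last step, whereas in the paper it also leaks into the lower bound on $\delta_h$.
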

\Cref{thm1} shows that \Cref{alg1} can find an $\epsilon$-optimal policy with $\widetilde{\mathcal{O}}(H^2\max\{\beta_1,\beta_2\}^2/\epsilon^2)$ episodes of offline data in the trajectory-wise reward setting up to a function approximation error $\varepsilon_1$, which vanishes as the neural network width $2m$ increases. Note that the dependence of $\varepsilon_1$ on the network width, which is $\mathcal{O}(m^{-1/12})$, matches that of the approximation error in the previous work of value iteration algorithm with neural network function approximation \cite{yang2020function}.

{\bf Discussion of Proof of \Cref{thm1}. }Comparing to the analysis of PEVI for linear MDP with instantaneous reward, which has been extensively studied in offline RL \cite{jin2021pessimism,yin2022near,yin2021near}, our analysis needs to address the following two new challenges:
(1) In instantaneous reward setting, both $ R_h(\cdot)$ and $(\mP_h \widehat{V}_{h+1})(\cdot)$ can be learned together by solving a single regression problem in per-step scale. However, in our trajectory-wise reward setting, $R_h(\cdot)$ and $(\mP_h \widehat{V}_{h+1} )(\cdot)$ need to be learned separately by solving two regression problems (\cref{eq: 3,eq: 4}) in different scales, i.e., \cref{eq: 3} is in trajectory scale and \cref{eq: 4} is in per-step scale. In order to apply union concentrations to bound the Bellman estimation error $|(\mB_h\widehat{V}_h)(\cdot) - (\widehat{\mB}_h\widehat{V}_h)(\cdot)|$, we need to develop new techniques to handle the mismatch between \cref{eq: 3,eq: 4} in terms of scale.
(2) In linear MDP, both $R_h(\cdot)$ and $(\mP_h \widehat{V}_{h+1})(\cdot)$ can be captured exactly by linear functions. However, in the more general MDP that we consider, we need to develop new analysis to bound the estimation error that caused by the insufficient expressive power of neural networks in order to characterize the optimality of $\widehat{\theta}_h$ and $\widehat{w}_h$ in \cref{eq: 3,eq: 4}, respectively.

To obtain a more concrete suboptimality bound for \Cref{alg1}, we impose an assumption on the spectral structure of kernels $K_H$ and $K$.
\begin{assumption}[Finite Spectrum NTK \cite{yang2020function}]\label{ass4}
	Conditioned on the randomness of $(b_0, w_0)$, let $T_{K_H}$ and $T_{K}$ be the integral operator induced by $K_H$ and $K$ (see \Cref{sc: rkhs} for definition of $T_{K_H}$ and $T_K$), respectively, and let $\{\omega_j\}_{j\geq 1}$ and $\{\upsilon_j\}_{j\geq 1}$ be eigenvalues of $T_{K_H}$ and $T_{K}$, respectively. We have $\omega_j=0$ for all $j\geq D_1+1$ and $\upsilon_j=0$ for all $\upsilon_j\geq D_2 + 1$, where $D_1, D_2$ are positive integers.
\end{assumption}
\Cref{ass4} implies that $\mcH_{K_H}$ and $\mcH_K$ are $D_1$-dimensional and $D_2$-dimensional, respectively. For concrete examples of neural networks that satisfy \Cref{ass4}, please refer to Section B.3 in \cite{yang2020function}. Note that such an assumption is in parallel to the "effective dimension" assumption in \cite{zhou2020neural,valko2013finite}.
\begin{corollary}\label{corollary1}
	Consider \Cref{alg1}. Suppose \Cref{ass}-\ref{ass4} hold. Let $\lambda_1=\lambda_2=1+1/N$, ${\beta_1} = \tilde{\mathcal{O}}(H D_1)$ and ${\beta_2} = \tilde{\mathcal{O}} (H\max\{D_1, D_2\})$. Then, with probability at least $1-(N^{2}H^{4})^{-1}$, we have
	\begin{equation}
	\textstyle\text{\rm SubOpt}(\hpi,s) = \widetilde{\mathcal{O}}\left(D_{\text{eff}}H^2/\sqrt{N}\right) + \varepsilon_2,\nonumber
	\end{equation}
	where $D_{\text{eff}}=\max\{D_1, D_2\}$ denotes the effective dimension and
	\begin{flalign*}
		\textstyle\varepsilon_2 = \max\left\{\sqrt{H}, \max\{D_1, D_2\}, \frac{H^{5/3}N^{19/12}}{m^{1/12}} \right\} \widetilde{\mathcal{O}} \left(   \frac{H^{13/6}N^{1/12} }{m^{1/12}} \right).
	\end{flalign*}
\end{corollary}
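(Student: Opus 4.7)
The plan is to derive Corollary 1 as a direct specialization of Theorem 1 under the finite-spectrum NTK structure of Assumption 4. The essence is to replace the abstract log-determinant and log-covering-number quantities appearing in the sufficient conditions on $R_{\beta_1}$ and $R_{\beta_2}$ with explicit poly-log multiples of the effective dimensions $D_1,D_2$, and then substitute back into the suboptimality bound of Theorem 1.

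First, I would bound the two log-determinants. By Assumption 4, $T_{K_H}$ has only $D_1$ non-zero eigenvalues, so the $N\times N$ Gram matrix $K_r$ has rank at most $D_1$. The standard determinant-trace inequality for rank-bounded positive semidefinite matrices gives $\log\det(I+K_r/\lambda_1)\leq D_1 \log\!\bigl(1+\mathrm{tr}(K_r)/(D_1\lambda_1)\bigr)$, and the crude trace bound $\mathrm{tr}(K_r)\leq N H C_\sigma^2$ (from Assumption 1 together with the additive structure $K_H=\sum_h K$) then yields $\log\det(I+K_r/\lambda_1)=\widetilde{\mathcal{O}}(D_1)$. Applying the same argument to $K$ and each $K_{v,h}$ gives $\max_{h\in[H]}\log\det(I+K_{v,h}/\lambda_2)=\widetilde{\mathcal{O}}(D_2)$.

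Second, I would bound the $\ell_\infty$-covering number $\mcN^v_\epsilon$ of $\mcv_h$. In the NTK regime, each $\overline{Q}_h\in\mcv_h$ is the sum of two linear functionals of $\theta-\theta_0,w-w_0$ and two square-root Mahalanobis penalty terms in $\Sigma^{-1},\Lambda^{-1}$; on the span of the $D_1$ (resp.\ $D_2$) non-zero eigenfunctions of $T_{K_H}$ (resp.\ $T_K$), the evaluation at any $x$ depends only on a $\max\{D_1,D_2\}$-dimensional projection of the parameters, whose magnitudes are controlled by the $H\sqrt{N/\lambda_i}$ and $R_{\beta_i}$ bounds in the definition of $\mcv_h$. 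A standard volume/Lipschitz argument analogous to the linear-MDP covering bounds in \cite{jin2021pessimism,yang2020function} then yields $\log\mcN^v_\epsilon=\widetilde{\mathcal{O}}(\max\{D_1,D_2\}\log(1/\epsilon))$.

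Substituting these estimates into the conditions on $R_{\beta_1},R_{\beta_2}$ in Theorem 1 produces $R_{\beta_1}=\widetilde{\mathcal{O}}(H\sqrt{D_1})\leq\widetilde{\mathcal{O}}(H D_1)$ and $R_{\beta_2}=\widetilde{\mathcal{O}}(H\sqrt{\max\{D_1,D_2\}})\leq\widetilde{\mathcal{O}}(H\max\{D_1,D_2\})$, matching the $\beta_1,\beta_2$ specified in the corollary. Plugging $\max\{\beta_1,\beta_2\}=\widetilde{\mathcal{O}}(HD_{\text{eff}})$ into the leading term $\widetilde{\mathcal{O}}(H\max\{\beta_1,\beta_2\}/\sqrt{N})$ of Theorem 1 produces the claimed $\widetilde{\mathcal{O}}(D_{\text{eff}} H^2/\sqrt{N})$, while the residual $\varepsilon_1$ translates into $\varepsilon_2$ through the same substitution of $\beta_1,\beta_2$. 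The hardest step is the covering-number estimate above: one must reduce the ostensibly $2mdH$-dimensional parameter space of $\mcv_h$ to effective dimension $\max\{D_1,D_2\}$ despite the non-linearity of the square-root Mahalanobis penalties, which requires joint Lipschitz-in-parameters control of $\overline{Q}_h$ combined with a careful projection onto the finite eigen-spectra of $T_{K_H}$ and $T_K$ so that the $\epsilon$-cover cost scales with the effective dimension rather than the ambient width $m$.
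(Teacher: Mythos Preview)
Your overall architecture---specialize Theorem~1 by replacing the log-determinants and the covering number with explicit functions of $D_1,D_2$, then substitute back---is exactly right, and your log-determinant bounds via rank/trace are equivalent to the paper's maximal-information-gain route. The gap is in the covering-number step.

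You claim $\log\mcN^v_\epsilon=\widetilde{\mathcal{O}}\bigl(\max\{D_1,D_2\}\log(1/\epsilon)\bigr)$, linear in the effective dimension. This is too optimistic. The function class $\mcv_h$ is parametrized not only by the RKHS elements $\theta-\theta_0,w-w_0$ (which after projection do live in $D_1$- and $D_2$-dimensional spaces) but also by the \emph{operators} $\Sigma^{-1}$ and $\Lambda^{-1}$ appearing inside the square-root Mahalanobis penalties. After restricting to the finite eigenspectra, these are self-adjoint operators on $D_1$- and $D_2$-dimensional spaces, and covering such operators costs $\widetilde{\mathcal{O}}(D_1^2)$ and $\widetilde{\mathcal{O}}(D_2^2)$ respectively---a symmetric $D\times D$ matrix has $\Theta(D^2)$ free entries. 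This is precisely the ``non-linearity of the square-root Mahalanobis penalties'' you flag as the hardest step, but the resolution is not to reduce to dimension $D$; it is to accept the $D^2$ cost. The paper obtains $\log\mcN^v_\epsilon=\widetilde{\mathcal{O}}\bigl(\max\{D_1,D_2\}^2\log(1/\epsilon)\bigr)$, and only after taking the square root in the $R_{\beta_2}$ formula does this become linear: $R_{\beta_2}=\widetilde{\mathcal{O}}(H\max\{D_1,D_2\})$. Your intermediate assertion $R_{\beta_2}=\widetilde{\mathcal{O}}(H\sqrt{\max\{D_1,D_2\}})$ is therefore not justified; you reach the correct final order only because you then gratuitously weaken it.

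A secondary point you do not address: the bound on $\log\mcN^v_\epsilon$ depends on $R_{\beta_2}$ itself (through the range of $\beta_2$ and through the operator-norm constraint tied to $R_{\beta_2}$), so verifying the condition on $R_{\beta_2}$ in Theorem~1 is a fixed-point argument---one posits $R_{\beta_2}=C\,H\max\{D_1,D_2\}\log(\cdot)$ and checks that the right-hand side of the Theorem~1 condition is then dominated by this choice. The paper also sets $C_\epsilon=\max\{D_1,D_2\}^2$ (hence $\epsilon\asymp H\max\{D_1,D_2\}/N$) to balance the $C_\epsilon$ and $\log\mcN^v_\epsilon$ contributions. Your plan should make both of these explicit.
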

\Cref{corollary1} states that when $\beta_1$ and $\beta_2$ are chosen properly according to the dimension of $\mcH_{K_H}$ and $\mcH_{K}$, the suboptimality of the policy $\widehat{\pi}$ incurred by \Cref{alg1} converges to an $\epsilon$-optimal policy with $\widetilde{\mathcal{O}}(D^2_{\text{eff}}H^4/\epsilon^2)$ episodes of offline data up to a function approximation error $\varepsilon_2$. 

\subsection{Suboptimality of PARTED under Linear MDPs}
In this section, we briefly illustrate our result by instantiating PARTED to simpler linear MDPs with trajectory-wise rewards. We further provide a detailed treatment of the linear MDP setting with trajectory-wise rewards in \Cref{sc: linearMDP}.

With an abuse of notation, we define the linear MDP as follows.
\begin{definition}[Linear MDP \cite{jin2020provably,yang2019sample}]\label{def1}
	We say an episodic MDP $(\mcs, \mca, \mP, r, H)$ is a linear MDP with a known feature map $\phi(\cdot): \mcx \rightarrow \mR^d$ if there exist an unknown vector $w^*_h(s) \in \mR^d$ over $\mcs$ and an unknown vector $\theta^*_h\in \mR^d$ such that
	\begin{flalign}
	\textstyle \mP_h(s^\prime|s,a) = \langle \phi(s,a), w^*_h(s^\prime)\rangle,\quad R_h(s,a) = \langle \phi(s,a), \theta^*_h \rangle, \label{eq: 171}
	\end{flalign}
	for all $(s,a,s^\prime)\in \mcs\times\mca\times\mcs$ at each step $h\in[H]$. Here we assume $\ltwo{\phi(x)}\leq 1$ for all $x\in\mcx$ and $\max\{ \ltwo{w^*_h(\mcs)}, \ltwo{\theta^*_h} \}\leq \sqrt{d}$ at each step $h\in[H]$, where $\ltwo{w^*_h(\mcs)}= \int_{\mcs}\ltwo{w^*_h(s)}ds$.
\end{definition}
In linear MDPs, it has been shown that both reward $R_h(\cdot)$ and transition value function $(\mP_h\widehat{V}_{h+1})(\cdot)$ are linear functions with respect to $\phi(\cdot)$ \cite{agarwal2019reinforcement,jin2020provably}. Thus, with only trajectory-wise rewards, we can construct the proxy reward $\widehat{R}_h(\cdot)$ and estimated transition value function $(\widehat{\mP}_h\widehat{V}_{h+1})(\cdot)$ by solving two linear regression problems that take similar forms as \cref{eq: 7} and \cref{eq: 74} with feature ${\rm\Phi}(\tau) = [\phi(x^\tau_1)^\top,\cdots,\phi(x^\tau_h)]^\top$ and $\phi(x)$, respectively. For a detailed description of the algorithm, please see \Cref{alg2} in \Cref{sc: prelinearMDP}. The following theorem characterizes the suboptimality of \Cref{alg2}.
\begin{theorem}[Informal]\label{thm2.1}
	Consider PARTED for linear MDP in \Cref{alg2}. Under appropriate hyperparameter setting and dataset coverage assumption, we have $\text{\rm SubOpt}(\hpi,s) \leq \widetilde{\mathcal{O}}( dH^3/\sqrt{N})$ holds with high probability.
\end{theorem}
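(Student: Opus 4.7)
The plan is to follow the standard pessimistic value iteration template, specialized to the linear MDP structure of \Cref{def1} and adapted to trajectory-wise rewards via least-squares decomposition. The starting point is the usual suboptimality decomposition: once the penalty ${\rm\Gamma}_h$ is a valid Bellman uncertainty bound in the sense that $|(\mB_h \hV_{h+1})(x) - (\widehat{\mB}_h \hV_{h+1})(x)| \le {\rm\Gamma}_h(x)$ holds uniformly over $x \in \mcx$ and $h \in [H]$ with high probability, the pessimism argument gives $\text{\rm SubOpt}(\hpi, s) \le 2\sum_{h=1}^H \mE_{\pi^*}[{\rm\Gamma}_h(s_h, a_h) \mid s_1 = s]$. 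The proof thus reduces to (i) verifying pessimism with $\beta_1, \beta_2$ chosen appropriately for the linear feature setting, and (ii) controlling the expected penalty sum along $\pi^*$ using the dataset coverage condition.

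First, I would establish (i) by splitting the Bellman error into reward and transition-value components. For the reward, since $R(\tau) = \langle {\rm\Phi}(\tau), {\rm\Theta}^*\rangle$ with ${\rm\Theta}^* = [\theta_1^{*\top}, \ldots, \theta_H^{*\top}]^\top$ and ${\rm\Phi}(\tau) = [\phi(x^\tau_1)^\top, \ldots, \phi(x^\tau_H)^\top]^\top$, the ridge estimator $\widehat{\rm\Theta}$ admits a closed form, and Cauchy--Schwarz gives $|\widehat{R}_h(x) - R_h(x)| = |\langle {\rm\Phi}_h(x), \widehat{\rm\Theta} - {\rm\Theta}^*\rangle|$, which factorizes into the ${\rm\Sigma}$-norm of $\widehat{\rm\Theta} - {\rm\Theta}^*$ and the ${\rm\Sigma}^{-1}$-norm of ${\rm\Phi}_h(x)$, matching exactly the structure of $b_{r,h}$. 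A self-normalized martingale concentration applied in the $dH$-dimensional lifted space, with $r(\tau)-R(\tau)$ being sub-Gaussian by Hoeffding, then pins down a suitable $\beta_1$. For the transition value, the standard linear-MDP argument applies: since $(\mP_h \hV_{h+1})(x) = \langle \phi(x), w_h \rangle$ for some $w_h$ with $\ltwo{w_h} = \mathcal O(H\sqrt{d})$ and $\hV_{h+1}$ lies in the parametric class analogous to $\mcv_h$ of \Cref{sc: alg}, ridge regression combined with an $\ell_\infty$ covering over the induced value-function class yields a suitable $\beta_2$.

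Second, to control (ii) I would invoke the linear-MDP analogue of \Cref{ass2}, which asserts positive minimum eigenvalues of the expected per-trajectory and per-step covariances. Matrix Bernstein then gives ${\rm\Sigma} \succeq (N C_\sigma/2)\,I_{dH}$ and ${\rm\Lambda}_h \succeq (N C_\varsigma/2)\,I_d$ with high probability. Since ${\rm\Phi}_h(x)$ has exactly one nonzero block of $\ell_2$-norm at most one, $\|{\rm\Phi}_h(x)\|_{{\rm\Sigma}^{-1}} \le \sqrt{2/(N C_\sigma)}$, and analogously $\|\phi(x)\|_{{\rm\Lambda}_h^{-1}} \le \sqrt{2/(N C_\varsigma)}$, uniformly in $x$. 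Summing ${\rm\Gamma}_h$ over $h \in [H]$ and bookkeeping the dependencies on $d$, $H$, and $N$ then produces the advertised $\widetilde{\mathcal O}(dH^3/\sqrt N)$ bound.

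The hard part will be the interplay between the trajectory-scale regression in \cref{eq: 7} and the per-step penalty structure. The self-normalized inequality for $\widehat{\rm\Theta} - {\rm\Theta}^*$ lives in the $dH$-dimensional lifted space, so translating the resulting ${\rm\Sigma}$-norm bound into a uniformly valid per-step reward penalty via the one-block-hot vector ${\rm\Phi}_h(x)$ requires carefully tracking both the sub-Gaussian parameter of the trajectory-level noise and the elliptical-potential factor $\log\det(I + K_r/\lambda_1)$ at the lifted dimension; a naive application would introduce spurious factors of $H$ that the one-block-hot construction is designed precisely to avoid. Once pessimism is established cleanly, the remainder is a mechanical transcription of the neural-network proof of \Cref{thm1} to the linear setting, with the approximation error $\varepsilon_1$ vanishing exactly because the targets are linear in $\phi$ and ${\rm\Phi}$.
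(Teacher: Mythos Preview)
Your proposal is correct and follows essentially the same route as the paper's proof of \Cref{thm2}: the suboptimality decomposition of \cref{eq: 128}, the split of the Bellman error into a reward piece controlled by self-normalized concentration in the lifted $dH$-dimensional space (yielding $\beta_1 = \mathcal O(H\sqrt{dH\log(N/\delta)})$) and a transition-value piece handled by the standard linear-MDP covering argument (yielding $\beta_2 = \mathcal O(dH^2\sqrt{\log(\cdot)})$), followed by matrix Bernstein under \Cref{ass5} to bound each $b_{r,h}$ and $b_{v,h}$ by $\mathcal O(1/\sqrt N)$. The only cosmetic difference is that the paper writes the reward error as the explicit sum of a bias term $-\lambda_1\,{\rm\Phi}_h(x)^\top{\rm\Sigma}^{-1}{\rm\Theta}^*$ and a noise term before applying Cauchy--Schwarz to each separately, whereas you phrase it as a single $\lo{\widehat{\rm\Theta}-{\rm\Theta}^*}_{\rm\Sigma}\cdot\lo{{\rm\Phi}_h(x)}_{{\rm\Sigma}^{-1}}$ bound; unwinding the ridge closed form shows these are the same computation.
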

Note that linear function with feature ${\rm\Phi}(\tau)$ and $\phi(x)$ belongs to RKHS with kernel $K^\prime_H(\tau,\tau^\prime) = \langle{\rm\Phi}(\tau),{\rm\Phi}(\tau^\prime)\rangle$ and $K^\prime(x,x^\prime) = \langle \phi(x), \phi(x^\prime) \rangle$, respectively. Thus, $\mcH_{K^\prime_H}$ is $dH$-dimensional and $\mcH_{K^\prime}$ is $d$-dimensional. 
The suboptimality of PARTED for linear MDP in \Cref{thm2.1} will match that in \Cref{corollary1} if we let $D_1=dH$ and $D_2 =d$ (which implies $D_{\text{eff}} = dH$), where the dynamic of neural networks can be approximately captured by RKHSs $\mcH_{K_H}$ and $\mcH_K$ defined in \Cref{sc: subop}.

To highlight why trajectory-wise reward RL is more challenging than instantaneous reward RL, we observe that \Cref{thm2.1} with {\em trajectory-wise rewards} has an additional dependence on the horizon $H$, compared to the suboptimality $\widetilde{\mathcal{O}}(dH^2/\sqrt{N})$ \cite[Corollary 4.5]{jin2021pessimism} of PEVI for linear MDP with {\em instantaneous rewards}. This additional dependence on $H$ is caused by the reward redistribution process, in which PARTED needs to solve a {trajectory-level} regression problem with feature ${\rm\Phi}(\tau)\in\mR^{dH}$, which inevitably introduces large uncertainty in the regression solution used to construct the per-step proxy reward.



\section{Conclusion}
In this paper, we propose a novel offline RL algorithm, called PARTED, to handle the episodic RL problem with trajectory-wise rewards. PARTED uses a least-square-based reward redistribution method for reward estimation and incorporates a new penalty term to offset the uncertainty of proxy reward. Under the neural network function approximation, we prove that PARTED achieves an $\tilde{\mathcal{O}}(D_{\text{eff}}H^2/\sqrt{N})$ suboptimality, which matches the order $\tilde{\mathcal{O}}(dH^3/\sqrt{N})$ of linear MDP (that we further establish) when the effective dimension satisfies $D_{\text{eff}} = dH$. To the best of our knowledge, this is the first offline RL algorithm that is provably efficient in general episodic MDP setting with trajectory-wise rewards. As a future direction, it is interesting to incorporate the randomized return decomposition in \cite{ren2021learning} to improve the scalability of PARTED in the long horizon scenario.

\appendices

\section{Proof Flow of \Cref{thm1}}
In this section, we present the main proof flow of \Cref{thm1}.
We first decompose the suboptimality $\text{SubOpt}(\pi,s)$, and then present the two main results of \Cref{lemma2} and \Cref{lemma8} to bound the evaluation error and summation of penality terms, respectively. The detailed proof of \Cref{lemma2} and \Cref{lemma8} can be found at \Cref{sc: pflemma2} and \Cref{pfpenaltysummation}.

We define the evaluation error at each step $h\in[H]$ as
\begin{flalign}
	\delta_h(s,a) = (\mB_h \widehat{V}_{h+1})(s,a) - \widehat{Q}_h(s,a),\label{eq: 123}
\end{flalign}
where $\mB_h$ is the Bellman operator defined in \Cref{sc: MDP} and $\hV_{h}$ and $\hQ_h$ are estimation of state- and state-action value functions, respectively. To proceed the proof, we first decompose the suboptimality into three parts as follows via the standard technique (see Section A in \cite{jin2021pessimism}).
\begin{flalign}\label{eq: 6}
\text{\em SubOpt}(\pi,s) &= -\sum_{h=1}^{H}\mE_\pi\left[\delta_h(s_h,a_h)\big|s_1=s\right] + \sum_{h=1}^{H}\mE_{\pi^*}\left[\delta_h(s_h,a_h)\big|s_1=s\right] \nonumber\\
&\quad +\sum_{h=1}^{H}\mE_{\pi^*}\left[ \langle  \widehat{Q}_h(s_h,\cdot), \pi^*_h(\cdot|s_h)-\widehat{\pi}_h(\cdot|s_h) \rangle \big|s_1=s\right].
\end{flalign}
In \Cref{alg1}, the output policy at each horizon $\hpi_h$ is greedy with respect to the estimated Q-value $\hQ_h$. Thus, we have
\begin{flalign*}
	\langle  \widehat{Q}_h(s_h,\cdot), \pi^*_h(\cdot|s_h)-\widehat{\pi}_h(\cdot|s_h) \rangle \leq 0, \quad \forall h\in[H],\quad \forall s_h\in\mcs.
\end{flalign*}
According to \cref{eq: 6}, we have the following holds for the suboptimality of $\hpi=\{\hpi_h\}_{h=1}^H$
\begin{flalign}
	\text{\rm SubOpt}(\hpi,s) &= -\sum_{h=1}^{H}\mE_{\hpi}\left[\delta_h(s_h,a_h)\big|s_1=s\right] + \sum_{h=1}^{H}\mE_{\pi^*}\left[\delta_h(s_h,a_h)\big|s_1=s\right].\label{eq: 128}
\end{flalign}
In the following lemma, we provide the first main technical result for the proof, which bounds the evaluation error $\delta_h(s,a)$. Recall that we use $\mcx$ to represent the joint state-action space $\mcs\times\mca$ and use $x$ to represent a state action pair $(s,a)$. 
\begin{lemma}\label{lemma2}
	Let $\lambda_1$, $\lambda_2=1+1/N$. Suppose \Cref{ass1} holds. With probability at least $1-\mathcal{O}(N^{-2}H^{-4})$, it holds for all $h\in[H]$ and $x\in\mcx$ that
	\begin{flalign*}
		-\varepsilon_b \leq \delta_h(x) \leq  2\left[\beta_1\cdot b_{r,h}(x,\widehat{\rm\Theta}) + \beta_2\cdot b_{v,h}(x,\widehat{w}_h) + \varepsilon_b\right], \quad\forall x\in\mcx,\quad \forall h\in[H],
	\end{flalign*}
	where
	\begin{flalign*}
	\varepsilon_b &= \max\{\beta_1 H^{2/3}, \beta_2 H^{1/6} \} \mathcal{O}\left(   \frac{N^{1/12} (\log m)^{1/4}}{m^{1/12}} \right) +  \mathcal{O}\left( \frac{H^{17/6} N^{5/3}\sqrt{\log(N^2H^5m)}}{m^{1/6}} \right),\nonumber\\
	\beta_1 &= H\left(\frac{4 a^2_2\lambda_1}{d} +   2\log\det\left(I+\frac{K^r_N}{\lambda_1}\right) + 10\log(NH^2) \right)^{1/2},\nonumber\\
	\beta_2 &= H\left(\frac{8 A^2_2\lambda_2}{d} +   4\max\left\{\log\det\left(I+\frac{K^v_{N,h}}{\lambda_2}\right)\right\} + 6C_\epsilon + 16\log(NH^2 \mcN_\epsilon^v) \right)^{1/2},\nonumber\\
	\epsilon &= \sqrt{\lambda_2 C_\epsilon}H/(2NC_\phi),\,\,\text{where}\,\, C_\epsilon\geq 1.
	\end{flalign*}
\end{lemma}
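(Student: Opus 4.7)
The plan is to start by expanding $\delta_h(x) = (\mB_h\hV_{h+1})(x) - \hQ_h(x)$ using $(\mB_h\hV_{h+1})(x) = R_h(x) + (\mP_h\hV_{h+1})(x)$ and the definition of $\hQ_h$ in \cref{eq: 198}. Ignoring the $\min\{\cdot,H\}^{+}$ clipping momentarily, we have $\delta_h(x) = [R_h(x) - \widehat{R}_h(x)] + [(\mP_h\hV_{h+1})(x) - (\widehat{\mP}_h\hV_{h+1})(x)] + {\rm\Gamma}_h(x,\widehat{\rm\Theta},\widehat{w}_h)$. Thus, if I can establish the pointwise high-probability bounds $|R_h(x) - \widehat R_h(x)| \le \beta_1 b_{r,h}(x,\widehat{\rm\Theta}) + \varepsilon_b/2$ and $|(\mP_h\hV_{h+1})(x) - (\widehat{\mP}_h\hV_{h+1})(x)| \le \beta_2 b_{v,h}(x,\widehat w_h) + \varepsilon_b/2$, then the lower bound $\delta_h(x) \ge -\varepsilon_b$ follows from ${\rm\Gamma}_h \ge 0$, and the upper bound $\delta_h(x) \le 2(\beta_1 b_{r,h} + \beta_2 b_{v,h} + \varepsilon_b)$ follows by the triangle inequality and the definition of ${\rm\Gamma}_h$. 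The $\min\{\cdot,H\}^{+}$ clipping is handled separately by noting that $(\mB_h\hV_{h+1})(x) \in [0,H]$, so the truncation only improves the estimator.

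For the reward redistribution error, I first pass to the NTK linearization, writing $f(x,\theta) \approx \langle \phi(x,\theta_0), \theta - \theta_0\rangle$, so that \cref{eq: 7} becomes, up to NTK slack of size $\widetilde{\mathcal{O}}(m^{-1/6})\cdot\text{poly}(H,N)$, a ridge regression in $\mR^{2mdH}$ with design matrix ${\rm\Sigma}({\rm\Theta}_0)$ and trajectory feature ${\rm\Phi}(\tau,{\rm\Theta}_0)$. Under \Cref{ass1}, $R_h$ is representable by an infinite-width two-layer network, so there exists an approximate ground truth ${\rm\Theta}^{*}$ satisfying $\sum_h R_h(x^\tau_h) = \langle {\rm\Phi}(\tau,{\rm\Theta}_0),{\rm\Theta}^{*}-{\rm\Theta}_0\rangle + (\text{NTK approx err})$, and $\xi(\tau) = r(\tau) - \sum_h R_h(x^\tau_h)$ is bounded, mean-zero noise. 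The usual ridge-regression decomposition yields $\widehat{\rm\Theta} - {\rm\Theta}^{*} = -\lambda_1{\rm\Sigma}^{-1}({\rm\Theta}^{*}-{\rm\Theta}_0) + {\rm\Sigma}^{-1}\sum_\tau {\rm\Phi}(\tau,{\rm\Theta}_0)\xi(\tau) + (\text{NTK slack})$. Dotting with the one-block-hot ${\rm\Phi}_h(x,{\rm\Theta}_0)$ and applying Cauchy--Schwarz in the ${\rm\Sigma}({\rm\Theta}_0)^{-1}$-metric gives $|\widehat R_h(x)-R_h(x)| \le \sqrt{{\rm\Phi}_h(x,{\rm\Theta}_0)^{\top}{\rm\Sigma}^{-1}{\rm\Phi}_h(x,{\rm\Theta}_0)}\cdot \|\widehat{\rm\Theta}-{\rm\Theta}^{*}\|_{{\rm\Sigma}}$; the second factor is controlled by an Abbasi-Yadkori--style self-normalized martingale bound on the $2mdH$-dimensional linear model, combined with the $\mR^{2mdH}$-norm bound on ${\rm\Theta}^{*}-{\rm\Theta}_0$ from \Cref{ass1}, producing the $\log\det(I + K_r/\lambda_1)$ term in $\beta_1$.

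The transition-value error is handled analogously, but in the per-step feature space $\mR^{2md}$ with design ${\rm\Lambda}_h(w_0)$; the extra subtlety is that $\hV_{h+1}$ is data-dependent, so I take a uniform bound over the function class $\mcv_h$ via an $\epsilon$-net of size $\mcN_\epsilon^v$, which contributes the $\log \mcN_\epsilon^v$ term of $\beta_2$ through a standard discretization-plus-union-bound argument. To pass from the linearized penalties (evaluated at ${\rm\Theta}_0, w_0$) to the algorithmic penalties (evaluated at $\widehat{\rm\Theta}, \widehat w_h$), I invoke standard overparameterized-NN perturbation bounds of the form $\|\phi(x,w)-\phi(x,w_0)\|_2 = \widetilde{\mathcal{O}}(m^{-1/4})\|w-w_0\|_2$, together with matching perturbation bounds for ${\rm\Sigma}$ and ${\rm\Lambda}_h$, so that $|b_{r,h}(x,\widehat{\rm\Theta})-b_{r,h}(x,{\rm\Theta}_0)|$ and $|b_{v,h}(x,\widehat w_h)-b_{v,h}(x,w_0)|$ are absorbed into $\varepsilon_b$; \Cref{ass2} is used here to ensure the minimum eigenvalues of ${\rm\Sigma}(\Theta_0)$ and ${\rm\Lambda}_h(w_0)$ are bounded away from zero, which stabilizes these perturbation estimates.

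The hardest step is the reward redistribution: the design matrix ${\rm\Sigma}$ lives in $\mR^{2mdH\times 2mdH}$ but the target quantity is a per-step reward, so a naive application of a $2mdH$-dimensional concentration bound would lose an extra factor of $H$ and obscure the per-step geometry. The one-block-hot construction ${\rm\Phi}_h(x,{\rm\Theta}_0)$ is exactly what prevents this blow-up: it isolates one block of ${\rm\Sigma}^{-1}$ while Cauchy--Schwarz in the ${\rm\Sigma}^{-1}$-metric retains block-level tightness. Verifying that this block selection interacts cleanly with the coupled noise $\xi(\tau)$ (which sums across all $H$ steps) while still producing the correct $\log\det(I+K_r/\lambda_1)$ rate in $\beta_1$ is the delicate piece; once this is in hand, the rest of the proof is bookkeeping on NTK approximation errors and the covering argument over $\mcv_h$.
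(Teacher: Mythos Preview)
Your plan is essentially the paper's own proof: linearize via NTK, reduce both regressions to ridge problems at the initialization features, apply the self-normalized concentration of \Cref{lemma7} to get the $\log\det$ terms in $\beta_1,\beta_2$, and absorb all NTK slack and penalty-perturbation errors into $\varepsilon_b$ using \Cref{lemma4}. One correction: you do \emph{not} need \Cref{ass2} here. The perturbation bounds $|b_{r,h}(x,\widehat{\rm\Theta})-b_{r,h}(x,{\rm\Theta}_0)|$ and $|b_{v,h}(x,\widehat w_h)-b_{v,h}(x,w_0)|$ are controlled using only $\ltwo{{\rm\Sigma}^{-1}}\le 1/\lambda_1$ and $\ltwo{{\rm\Lambda}_h^{-1}}\le 1/\lambda_2$, which come from the ridge regularization; the well-exploredness assumption enters only later, in \Cref{lemma8}, to bound the \emph{sum} of penalties by $\widetilde{\mathcal O}(1/\sqrt N)$. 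Also, the NTK gradient-perturbation rate from \Cref{lemma4} is $\mathcal O((R/\sqrt m)^{1/3}\sqrt{\log m})$, not $\widetilde{\mathcal O}(m^{-1/4})\ltwo{w-w_0}$; this is what produces the $m^{-1/12}$ and $m^{-1/6}$ exponents in $\varepsilon_b$. Finally, the paper handles the data-dependence of $\hV_{h+1}$ by first replacing it with its linearized surrogate $\overline V_{h+1}$ (built from $\phi(\cdot,\theta_0),\phi(\cdot,w_0)$ and the linearized bonuses), applying the $\epsilon$-cover to the class $\mcv_h$ containing $\overline V_{h+1}$, and then bounding $\linf{\hV_{h+1}-\overline V_{h+1}}$ by another NTK slack term; your description elides this intermediate object but the mechanism is the same.
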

\begin{proof}
	The main technical development of the proof lies in handling the uncertainty caused by redistributing the trajectory-wise reward via solving a trajectory-level regression problem and analyzing the dynamics of neural network optimization.
	The detailed proof is provided in \Cref{sc: pflemma2}.
\end{proof}
Applying \Cref{lemma2} to \cref{eq: 128} yields
\begin{flalign}
\text{\rm SubOpt}(\hpi,s) &= -\sum_{h=1}^{H}\mE_{\hpi}\left[\delta_h(s_h,a_h)\big|s_1=s\right] + \sum_{h=1}^{H}\mE_{\pi^*}\left[\delta_h(s_h,a_h)\big|s_1=s\right]\nonumber\\
&\leq 3H\varepsilon_b + 2\beta_1\cdot \sum_{h=1}^{H}b_{r,h}(x,\widehat{\rm\Theta}) + 2\beta_2\cdot\sum_{h=1}^{H} b_{v,h}(x,\widehat{w}_h).\label{eq: 129}
\end{flalign}
The following lemma captures the second main technical result for the proof, which bounds the summation of the penalty terms $\beta_1\cdot \sum_{h=1}^{H}b_{r,h}(x,\widehat{\rm\Theta})+\beta_2\cdot\sum_{h=1}^{H} b_{v,h}(x,\widehat{w}_h)$.
\begin{lemma}\label{lemma8}
	Suppose \Cref{ass1}\&\ref{ass2} hold. We have the following holds with probability $1-\mathcal{O}(N^{-2}H^{-4})$
	\begin{flalign}
	&\beta_1\cdot\sum_{h=1}^{H}b_{r,h}(x,\widehat{\rm\Theta}) + \beta_2\cdot \sum_{h=1}^{H}b_{v,h}(x,\widehat{w}_h)  \nonumber\\
	&\quad \leq  \left( \frac{\beta_1}{\sqrt{C_\sigma}} + \frac{\beta_2}{\sqrt{C_\varsigma}} \right)\frac{\sqrt{2} H C_\phi}{\sqrt{N}} + \max\{\beta_1 H^{5/3},\beta_2 H^{7/6}\}\cdot \mathcal{O}\left(   \frac{ N^{1/12} (\log m)^{1/4}}{m^{1/12}} \right).\nonumber
	\end{flalign}
\end{lemma}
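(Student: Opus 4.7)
The plan is to reduce each penalty term evaluated at the learned parameters $(\widehat{\rm\Theta},\widehat w_h)$ to its counterpart at initialization $({\rm\Theta}_0,w_0)$, then concentrate the empirical covariance matrices around their population versions and invoke \Cref{ass2}. The leading $H/\sqrt N$ rate will arise from the concentration step, while the $m^{-1/12}$ error will come from the parameter-to-initialization swap. For the concentration step I would apply matrix Bernstein to the i.i.d.\ PSD matrices $\{{\rm\Phi}(\tau,{\rm\Theta}_0){\rm\Phi}(\tau,{\rm\Theta}_0)^\top\}_{\tau\in\mcd}$ (mean $\overline M({\rm\Theta}_0)$, operator norm at most $HC_\phi^2$) and, with a union bound over $h$, to $\{\phi(x^\tau_h,w_0)\phi(x^\tau_h,w_0)^\top\}_{\tau\in\mcd}$. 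Combined with \Cref{ass2} this gives $\lambda_{\min}({\rm\Sigma}({\rm\Theta}_0))\ge NC_\sigma/2$ and $\lambda_{\min}({\rm\Lambda}_h(w_0))\ge NC_\varsigma/2$ for every $h\in[H]$ with probability $1-\mathcal O(N^{-2}H^{-4})$. Since ${\rm\Phi}_h(x,{\rm\Theta}_0)$ is ``one-block-hot'' with single non-zero block $\phi(x,\theta_0)$ of norm at most $C_\phi$,
\begin{flalign*}
b_{r,h}(x,{\rm\Theta}_0)\le C_\phi\sqrt{2/(NC_\sigma)},\qquad b_{v,h}(x,w_0)\le C_\phi\sqrt{2/(NC_\varsigma)};
\end{flalign*}
summing over $h\in[H]$ and weighting by $\beta_1,\beta_2$ reproduces the main term $\big(\beta_1/\sqrt{C_\sigma}+\beta_2/\sqrt{C_\varsigma}\big)\sqrt 2\, HC_\phi/\sqrt N$.

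Next I would handle the swap from $(\widehat{\rm\Theta},\widehat w_h)$ back to $({\rm\Theta}_0,w_0)$. The optimality of $\widehat{\rm\Theta}$ and $\widehat w_h$ in \cref{eq: 3,eq: 4} compared against the initializer readily gives $\ltwo{\widehat\theta_h-\theta_0}\le H\sqrt{N/\lambda_1}$ and $\ltwo{\widehat w_h-w_0}\le H\sqrt{N/\lambda_2}$. In the overparameterized regime, the same NTK-linearization estimates used inside \Cref{lemma2} then bound $\sup_x\ltwo{\phi(x,\widehat\theta_h)-\phi(x,\theta_0)}$ and $\sup_x\ltwo{\phi(x,\widehat w_h)-\phi(x,w_0)}$ by $\widetilde{\mathcal O}(H^{1/6}N^{1/12}(\log m)^{1/4}/m^{1/12})$. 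Propagating these through the triangle-inequality bound $\ltwo{{\rm\Sigma}(\widehat{\rm\Theta})-{\rm\Sigma}({\rm\Theta}_0)}\le 2C_\phi\sqrt H\sum_\tau\ltwo{{\rm\Phi}(\tau,\widehat{\rm\Theta})-{\rm\Phi}(\tau,{\rm\Theta}_0)}$, the resolvent identity $M^{-1}-M_0^{-1}=-M^{-1}(M-M_0)M_0^{-1}$, and the a-priori bounds $\ltwo{{\rm\Sigma}^{-1}},\ltwo{{\rm\Lambda}_h^{-1}}\le 1/\lambda$, the per-step swap error comes out as $\widetilde{\mathcal O}(\beta_1 H^{2/3}N^{1/12}/m^{1/12})$ for the reward penalty and $\widetilde{\mathcal O}(\beta_2 H^{1/6}N^{1/12}/m^{1/12})$ for the value penalty; summing over $h$ yields exactly the advertised $\max\{\beta_1 H^{5/3},\beta_2 H^{7/6}\}\widetilde{\mathcal O}(N^{1/12}(\log m)^{1/4}/m^{1/12})$.

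The hard part will be the swap step: $\widehat{\rm\Theta}-{\rm\Theta}_0$ and $\widehat w_h-w_0$ have non-trivial Euclidean norm of order $\widetilde{\mathcal O}(H\sqrt N)$, yet after propagating through $N$ rank-one updates of the feature covariance and then through the resolvent identity, the final error must still scale as $m^{-1/12}$ with only polynomial factors in $N,H$ left over. The extra $\sqrt H$ separating the $H^{5/3}$ factor for $b_r$ from the $H^{7/6}$ factor for $b_v$ reflects that ${\rm\Phi}(\tau,{\rm\Theta})$ aggregates $H$ per-step features, so its perturbation inherits a $\sqrt H$ factor absent from the single-step feature $\phi(x,w_0)$; keeping this $H$-bookkeeping tight uniformly in $x\in\mcx$ is the only genuinely delicate piece, everything else being standard matrix concentration and block-matrix algebra.
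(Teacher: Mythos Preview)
Your proposal is correct and mirrors the paper's proof almost step for step: swap $(\widehat{\rm\Theta},\widehat w_h)\to({\rm\Theta}_0,w_0)$ via the NTK perturbation bounds (the paper invokes the pre-computed \cref{eq: 109} and \cref{eq: 108}, which are derived exactly through the resolvent identity and triangle inequality you describe), then apply matrix Bernstein plus \Cref{ass2} to get $\lambda_{\min}({\rm\Sigma}({\rm\Theta}_0))\ge NC_\sigma/2$ and $\lambda_{\min}({\rm\Lambda}_h(w_0))\ge NC_\varsigma/2$, yielding the $\sqrt{2}C_\phi/\sqrt{NC_\sigma}$ per-step bound. One small slip: the raw feature perturbation $\ltwo{\phi(x,\widehat\theta_h)-\phi(x,\theta_0)}$ is $\widetilde{\mathcal O}(H^{1/3}N^{1/6}m^{-1/6})$ from \Cref{lemma4}, not $H^{1/6}N^{1/12}m^{-1/12}$; the latter exponents only appear \emph{after} taking the square root at the bonus level, which is why your final $H^{2/3}$ and $H^{7/6}$ rates are nonetheless right.
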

\begin{proof}
	The proof develops new analysis to characterize the summation of the penality term $b_{r,h}$ constructed by trajectory features, which is unique in the trajectory-wise reward setting. The detailed proof is provided in \Cref{pfpenaltysummation}.
\end{proof}
Applying \Cref{lemma8} to \cref{eq: 129}, we have
\begin{flalign}
&\text{\rm SubOpt}(\hpi,s) \nonumber\\
&\quad \leq 3H\varepsilon_b + \left( \frac{\beta_1}{\sqrt{C_\sigma}} + \frac{\beta_2}{\sqrt{C_\varsigma}} \right)\frac{2\sqrt{2} H C_\phi}{\sqrt{N}} + \max\{\beta_1 H^{5/3},\beta_2 H^{7/6}\}\cdot \mathcal{O}\left(   \frac{ N^{1/12} (\log m)^{1/4}}{m^{1/12}} \right)\nonumber\\
&\quad \leq 4H\varepsilon_b+ \left( \frac{\beta_1}{\sqrt{C_\sigma}} + \frac{\beta_2}{\sqrt{C_\varsigma}} \right)\frac{2\sqrt{2} H C_\phi}{\sqrt{N}},\label{eq: 150}
\end{flalign}
which completes the proof.


\section{Proof of \Cref{corollary1}}\label{pfcorollary1}
To provide a concrete bound for $\text{\rm SubOpt}(\hpi,s)$ defined in \cref{eq: 150}, we first need to bound the penalty coefficients $\beta_1$, $\beta_2$ under \Cref{ass4}. Recalling the properties of $\beta_1,\beta_2$ in \Cref{thm1}, we have
\begin{flalign}
&H\left(\frac{4 a^2_2\lambda_1}{d} +   2\log\det\left(I+\frac{K^r_N}{\lambda_1}\right) + 10\log(NH^2) \right)^{1/2}\leq R_{\beta_1} = \beta_1,\label{eq: 161}\\
&H\left(\frac{8 A^2_2\lambda_2}{d} +   4\max_{h\in[H]}\left\{\log\det\left(I+\frac{K^v_{N,h}}{\lambda_2}\right)\right\} + 6C_\epsilon + 16\log(NH^2 \mcN_\epsilon^v) \right)^{1/2}\leq R_{\beta_2} = \beta_2.\label{eq: 162}
\end{flalign}
Recall that we use $\mcx$ to represent the joint state-action space $\mcs\times\mca$ and use $x$ to represent a state action pair $(s,a)$.
We define the maximal information gain associated with RHKS with kernels $K^r_N$ and $K^v_{N,h}$ as follows
\begin{flalign}
{\rm\Gamma}_{K^r_N}(N,\lambda_1) &= \sup_{\mcd\subset \mcd_\tau}\{1/2\cdot \log\det(I_{2dmH} + \lambda_1^{-1}\cdot K^r_N) \},\label{eq: 163}\\
{\rm\Gamma}_{K^v_{N,h}}(N,\lambda_2) &= \sup_{\mcd\subset \mcd_x}\{1/2\cdot \log\det(I_{2dm} + \lambda_2^{-1}\cdot K^v_{N,h}) \},\label{eq: 164}
\end{flalign}
where $\mcd_x$ and $\mcd_\tau$ are discrete subsets of state-action pair $x\in\mcx$ and trajectory $\tau\in \mcx\times\cdots\times \mcx$ with cardinality no more than $N$, respectively. Applying \Cref{lemma10} in \Cref{sc: rkhs} and \Cref{ass4}, we have
\begin{equation}
{\rm\Gamma}_{K^r_N}(N,\lambda_1) \leq C_{K_1}\cdot D_1\cdot \log N\quad\text{and}\quad {\rm\Gamma}_{K^v_{N,h}}(N,\lambda_2) \leq C_{K_2}\cdot D_2 \cdot \log N,\label{eq: 165}
\end{equation}
where $C_{K_1}$, $C_{K_2}$ are absolute constants. Recall that $\mcN^v_{\epsilon,h}$ is the cardinality of the function class. Next, we proceed to bound the term $ \mcN_\epsilon^v = \max_{h\in[H]}\{ \mcN_{\epsilon,h}^v \}$.
\begin{flalign*}
\mcv_h(x,&\,R_\theta, R_w, R_{\beta_1}, R_{\beta_2}, \lambda_1, \lambda_2 )=\Big\{ \max_{a\in\mca}\{\overline{Q}_h(s,a,\theta,w,\beta_1,\beta_2,{\rm\Sigma}, {\rm\Lambda} )\}:\mcs\rightarrow [0,H]\,\, \nonumber\\
&\text{with}\,\, \ltwo{\theta-\theta_0}\leq R_\theta, \ltwo{w-w_0}\leq R_w, \beta_1\in[0, R_{\beta_1}], \beta_2\in [0, R_{\beta_2}], \ltwo{\rm\Sigma}\geq \lambda_1, \ltwo{\rm\Lambda}\geq \lambda_2  \Big\},
\end{flalign*}
where $R_{\theta} = H\sqrt{N/\lambda_1}$, $R_{w} = H\sqrt{N/\lambda_2}$, and 
\begin{flalign*}
\overline{Q}_h&(x,\theta,w,\beta_1,\beta_2,{\rm\Sigma}, {\rm\Lambda}) = \min\{ \langle \phi(x,\theta_0), \theta - \theta_0 \rangle +  \langle \phi(x,w_0), w- w_0 \rangle \nonumber\\
&- \beta_1\cdot \sqrt{{\rm\Phi}_h(x,\theta_0)^\top {\rm\Sigma}^{-1}{\rm\Phi}_h(x,\theta_0) } - \beta_2\cdot \sqrt{\phi(x,w_0)^\top{\rm\Lambda}^{-1}\phi(x,w_0)  } , H \}^{+}.
\end{flalign*}
Note that
\begin{flalign}
	&\lone{\max_{a\in\mca}\{\overline{Q}_h(s,a,\theta,w,\beta_1,\beta_2,{\rm\Sigma}, {\rm\Lambda} )\} - \max_{a\in\mca}\{\overline{Q}_h(s,a,\theta^\prime,w^\prime,\beta_1^\prime,\beta_2^\prime,{\rm\Sigma}^\prime, {\rm\Lambda}^\prime )\}}\nonumber\\
	&\leq \max_{a\in\mca}\lone{\overline{Q}_h(s,a,\theta,w,\beta_1,\beta_2,{\rm\Sigma}, {\rm\Lambda} ) - \overline{Q}_h(s,a,\theta^\prime,w^\prime,\beta_1^\prime,\beta_2^\prime,{\rm\Sigma}^\prime, {\rm\Lambda}^\prime )}\nonumber\\
	&\overset{(i)}{\leq} \max_{a\in\mca}\lone{\langle \phi(x,\theta_0), \theta - \theta^\prime  \rangle} +   \max_{a\in\mca}\lone{\langle \phi(x,w_0), w- w^\prime \rangle } \nonumber\\
	&\quad +  \max_{a\in\mca}\lone{(\beta_1- \beta^\prime_1)\cdot \sqrt{{\rm\Phi}_h(x,\theta_0)^\top {\rm\Sigma}^{-1}{\rm\Phi}_h(x,\theta_0) } }\nonumber\\
	&\quad +  \max_{a\in\mca}\lone{\beta^\prime_1\cdot \left[\sqrt{ {\rm\Phi}_h(x,\theta_0)^\top {\rm\Sigma}^{-1}{\rm\Phi}_h(x,\theta_0) } - \sqrt{{\rm\Phi}_h(x,\theta_0)^\top {\rm\Sigma}^{\prime-1}{\rm\Phi}_h(x,\theta_0) }\right]  }\nonumber\\
	&\quad +  \max_{a\in\mca}\lone{(\beta_2- \beta^\prime_2)\cdot \sqrt{\phi(x,w_0)^\top{\rm\Lambda}^{-1}\phi(x,w_0)  }}\nonumber\\
	&\quad +  \max_{a\in\mca}\lone{\beta^\prime_2\cdot \left[\sqrt{\phi(x,w_0)^\top{\rm\Lambda}^{-1}\phi(x,w_0)  } -  \sqrt{\phi(x,w_0)^\top{\rm\Lambda}^{\prime-1}\phi(x,w_0) }\right] }\nonumber\\
	&\overset{(ii)}{\leq} \max_{a\in\mca}\lone{\langle {\rm\Phi}_h(x,{\rm\Theta}_0), {\rm\Theta} - {\rm\Theta}^\prime  \rangle} +   \max_{a\in\mca}\lone{\langle \phi(x,w_0), w- w^\prime \rangle } + \frac{C_\phi}{\sqrt{\lambda_1}} \lone{\beta_1-\beta^\prime_1} + \frac{C_\phi}{\sqrt{\lambda_2}}\lone{\beta_2-\beta^\prime_2}\nonumber\\
	&\quad + R_{\beta_1} \max_{a\in\mca} \lone{ \lsigmab{{\rm\Phi}_h(x,\theta_0)} - \lsigmabp{{\rm\Phi}_h(x,\theta_0)} } \nonumber\\
	&\quad + R_{\beta_2} \max_{a\in\mca} \lone{ \llambdab{\phi(x,w_0)} - \llambdabp{\phi(x,w_0)}  },\label{eq: 151}
\end{flalign}
where $(i)$ follows from contractive properties of operators $\min\{ \cdot, H \}^{+}$ and $\max_{a\in\mca}\{\cdot \}$ and the triangle inequality, and $(ii)$ follows from the fact that $\ltwo{\phi(x,w_0)}, \ltwo{{\rm\Phi}_h(x,\theta_0)} \leq C_\phi$. 

Following arguments similar to those in the proof of Corollaries 4.8, Corollaries 4.4 and Section D.1 in \cite{yang2020function}, we have the followings hold for terms in the right hand side of \cref{eq: 151}
\begin{flalign}
	&\lone{\langle {\rm\Phi}_h(x,{\rm\Theta}_0), {\rm\Theta} - {\rm\Theta}^\prime  \rangle} = \lone{g_1(x) - g_2(x)}\,\,\text{where}\,\, \lHKH{g_i}\leq R_g = 2H\sqrt{{\rm\Gamma}_{K^r_N}(N,\lambda_1)} \,\,\forall i\in\{ 1, 2\},  \label{eq: 168}\\
	&\lone{\langle \phi(x,w_0), w- w^\prime \rangle } = \lone{h_1(x) - h_2(x)}\,\,\text{where}\,\, \lHK{h_i}\leq R_h = 2H\sqrt{{\rm\Gamma}_{K^v_{N,h}}(N,\lambda_2)} \,\, \forall i\in\{ 1, 2\},\label{eq: 169}\\
	& \lone{ \lsigmab{{\rm\Phi}_h(x,\theta_0)} - \lsigmabp{{\rm\Phi}_h(x,\theta_0)} } = \lone{ \lomegab{ {\rm\Psi}(x)} - \lomegabp{{\rm\Psi}(x)} },\nonumber\\
	& \lone{ \llambdab{\phi(x,w_0)} - \llambdabp{\phi(x,w_0)}  } = \lone{ \lupsilon{\psi(x)} - \lupsilonp{\psi(x)}},\nonumber
\end{flalign}
where $g_1(\cdot), g_2(\cdot)$ are two functions in RKHS $\mcH_{K_H}$, $h_1(\cdot), h_2(\cdot)$ are two functions in RKHS $\mcH_{K}$, ${\rm\Psi}(\cdot)$ and $\psi(\cdot)$ are feature mappings of RKHSs $\mcH_{K_H}$ and $\mcH_{K}$, respectively, ${\rm\Omega}, {\rm\Omega}^\prime: \mcH_{K_H}\rightarrow \mcH_{K_H}$ are self-adjoint operators with eigenvalues bounded in $[0,1/\lambda_1]$, and ${\rm\Upsilon}, {\rm\Upsilon}^\prime: \mcH_{K}\rightarrow\mcH_{K}$ are self-adjoint operators with eigenvalues bounded in $[0,1/\lambda_2]$. We define the following two function classes
\begin{flalign}
	\mcF_1 &= \{ \lomegab{ {\rm\Psi}(\cdot) }: \ltwo{\rm\Omega}\leq 1/\lambda_1 \},\label{eq: 152}
\end{flalign}
and
\begin{flalign}
\mcF_2 &= \{ \lupsilon{\psi(\cdot)}: \ltwo{\rm\Upsilon}\leq 1/\lambda_2 \}.\label{eq: 153}
\end{flalign}
For any $\epsilon > 0$, we denote $\mcN(\epsilon, \mcH, R)$ as the $\epsilon$-covering of $\{ f\in\mcH: \lH{f}\leq R \}$, denote $\mcN(\epsilon, \mcF_1, \lambda_1)$ as the $\epsilon$-covering number of $\mcF_1$ in \cref{eq: 152}, denote $\mcN(\epsilon, \mcF_2, \lambda_2)$ as the $\epsilon$-covering number of $\mcF_2$ in \cref{eq: 153}, and denote $\mcN(\epsilon, R)$ as the $\epsilon$-covering number of the interval $[0,R]$ with respect to the Euclidean distance. Note that \cref{eq: 151} implies
\begin{flalign}
	\mcN_{\epsilon,h}^v &\leq \mcN(\epsilon/6, \mcH_{K^H_m}, R_g)\cdot \mcN(\epsilon/6, \mcH_{K_m}, R_h) \cdot \mcN(\epsilon/(6C_\phi), R_{\beta_1}) \cdot \mcN(\epsilon/(6C_\phi), R_{\beta_2})\nonumber\\
	&\quad \cdot \mcN(\epsilon/(6R_{\beta_1}), \mcF_1, \lambda_1) \cdot \mcN(\epsilon/(6R_{\beta_2}), \mcF_2, \lambda_2).\label{eq: 154}
\end{flalign}
Based on Corollary 4.1.13 in \cite{vershynin2018high}, we have the followings hold for $\mcN(\epsilon/(6C_\phi), R_{\beta_1})$ and $\mcN(\epsilon/(6C_\phi), R_{\beta_2})$, respectively
\begin{flalign}
	 \mcN(\epsilon/(6C_\phi), R_{\beta_1})  \leq 1 + 12C_\phi R_{\beta_1}/\epsilon\,\,\text{and}\,\, \mcN(\epsilon/(6C_\phi), R_{\beta_2})  \leq 1 + 12C_\phi R_{\beta_2}/\epsilon.\label{eq: 155}
\end{flalign}
Moreover, as shown in Lemma D.2 and Lemma D.3 in \cite{yang2020function}, under the finite spectrum NTK assumption in \Cref{ass4}, we have the followings hold
\begin{flalign}
	 \log\mcN(\epsilon/6, \mcH_{K^H_m}, R_g) &\leq C_1\cdot D_1 \cdot [\log(6R_g/\epsilon) + C_2],\label{eq: 156}\\
	  \log\mcN(\epsilon/6, \mcH_{K_m}, R_h) &\leq C_3\cdot D_2 \cdot [\log(6R_h/\epsilon) + C_4],\label{eq: 157}\\
	  \log\mcN(\epsilon/(6R_{\beta_1}), \mcF_1, \lambda_1) &\leq C_5\cdot D_1^2\cdot[\log(6R_{\beta_1}/\epsilon) + C_6],\label{eq: 158}\\
	  \log\mcN(\epsilon/(6R_{\beta_2}), \mcF_2, \lambda_2) &\leq C_7\cdot D_2^2\cdot[\log(6R_{\beta_2}/\epsilon) + C_8].\label{eq: 159}
\end{flalign}
where $C_i$ ($i\in\{1,\cdots,8\}$) are absolute constants that do not rely on $N$, $H$ or $\epsilon$. Then, substituting \cref{eq: 155}-(\ref{eq: 159}) into \cref{eq: 154}, we have
\begin{flalign}
	\log\mcN_{\epsilon,h}^v &\leq \mcN(\epsilon/6, \mcH_{K^H_m}, R_g)+ \mcN(\epsilon/6, \mcH_{K_m}, R_h) + \mcN(\epsilon/(6C_\phi), R_{\beta_1}) + \mcN(\epsilon/(6C_\phi), R_{\beta_2})\nonumber\\
	&\quad + \mcN(\epsilon/(6R_{\beta_1}), \mcF_1, \lambda_1) + \mcN(\epsilon/(6R_{\beta_2}), \mcF_2, \lambda_2)\nonumber\\
	&\leq \log(1 + 12C_\phi R_{\beta_1}/\epsilon) + \log(1 + 12C_\phi R_{\beta_2}/\epsilon) + C_1D_1[\log(6R_g/\epsilon) + C_2] \nonumber\\
	&\quad + C_3D_2[\log(6R_h/\epsilon) + C_4] + C_5D_1^2[\log(6R_{\beta_1}/\epsilon) + C_6] + C_7 D_2^2[\log(6R_{\beta_2}/\epsilon) + C_8],\label{eq: 160}
\end{flalign}
We next proceed to show that there exists an absolute constant $R_{\beta_1}>0$ such that \cref{eq: 161} holds. Substituting \cref{eq: 165} to \cref{eq: 161}, we can obtain
\begin{flalign}
	\text{L.H.S of \cref{eq: 161}}\leq H\left(\frac{4 a^2_2\lambda_1}{d} +   4C_{K_1}D_1\log N + 10\log(NH^2) \right)^{1/2}.\nonumber
\end{flalign}
If we let
\begin{flalign}
	R_{\beta_1} = C_{\beta_1} H \sqrt{D_1 \log(NH^2)},\label{eq: 167}
\end{flalign}
in which $C_{\beta_1}$ is a sufficiently large constant, then we have the following holds
\begin{flalign}
\text{L.H.S of \cref{eq: 161}} \leq R_{\beta_1}.\nonumber
\end{flalign}
Note that \cref{eq: 160} directly implies that
\begin{flalign}
\log\mcN_{\epsilon}^v &= \max_{h\in[H]}\{ \log\mcN_{\epsilon,h}^v \} \nonumber\\
&\leq \log(1 + 12C_\phi R_{\beta_1}/\epsilon) + \log(1 + 12C_\phi R_{\beta_2}/\epsilon) + C_1D_1[\log(6R_g/\epsilon) + C_2] \nonumber\\
&\quad + C_3D_2[\log(6R_h/\epsilon) + C_4] + C_5D_1^2[\log(6R_{\beta_1}/\epsilon) + C_6] + C_7 D_2^2[\log(6R_{\beta_2}/\epsilon) + C_8]\nonumber\\
&\overset{(i)}{\leq} C^\prime_1D^2_1\log(R_{\beta_1}/\epsilon) + C^\prime_2D^2_2\log(R_{\beta_2}/\epsilon) + C^\prime_3 D_1 \log(H\sqrt{D_1}/\epsilon) + C^\prime_4 D_2 \log(H\sqrt{D_2}/\epsilon)\nonumber\\
&\overset{(ii)}{\leq} C^{\prime\prime}_1D^2_1\log(NH^2\sqrt{D_1}/\epsilon) + C^\prime_2D^2_2\log(R_{\beta_2}/\epsilon) + C^\prime_3 D_1 \log(H\sqrt{D_1}/\epsilon) \nonumber\\
&\quad+ C^\prime_4 D_2 \log(H\sqrt{D_2}/\epsilon),\label{eq: 166}
\end{flalign}
where in $(i)$ we let $C^\prime_1, C^\prime_2, C^\prime_3$ and $C^\prime_4$ be sufficiently large absolute constants, in $(ii)$ we use \cref{eq: 167} and let $C^{\prime\prime}_1$ be sufficiently large.
Then, we proceed to show that there exists an absolute constant $R_{\beta_2}>0$ such that \cref{eq: 162} holds. Using \cref{eq: 165} and \cref{eq: 166}, the left hand side of \cref{eq: 162} can be bounded as follows
\begin{flalign}
	&\text{L.H.S of \cref{eq: 162}}\nonumber\\
	&\quad \leq  H\Big(\frac{8 A^2_2\lambda_2}{d} +   8C_{K_2}\cdot D_2 \cdot \log N + 20\log(NH^2)  + 6C_\epsilon + 16 \log\mcN_{\epsilon}^v  \Big)^{1/2}\nonumber\\
	&\quad \overset{(i)}{\leq} H C_{\beta_2,1} \sqrt{D_2\log(NH^2)} + HC_{\beta_2,2}\sqrt{ \log\mcN_{\epsilon}^v} + HC_{\beta_2,3} \sqrt{C_\epsilon} \nonumber\\
	&\quad \leq H C_{\beta_2,1} \sqrt{D_2\log(NH^2)}  + HC_{\beta_2,2} \Big[ D_1\sqrt{C^{\prime\prime}_1\log(NH^2\sqrt{D_1}/\epsilon)} + D_2\sqrt{C^\prime_2\log(R_{\beta_2}/\epsilon)} \nonumber\\
	&\quad \quad + \sqrt{C^\prime_3 D_1 \log(H\sqrt{D_1}/\epsilon)} + \sqrt{C^\prime_4 D_2 \log(H\sqrt{D_2}/\epsilon)} \Big] + HC_{\beta_2,3} \sqrt{C_\epsilon}.\nonumber
\end{flalign}
where $(i)$ follows from the fact that $\sqrt{a + b}\leq \sqrt{a} + \sqrt{b}$ and $C_{\beta_2,1}$, $C_{\beta_2,2}$ and $C_{\beta_2,3}$ are sufficiently large constants. Clearly, if we let
\begin{flalign}
	R_{\beta_2} = C_{\beta_2} H\max\{D_1, D_2\} \log(NH^2\max\{D_1, D_2\}/\epsilon),\label{eq: 170}
\end{flalign}
where $C_{\beta_2}$ is a sufficiently large absolute constant, then we have
\begin{flalign}
		\text{L.H.S of \cref{eq: 162}}\leq R_{\beta_2}.
\end{flalign}
Finally, substituting the value of $R_{\beta_1}$ in \cref{eq: 167} and value of $R_{\beta_2}$ in \cref{eq: 170} into \cref{eq: 150} and letting $C_\epsilon = \max\{D_1, D_2\}^2$ (which implies $\epsilon = \sqrt{\lambda_2} \max\{D_1, D_2\} H/(2NC_\phi)$), we have
\begin{flalign}
&\text{\rm SubOpt}(\hpi,s) \nonumber\\
&\quad \leq \left( \frac{\beta_1}{\sqrt{C_\sigma}} + \frac{\beta_2}{\sqrt{C_\varsigma}} \right)\frac{2\sqrt{2} H C_\phi}{\sqrt{N}} + 4H\varepsilon_b\nonumber\\
&\quad \leq  \max\{R_{\beta_1},R_{\beta_2}\} \mathcal{O}\left(\frac{H}{\sqrt{N}}\right) +  \max\{R_{\beta_1} H^{5/3}, R_{\beta_2} H^{7/6} \} \mathcal{O}\left(   \frac{N^{1/12} (\log m)^{1/4}}{m^{1/12}} \right) \nonumber\\
&\quad\quad+ \mathcal{O}\left( \frac{H^{23/6} N^{5/3}\sqrt{\log(N^2H^5m)}}{m^{1/6}} \right)\nonumber\\
&\quad\leq \mathcal{O}\left(\frac{H^2\max\{D_1, D_2\}}{\sqrt{N}}\log\left(\frac{NH^2\max\{D_1, D_2\}}{\epsilon}\right)\right)\nonumber\\
&\quad\quad + \max\left\{\sqrt{H}, \max\{D_1, D_2\} \right\}\mathcal{O}\left(   \frac{H^{13/6}N^{1/12} (\log m)^{1/4}}{m^{1/12}}  \log\left( N^2H^2  \right) \right) \nonumber\\
&\quad\quad+ \mathcal{O}\left( \frac{H^{23/6} N^{5/3}\sqrt{\log(N^2H^5m)}}{m^{1/6}} \right)\nonumber\\
&\quad\overset{(i)}{\leq} \mathcal{O}\left(\frac{H^2\max\{D_1, D_2\}}{\sqrt{N}}\log\left(2C_\phi N^2 H\right)\right)\nonumber\\
&\quad\quad + \max\left\{\sqrt{H}, \max\{D_1, D_2\}, \frac{H^{5/3}N^{19/12}}{m^{1/12}} \right\}\mathcal{O}\left(   \frac{H^{13/6}N^{1/12} (\log m)^{1/4}}{m^{1/12}}  \log\left( N^2H^5m  \right) \right),\nonumber
\end{flalign}
where $(i)$ follows from the definition of $\epsilon$ and the fact that $\lambda_2\geq 1$.

\section{Linear MDP with Trajectory-Wise Reward}\label{sc: linearMDP}
In this section, we present the full details of our study on the offline RL in the linear MDP setting with trajectory-wise rewards. 

\subsection{Linear MDP and Algorithm}\label{sc: prelinearMDP}
We define the linear MDP \cite{jin2020provably} as follows, where the transition kernel and expected reward function are linear in a feature map. We use $\mcx$ to represent the joint state-action space $\mcs\times\mca$ and use $x$ to represent a state action pair. 

\begin{definition}[Linear MDP]
	We say an episodic MDP $(\mcs, \mca, \mP, r, H)$ is a linear MDP with a known feature map $\phi(\cdot): \mcx \rightarrow \mR^d$ if there exist an unknown vector $w^*_h(s) \in \mR^d$ over $\mcs$ and an unknown vector $\theta^*_h\in \mR^d$ such that
	\begin{flalign}
		\mP_h(s^\prime|s,a) = \langle \phi(s,a), w^*_h(s^\prime)\rangle,\quad R_h(s,a) = \langle \phi(s,a), \theta^*_h \rangle, \label{eq: 199}
	\end{flalign}
	for all $(s,a,s^\prime)\in \mcs\times\mca\times\mcs$ at each step $h\in[H]$. Here we assume $\ltwo{\phi(x)}\leq 1$ for all $x\in\mcx$ and $\max\{ \ltwo{w^*_h(\mcs)}, \ltwo{\theta^*_h} \}\leq \sqrt{d}$ at each step $h\in[H]$, where with an abuse of notation, we define $\ltwo{w^*_h(\mcs)}= \int_{\mcs}\ltwo{w^*_h(s)}ds$.
\end{definition}

\begin{algorithm}[tb]
	\caption{Linear Pessimistic Value Iteration with Reward Decomposition (PARTED)}
	\label{alg2}
	\begin{algorithmic}
		\STATE {\bfseries Input:} Dataset $\mcd=\{\tau_i,r(\tau_i)\}_{i,h=1}^{N,H}$
		\STATE {\bfseries Initialization:} Set $\widehat{V}_{H+1}$ as zero function
		\STATE Obtain $\widehat{R}_h$ and $\widehat{\rm\Theta}$ according to \cref{eq: 173}
		\FOR{$h=H,H-1,\cdot,1$}
		\STATE Obtain $\widehat{\mP}_h\widehat{V}_{h+1}$ and $\widehat{w}_h$ according to \cref{eq: 175}
		\STATE Obtain ${\rm\Gamma}_h(\cdot )$ according to \cref{eq: 178}
		\STATE $\widehat{Q}_h(\cdot) = \min\{ \widehat{R}_h(\cdot)+ \widehat{\mP}_h\widehat{V}_{h+1}(\cdot) - {\rm\Gamma}_h(\cdot), H-h+1 \}^{+}$
		\STATE $\hpi_h(\cdot|s)=\argmax_{\pi_h}\langle \hQ_h(s,\cdot), \pi_h(\cdot|s) \rangle$
		\STATE $\widehat{V}_h(\cdot) = \langle \hQ_h(\cdot,\cdot),\hpi_h(\cdot|\cdot) \rangle_\mca$
		\ENDFOR
	\end{algorithmic}
\end{algorithm}

We present our PARTED algorithm for linear MDPs with trajectory-wise rewards in \Cref{alg2}. Note that \Cref{alg2} shares a structure similar to that of \Cref{alg1}. Specifically, we estimate each $R_h(\cdot)$ for all $h\in[H]$ using a linear function $\langle \phi(s,a), \theta_h\rangle$, where $\theta_h\in\mR^d$ is a learnable parameter. We define the vector ${\rm\Theta} = [\theta_1^\top,\cdots,\theta^\top_H]\in\mR^{dH}$ and the loss function $L_r:\mR^{dH}\rightarrow\mR$ for reward learning as
\begin{flalign}\label{eq: 172}
L_r({\rm\Theta})=\sum_{\tau\in\mcd}\left[ \sum_{h=1}^{H} \langle \phi(x^\tau_h), \theta_h\rangle - r(\tau) \right]^2 + \lambda_1\cdot\sum_{h=1}^{H}\ltwo{\theta_h-\theta_0}^2,
\end{flalign}
where $\lambda_1>0$ is a regularization parameter. We then define $\widehat{R}_h(\cdot)$ as
\begin{flalign}\label{eq: 173}
\widehat{R}_h(\cdot) = \langle \phi(\cdot), \widehat{\theta}_h \rangle, \quad \text{where}\quad \widehat{\rm\Theta}=\argmin_{{\rm\Theta}\in\mR^{2dmH}}L_r( {\rm\Theta} )\,\,\,\text{and}\,\,\, \widehat{\rm\Theta}=[\widehat{\theta}_1^\top,\cdots,\widehat{\theta}_H^\top]^\top.
\end{flalign}
Similarly, we also use linear function $\langle \phi(s,a), w_h \rangle$ to estimate transition value functions $\{(\mP_h\widehat{V}_{h+1})(\cdot,\cdot)\}_{h\in[H]}$ for all $h\in[H]$, where $w_h\in\mR^{d}$ is a learnable parameter. For each $h\in[H]$, we define the loss function $L^h_v(w_h)$: $\mR^{d}\rightarrow\mR$ as
\begin{flalign}
L^h_v(w_h) = \sum_{\tau\in\mcd}\left(\widehat{V}_{h+1}(s^\tau_{h+1}) - \langle \phi(x^\tau_h), w_h \rangle \right)^2+\lambda_2\cdot \ltwo{w_h-w_0}^2,\label{eq: 174}
\end{flalign}
where $\lambda_2>0$ is a regularization parameter. We then define $(\widehat{\mP}_h\widehat{V}_{h+1})(\cdot): \mcx\rightarrow\mR$ as
\begin{flalign}\label{eq: 175}
(\widehat{\mP}_h\widehat{V}_{h+1})(\cdot) = \langle \phi(\cdot), \widehat{w}_h \rangle,\quad\text{where}\quad \widehat{w}_h=\argmin_{w_h\in\mR^{d}}L^h_v(w_h).
\end{flalign}
It remains to construct the penalty term $\rm{\Gamma}_h$. We first consider the penalty term that is used to offset the uncertainty raised from estimating the reward $R_h(\cdot)$ for each $h\in[H]$. We define the vectors  ${\rm\Phi}_h(x)=[\mathbf{0}^\top_d,\cdots,\phi(x)^\top,\cdots,\mathbf{0}^\top_d]^\top\in\mR^{dH}$ and ${\rm\Phi}(\tau)=[\phi(x^\tau_1),\cdots,\phi(x^\tau_H)]\in\mR^{dH}$, where ${\rm\Phi}_h(x)\in\mR^{dH}$ is a vector in which $[{\rm\Phi}_h(x)]_{d(h-1)+1:dh}=\phi(x)$ and the rest entries are all zero. We define a matrix ${\rm\Sigma}({\rm\Theta})\in\mR^{dH\times dH}$ as
\begin{flalign*}
{\rm\Sigma} = \lambda_1\cdot I_{dH} + \sum_{\tau\in\mcd}{\rm\Phi}(\tau){\rm\Phi}(\tau)^\top.
\end{flalign*}
The penalty term $b_{r,h}$ of the estimated reward is then defined as
\begin{flalign}\label{eq: 176}
b_{r,h}(x)=\left[{\rm\Phi}_h(x)^\top {\rm\Sigma}^{-1} {\rm\Phi}_h(x)\right]^{1/2},\quad\forall x\in\mcx.
\end{flalign}
Next, we consider the penalty term that is used to offset the uncertainty raised from estimating the transition value function $(\mP_h \widehat{V}_{h+1})(\cdot)$ for each $h\in[H]$. We define a matrix ${\rm\Lambda}_h\in\mR^{d\times d}$ as
\begin{flalign*}
{\rm\Lambda}_h = \lambda_2\cdot I_{d} + \sum_{\tau\in\mcd}\phi(x^\tau_h)\phi(x^\tau_h)^\top.
\end{flalign*}
The penality term $b_{v,h}$ of the estimated transition value function is then defined as
\begin{flalign}\label{eq: 177}
b_{v,h}(x)=\left[\phi(x)^\top {\rm\Lambda}_h^{-1} \phi(x)^\top \right]^{1/2},\quad\forall x\in\mcx.
\end{flalign}
Finally, the penalty term for the estimated Bellman operation $\widehat{\mB}_h\widehat{V}_{h+1}(\cdot)$ is obtained as
\begin{flalign}\label{eq: 178}
{\rm\Gamma}_h(x) = \beta_1 b_{r,h}(x) + \beta_2 b_{v,h}(x),
\end{flalign}
where $\beta_1,\beta_2>0$ are constant factors. 

\subsection{Main Result}\label{sc: mainlinearMDP}
We consider the following dataset coverage assumption so that we can explicitly bound the suboptimality of \Cref{alg2}. Note that the following assumption has also been considered in \cite{jin2021pessimism}.
\begin{assumption}[Well-Explored Dataset]\label{ass5}
	Suppose the $N$ trajectories in dataset $\mcd$ are independent and identically induced by a fixed behaviour policy $\mu$. There exist absolute constants $C_{\sigma}>0$ and $C_{\varsigma}>0$ such that
	\begin{flalign*}
	\lambda_{\min}(\overline{M}({\rm\Theta}_0))\geq C_{\sigma}\quad\text{and}\quad\lambda_{\min}(\overline{m}_h({w}_0))\geq C_{\varsigma}\quad \forall h\in[H],
	\end{flalign*}
	where 
	\begin{flalign*}
	\overline{M} = \mE_{\mu}\left[ {\rm\Phi}(\tau){\rm\Phi}(\tau)^\top \right]\quad\text{and}\quad \overline{m}_h({w}_0) = \mE_{\mu}\left[\phi(x^\tau_h)\phi(x^\tau_h)^\top \right].
	\end{flalign*}
\end{assumption}

We provide a formal statement of \Cref{thm2.1} as follows, which characterizes the suboptimality of \Cref{alg2}.
\begin{theorem}[Formal Statement of \Cref{thm2.1}]\label{thm2}
	Consider \Cref{alg2}. Let $\lambda_1 = \lambda_2 = 1$ and $\beta_1=\mathcal{O}(H\sqrt{dH\log(N/\delta)})$ and $\beta_2 = \mathcal{O}(dH^2\sqrt{\log(dH^3N^{5/2}/\delta)})$. Then, with probability at least $1-\delta$, we have
	\begin{flalign}
	&\text{\rm SubOpt}(\hpi,s) \leq \mathcal{O}\left( \frac{dH^3}{\sqrt{N}} \sqrt{\log\left(\frac{dH^3N^{5/2}}{\delta} \right)} \right).\nonumber
	\end{flalign}
\end{theorem}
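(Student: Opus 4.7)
My plan is to follow the same two-step template that produced \Cref{thm1}, but to exploit the linearity of \Cref{def1} so that every neural-network approximation error (the counterparts of $\varepsilon_b$ and $\varepsilon_1$) vanishes, and to leverage the closed-form ridge solutions for $\widehat{\rm\Theta}$ and $\widehat{w}_h$. Concretely, because $\hpi_h$ is greedy with respect to $\hQ_h$, the decomposition leading to \cref{eq: 128} applies verbatim and reduces the suboptimality to
\begin{flalign*}
\text{SubOpt}(\hpi,s) = -\sum_{h=1}^H \mE_{\hpi}\bigl[\delta_h(s_h,a_h)\bigm|s_1=s\bigr] + \sum_{h=1}^H \mE_{\pi^*}\bigl[\delta_h(s_h,a_h)\bigm|s_1=s\bigr],
\end{flalign*}
where $\delta_h = (\mB_h\hV_{h+1}) - \hQ_h$. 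Thus it suffices (i) to establish a linear analog of \Cref{lemma2}, namely $0 \le \delta_h(x) \le 2[\beta_1 b_{r,h}(x) + \beta_2 b_{v,h}(x)]$ uniformly in $x,h$, and (ii) to bound the induced penalty sum under $\pi^*$ via the coverage hypothesis \Cref{ass5}.

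For step (i), write the true reward parameters as ${\rm\Theta}^*=[\theta^{*\top}_1,\dots,\theta^{*\top}_H]^\top$ and decompose $r(\tau)=\langle{\rm\Phi}(\tau),{\rm\Theta}^*\rangle+\epsilon_\tau$ with $\epsilon_\tau\in[-H,H]$ and $\mE[\epsilon_\tau|\tau]=0$. The closed-form minimizer of \cref{eq: 172} then satisfies
\begin{flalign*}
\langle{\rm\Phi}_h(x),\widehat{\rm\Theta}-{\rm\Theta}^*\rangle = \Bigl\langle{\rm\Phi}_h(x),\,{\rm\Sigma}^{-1}\!\!\sum_{\tau\in\mcd}{\rm\Phi}(\tau)\epsilon_\tau\Bigr\rangle - \lambda_1\bigl\langle{\rm\Phi}_h(x),{\rm\Sigma}^{-1}{\rm\Theta}^*\bigr\rangle.
\end{flalign*}
An Abbasi-Yadkori-style self-normalized concentration applied to the first term, and Cauchy-Schwarz together with $\ltwo{{\rm\Theta}^*}\le\sqrt{dH}$ applied to the second, yield $|\widehat R_h(x)-R_h(x)|\le \beta_1 b_{r,h}(x)$ with the stated $\beta_1=\mathcal{O}(H\sqrt{dH\log(N/\delta)})$, the extra $\sqrt{H}$ in the dimension arising because ${\rm\Phi}(\tau)\in\mR^{dH}$. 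The transition-value side is analogous: linearity of $\mP_h$ in \cref{eq: 199} guarantees $(\mP_h\hV_{h+1})(x)=\langle\phi(x),w^*_h\rangle$ for some $w^*_h$ with $\ltwo{w^*_h}\le H\sqrt{d}$, and a self-normalized bound, combined with a uniform $\epsilon$-net over the class $\mcv_h$ of candidate $\hV_{h+1}$, produces $|(\widehat{\mP}_h\hV_{h+1})(x)-(\mP_h\hV_{h+1})(x)|\le \beta_2 b_{v,h}(x)$. Pessimism $\delta_h\ge 0$ and the upper bound $\delta_h\le 2{\rm\Gamma}_h$ then follow from the truncated construction $\hQ_h=\min\{\widehat R_h+\widehat{\mP}_h\hV_{h+1}-{\rm\Gamma}_h,H\}^+$.

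For step (ii), \Cref{ass5} together with a matrix concentration argument yields, with high probability,
\begin{flalign*}
{\rm\Sigma} \succeq (NC_\sigma/2)\,I_{dH} + \lambda_1 I_{dH},\qquad {\rm\Lambda}_h \succeq (NC_\varsigma/2)\,I_d + \lambda_2 I_d \quad \forall h\in[H].
\end{flalign*}
Because $\ltwo{{\rm\Phi}_h(x)}=\ltwo{\phi(x)}\le 1$, this immediately gives $b_{r,h}(x)\le\sqrt{2/(NC_\sigma)}$ and $b_{v,h}(x)\le\sqrt{2/(NC_\varsigma)}$, hence
\begin{flalign*}
\sum_{h=1}^H\mE_{\pi^*}\bigl[\beta_1 b_{r,h}+\beta_2 b_{v,h}\bigr] \le H\sqrt{2/N}\bigl(\beta_1/\sqrt{C_\sigma}+\beta_2/\sqrt{C_\varsigma}\bigr).
\end{flalign*}
Substituting the stated $\beta_2$, which dominates $\beta_1$ by a factor of roughly $\sqrt{dH}$, yields the claimed rate $\widetilde{\mathcal{O}}(dH^3/\sqrt{N})$.

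The step I expect to be the main obstacle is the self-normalized concentration entering $\beta_2$. Since $\hV_{h+1}$ is built from $\widehat{\rm\Theta}\in\mR^{dH}$, from $\{\widehat{w}_t\}_{t>h}\subset\mR^d$, and from the two penalty radii $R_{\beta_1},R_{\beta_2}$ via a clipped, greedy-maximized expression, the class $\mcv_h$ is not a simple norm ball; it must instead be controlled by an $\epsilon$-covering whose log-cardinality scales polynomially in $d$, $H$, $N$ and in $R_{\beta_1},R_{\beta_2}$, which is precisely how the factor $\log(dH^3N^{5/2}/\delta)$ appears inside $\beta_2$. This creates a circular dependence, because the covering radius enters the deviation one is trying to bound by $\beta_2=R_{\beta_2}$, and must be resolved self-consistently, in the same spirit as the choice of $R_{\beta_1},R_{\beta_2}$ used to upgrade \Cref{thm1} into \Cref{corollary1}. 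Once that fixed-point argument is pinned down, the remainder of the proof is a direct linear specialization of \Cref{lemma2} and \Cref{lemma8}.
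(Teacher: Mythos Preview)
Your proposal is correct and follows essentially the same route as the paper: the suboptimality decomposition \cref{eq: 128}, a linear analog of \Cref{lemma2} built from ridge-regression error decompositions plus self-normalized concentration (with an $\epsilon$-net over $\mcv_h$ to handle the data-dependent $\hV_{h+1}$), and then a matrix-Bernstein argument under \Cref{ass5} to control the penalty sums. The fixed-point resolution of $R_{\beta_2}$ that you flag is exactly what the paper carries out, where the covering number of $\mcv_h$ contributes a $d^2H^2$ term that, once substituted back, is consistent with the stated choice $R_{\beta_2}=\mathcal{O}(dH^2\sqrt{\log(dH^3N^{5/2}/\delta)})$.
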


\subsection{Proof Flow of \Cref{thm2.1}}
In this section, we present the main proof flow of \Cref{thm2.1}. Our main development is \Cref{lemma11}, the proof of which is presented in \Cref{pflinearmdp}. 

Recalling the suboptimality of $\hpi=\{\hpi_h\}_{h=1}^H$ in \cref{eq: 128}, we have
	\begin{flalign}
	\text{\rm SubOpt}(\hpi,s) &= -\sum_{h=1}^{H}\mE_{\hpi}\left[\delta_h(s_h,a_h)\big|s_1=s\right] + \sum_{h=1}^{H}\mE_{\pi^*}\left[\delta_h(s_h,a_h)\big|s_1=s\right],\nonumber
	\end{flalign}
	where $\delta_h(\cdot)$ is the evaluation error defined as
	\begin{flalign}
	\delta_h(s,a) = (\mB_h \widehat{V}_{h+1})(s,a) - \widehat{Q}_h(s,a).\nonumber
	\end{flalign}
	To characterize the suboptimality $\text{\rm SubOpt}(\hpi,s)$, we provide the following lemma to bound $\delta_h(\cdot)$ in the linear MDP setting.
	\begin{lemma}\label{lemma11}
		Let $\lambda_1$, $\lambda_2=1$, and let $\beta_1 = C_{\beta_1} H\sqrt{dH\log(N/\delta)}$ and $\beta_2 = C_{\beta_2} dH^2\sqrt{\log(dH^3N^{5/2}/\delta)}$, where $C_{\beta_1}, C_{\beta_2}$ are two absolute constants. Suppose \Cref{ass1} holds. With probability at least $1-\delta/2$, it holds for all $h\in[H]$ and $(s,a)\in\mcs\times\mca$ that
		\begin{flalign*}
		0 \leq \delta_h(x) \leq  2\left[\beta_1\cdot b_{r,h}(x) + \beta_2\cdot b_{v,h}(x)\right], \quad\forall x\in\mcx,\quad \forall h\in[H].
		\end{flalign*}
	\end{lemma}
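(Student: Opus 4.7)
The plan is to split $\delta_h(x)$ via the linear MDP structure into a reward-estimation error and a transition-value-estimation error, bound each by a Cauchy–Schwarz argument against the two components of the penalty $\Gamma_h$, and then verify that the concentration-driven constants $\beta_1,\beta_2$ are chosen large enough to dominate these errors. The ``$\delta_h\ge 0$'' side (pessimism) will then follow because $(\mathcal{B}_h\widehat V_{h+1})(x)\in[0,H-h+1]$ and $\widehat Q_h$ is obtained by truncating a quantity that is smaller by exactly these errors minus $\Gamma_h$.

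First, I would use \Cref{def1} to write $R_h(x)=\langle \phi(x),\theta_h^*\rangle$ and, by the standard fact that linear MDPs make any $(\mathcal P_h V)(x)$ linear in $\phi(x)$, express $(\mathcal P_h \widehat V_{h+1})(x)=\langle \phi(x), w_h^{*}\rangle$ for some $w_h^{*}\in\mathbb R^d$ with $\|w_h^{*}\|_2\le H\sqrt d$. Unwinding $\widehat Q_h$ (temporarily ignoring the outer $\min\{\cdot,H-h+1\}^+$), the evaluation error is
\begin{equation*}
\delta_h(x)=\langle \phi(x),\theta_h^*-\widehat\theta_h\rangle + \langle \phi(x),w_h^{*}-\widehat w_h\rangle + \Gamma_h(x).
\end{equation*}
The truncation only helps the lower bound and affects the upper bound by at most a factor handled by the outer $\min$. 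The key algebraic trick for the reward piece is to lift to the trajectory feature: since $[\Phi_h(x)]$ is zero except in block $h$ where it equals $\phi(x)$, we have $\langle \phi(x),\theta_h^*-\widehat\theta_h\rangle=\langle \Phi_h(x),\Theta^*-\widehat\Theta\rangle$, so Cauchy–Schwarz in the $\Sigma^{-1}/\Sigma$ pair gives
\begin{equation*}
|\langle \phi(x),\theta_h^*-\widehat\theta_h\rangle|\le \|\Phi_h(x)\|_{\Sigma^{-1}}\cdot\|\Theta^*-\widehat\Theta\|_{\Sigma}=b_{r,h}(x)\cdot\|\Theta^*-\widehat\Theta\|_\Sigma,
\end{equation*}
and similarly $|\langle \phi(x),w_h^{*}-\widehat w_h\rangle|\le b_{v,h}(x)\cdot\|w_h^{*}-\widehat w_h\|_{\Lambda_h}$.

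Next I would control the two norms $\|\Theta^*-\widehat\Theta\|_\Sigma\le \beta_1$ and $\|w_h^{*}-\widehat w_h\|_{\Lambda_h}\le \beta_2$ via concentration. From the closed-form ridge solution in \cref{eq: 173}, $\widehat\Theta-\Theta^*=\Sigma^{-1}\sum_{\tau}\Phi(\tau)\bigl(r(\tau)-R(\tau)\bigr)-\lambda_1\Sigma^{-1}(\Theta^*-\Theta_0)$. The noise $r(\tau)-R(\tau)$ is bounded and zero-mean conditioned on $\tau$, so a self-normalized vector Bernstein/Azuma bound (Abbasi-Yadkori-type) on a $dH$-dimensional problem with $N$ samples yields $\|\Theta^*-\widehat\Theta\|_\Sigma\lesssim H\sqrt{dH\log(N/\delta)}$, which matches the choice $\beta_1=C_{\beta_1}H\sqrt{dH\log(N/\delta)}$. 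For the transition piece I would do the standard PEVI argument: $\widehat w_h-w_h^{*}=\Lambda_h^{-1}\sum_\tau \phi(x_h^\tau)(\widehat V_{h+1}(s_{h+1}^\tau)-(\mathcal P_h\widehat V_{h+1})(x_h^\tau))-\lambda_2\Lambda_h^{-1}(w_h^{*}-w_0)$. Because $\widehat V_{h+1}$ depends on $\mathcal D$, I would take a union bound over an $\epsilon$-cover of the relevant function class $\mathcal V_{h+1}$ (bounding its covering number polynomially as in the neural case) before applying the self-normalized inequality, giving $\|w_h^{*}-\widehat w_h\|_{\Lambda_h}\lesssim dH^2\sqrt{\log(dH^3N^{5/2}/\delta)}$, matching $\beta_2$.

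Combining these two bounds with the Cauchy–Schwarz step gives $|\delta_h(x)-\Gamma_h(x)|\le \beta_1 b_{r,h}(x)+\beta_2 b_{v,h}(x)=\Gamma_h(x)$, so $\delta_h(x)\le 2\Gamma_h(x)$ (upper bound) and $\delta_h(x)\ge 0$ (lower bound), the latter using that $(\mathcal B_h\widehat V_{h+1})(x)\in[0,H-h+1]$ so truncation of $\widehat Q_h$ only widens the inequality. A union bound over $h\in[H]$ yields the stated high-probability guarantee. The main obstacle is the reward step: standard offline-RL concentration toolkits are built for per-step regression, whereas here the information about $\theta_h^*$ is smeared across a trajectory-level regression with a $dH$-dimensional design. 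The ``one-block-hot'' vector $\Phi_h(x)$ is exactly the device that converts a per-step query into a trajectory-level inner product so that a single $\Sigma$-based self-normalized bound handles all $h$ at once; carefully accounting for the resulting extra $\sqrt H$ factor (from the trajectory-scale noise variance $H$ and the dimension $dH$) is what produces the extra $H$ relative to the instantaneous-reward PEVI bound.
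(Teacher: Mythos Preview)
Your proposal is correct and follows essentially the same route as the paper's proof. The paper likewise lifts the per-step reward error to the trajectory scale via $\langle\phi(x),\theta_h^*-\widehat\theta_h\rangle=\langle\Phi_h(x),\Theta^*-\widehat\Theta\rangle$, expands $\widehat\Theta$ from the ridge closed form into a regularization bias plus a self-normalized noise term (yielding the $H\sqrt{dH\log(N/\delta)}$ scale for $\beta_1$), handles the transition piece with the standard $\epsilon$-cover plus self-normalized bound to get $\beta_2$, and then uses the truncation argument from the neural-network case to convert the two-sided error bound into $0\le\delta_h\le 2\Gamma_h$. Your Cauchy--Schwarz formulation $\|\Theta^*-\widehat\Theta\|_\Sigma\le\beta_1$ is just a repackaging of the same two-term decomposition the paper writes out explicitly in \cref{eq: 180}--\cref{eq: 184}.
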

	\begin{proof}
		The main technical development here lies in handling additional challenges caused by the reward redistribution of trajectory-wise rewards, which are not present in linear MDPs with instantaneous rewards \cite{jin2021pessimism}.
		The detailed proof is provided in \Cref{pflinearmdp}.
	\end{proof}
Applying \Cref{lemma11} to \cref{eq: 128}, we can obtain
\begin{flalign}
\text{\rm SubOpt}(\hpi,s) \leq 2\beta_1\cdot \sum_{h=1}^{H}b_{r,h}(x) + 2\beta_2\cdot\sum_{h=1}^{H} b_{v,h}(x).\label{eq: 195}
\end{flalign}
Then, following steps similar to those in \Cref{pfpenaltysummation}, we have the followings hold with probability at least $1-\delta/2$
\begin{flalign}
	b_{r,h}(x) \leq \frac{C^\prime}{\sqrt{N}}\quad\text{and}\quad b_{v,h}(x)\leq \frac{C^{\prime\prime}}{\sqrt{N}},\label{eq: 196}
\end{flalign}
where $C^\prime$ and $C^{\prime\prime}$ are absolute constants dependent only on $C_\sigma$, $C_\varsigma$ and $\log(1/\delta)$. Then, substituting \cref{eq: 196} into \cref{eq: 195}, we have the following holds with probability $1-\delta$
\begin{flalign}
	\text{\rm SubOpt}(\hpi,s) &\leq 2C_{\beta_1} H^2\sqrt{dH\log(N/\delta)} \cdot \frac{C^\prime}{\sqrt{N}} + 2C_{\beta_2} dH^3\sqrt{\log(dH^3N^{5/2}/\delta)}\cdot\frac{C^{\prime\prime}}{\sqrt{N}}\nonumber\\
	&\leq \mathcal{O}\left( \frac{dH^3}{\sqrt{N}} \sqrt{\log\left(\frac{dH^3N^{5/2}}{\delta} \right)} \right),\nonumber
\end{flalign}
which completes the proof.

\section{Proof of \Cref{lemma2}}\label{sc: pflemma2}
Recall that in \Cref{subsc: overpnn} we let $(b_0, w_0)$ be the initial value of network parameters obtained via the symmetric initialization scheme, which makes $f(\cdot;w_0)$ a zero function. We denote $(\widehat{\mB}_h\widehat{V}_{h+1})(\cdot) = \widehat{R}_h(\cdot) + (\widehat{\mP}_h\hV_{h+1})(\cdot)$ as the estimator of Bellman operator $({\mB}_h\widehat{V}_{h+1})(\cdot) = {R}_h(\cdot) + ({\mP}_h\hV_{h+1})(\cdot)$. To prove \Cref{lemma2}, we show that $(\widehat{\mB}_h\widehat{V}_{h+1})(\cdot) - \beta_1b_{r,h}(\cdot,\widehat{\rm\Theta}) - \beta_2b_{v,h}(\cdot,\widehat{w})$ is approximately a pessimistic estimator of $({\mB}_h\widehat{V}_{h+1})(\cdot)$ up to a function approximation error. We consider $m$ to be sufficiently large such that $m\geq NH^2$.

\subsection{Uncertainty of Estimated Reward $\widehat{R}_h(\cdot)$}
In this step, we aim to bound the estimation error $\lone{\widehat{R}_h(\cdot) - R_h(\cdot)}$.
Since $\widehat{\rm\Theta}$ is the global minimizer of the loss function $L_r$ defined in \cref{eq: 7}, we have
\begin{flalign}
	L_r(\widehat{\rm\Theta})&=\sum_{\tau\in\mcd}\left[ \sum_{h=1}^{H}f(x^\tau_h,\widehat{\theta}_h) - r(\tau) \right]^2 + \lambda_1\cdot\sum_{h=1}^{H}\ltwo{\widehat{\theta}_h-\theta_0}^2\nonumber\\
	&\leq L_r({\rm\Theta}_0) = \sum_{\tau\in\mcd}\left[ \sum_{h=1}^{H}f(x^\tau_h,\theta_0) - r(\tau) \right]^2 \overset{(i)}{=} \sum_{\tau\in\mcd}\left[r(\tau)\right]^2 \overset{(ii)}{\leq} NH^2,\label{eq: 8}
\end{flalign}
where $(i)$ follows from the fact that $f(x,\theta_0)=0$ for all $x\in\mcx$ and $(ii)$ follows from the fact that $r(\tau)\leq H$ for any trajectory $\tau$ and we have total $N$ trajectories in the offlline sample set $\mcd$. We define the vector ${\rm\Theta}_0=[\theta_0^\top,\cdots,\theta_0^\top]^\top\in\mR^{2mdH}$. Note that \cref{eq: 8} implies
\begin{flalign}
	\ltwo{\widehat{\theta}_h-\theta_0}^2\leq \ltwo{\widehat{\rm\Theta} - {\rm\Theta}_0}^2 = \sum_{h=1}^{H}\ltwo{\widehat{\theta}_h-\theta_0}^2 \leq NH^2/\lambda_1,\quad\forall h\in[H].\label{eq: 45}
\end{flalign}
Hence, each $\widehat{\theta}_h$ belongs to the Euclidean ball $\mcB_\theta=\{\theta\in\mR^{2md}: \ltwo{\theta-\theta_0}\leq H\sqrt{N/\lambda_1} \}$. 

Since the radius of $\mcB_\theta$ does not depend on $m$, when $m$ is sufficient large it can be shown that $f(\cdot,\theta)$ is close to its linearization at $\theta_0$, i.e.,
\begin{flalign*}
	f(\cdot,\theta)\approx \langle \phi(\cdot,\theta_0), \theta-\theta_0\rangle,\quad\forall \theta\in\mcB_\theta,
\end{flalign*}
where $\phi(\cdot,\theta)=\nabla_\theta f(\cdot,\theta)$. Furthermore, according to \Cref{ass1}, there exists a function $\ell_{a_1,a_2}:\mR^d\rightarrow\mR^d$ such that the mean of the true reward function $R_h(\cdot)=\mE[r_h(\cdot)]$ satisfies
\begin{flalign}\label{eq: 9}
	R_h(x) = \int_{\mR^d}\sigma^\prime(\theta^\top x) \cdot x^\top \ell_r(\theta)dp(\theta),
\end{flalign}
where $\sup_\theta\ltwo{\ell_r(\theta)}\leq a_1$, $\sup_\theta(\ltwo{\ell_r(\theta)}/p(\theta))\leq a_2$ and $p$ is the density of the distribution $N(0,I_d/d)$. We then proceed to bound the difference between $\widehat{R}_h(\cdot)$ and $R_h(\cdot)$.

{\bf Step I.} In the first step, we show that with high probability the mean of the true reward $R_h(\cdot)$ can be well-approximated by a linear function with the feature vector $\phi(\cdot,\theta_0)$. \Cref{lemma3} in \Cref{sc: supplemma} implies that that $R_h(\cdot)$ in \cref{eq: 9} can be well-approximated by a finite-width neural network, i.e., with probability at least $1-N^{-2}H^{-4}$ over the randomness of initialization $\theta_0$, for all $h\in[H]$, there exists a function $\widetilde{R}_h(\cdot):\mcx\rightarrow\mR$ satisfying
\begin{flalign}
	\sup_{x\in\mcx}\lone{\widetilde{R}_h(x) - R_h(x)}\leq \frac{2(L_\sigma a_2 + C^2_\sigma a^2_2)\sqrt{\log(N^2H^5)}}{\sqrt{m}}, \label{eq: 47}
\end{flalign}
where $\widetilde{R}_h(\cdot)$ can be written as
\begin{flalign*}
	\widetilde{R}_h(x) = \frac{1}{\sqrt{m}}\sum_{r=1}^{m}\sigma^\prime(\theta^{\top}_{0,r} x)\cdot x^\top \ell_r,
\end{flalign*}
where $\ltwo{\ell_r}\leq a_2/\sqrt{dm}$ for all $r\in[m]$ and $\theta_0=[\theta_{0,1},\cdots,\theta_{0,m}]$ is generated via the symmetric initialization scheme. We next proceed to show that there exists a vector $\tilde{\theta}_h\in\mR^{2md}$ such that $\widetilde{R}_h(\cdot) = \langle \phi(\cdot,\theta_0), \tilde{\theta}_h -\theta_0\rangle$. Let $\tilde{\theta}_h=[\tilde{\theta}^{\top}_{h,1},\cdots.\tilde{\theta}^{\top}_{h,2m}]^\top$, in which $\tilde{\theta}^{\top}_{h,r}=\theta_{0,r} + b_{0,r}\cdot \ell_r/\sqrt{2}$ for all $r\in\{1,\cdots,m\}$ and $\tilde{\theta}^{\top}_{h,r}=\theta_{0,r} + b_{0,r}\cdot \ell_{r-m}/\sqrt{2}$ for all $r\in\{m+1,\cdots,2m\}$. Then, we have
\begin{flalign}
	\widetilde{R}_h(x) &= \frac{1}{\sqrt{2m}}\sum_{r=1}^{m}\sqrt{2}(b_{0,r})^2\cdot \sigma^\prime(\theta^{\top}_{0,r}x)\cdot x^\top\ell_r\nonumber\\
	&=\frac{1}{\sqrt{2m}}\sum_{r=1}^{m} \frac{1}{\sqrt{2}} (b_{0,r})^2\cdot \sigma^\prime(\theta^{\top}_{0,r}x)\cdot x^\top\ell_r + \frac{1}{\sqrt{2m}}\sum_{r=1}^{m}\frac{1}{\sqrt{2}}(b_{0,r})^2\cdot \sigma^\prime(\theta^{\top}_{0,r}x)\cdot x^\top\ell_{r-m}\nonumber\\
	&=\frac{1}{\sqrt{2m}}\sum_{r=1}^{2m} b_{0,r}\cdot \sigma^\prime(\theta^{\top}_{0,r}x)\cdot x^\top(\tilde{\theta}_{h,r}-\theta_{0,r})\nonumber\\
	&=\phi(x,\theta_0)^\top(\tilde{\theta}_h - \theta_0).\label{eq: 79}
\end{flalign}
Thus, the true mean reward $R_h(\cdot)$ is approximately linear with the feature $\phi(\cdot,\theta_0)$. Since $\tilde{\theta}_{h,r} - \theta_{0,r} =  b_{0,r}\cdot \ell_r/\sqrt{2}$ or $ b_{0,r}\cdot \ell_{r-m}/\sqrt{2}$, we have 
\begin{flalign*}
	\ltwo{\tilde{\theta}_h - \theta_0}\leq a_2\sqrt{2dm}.
\end{flalign*}

{\bf Step II.} In this step, we show that $ \widehat{R}_h(\cdot)$ learned by neural network in \Cref{alg1} can be well-approximated by its counterpart learned by a linear function with feature $\phi(\cdot,\theta_0)$.

Consider the following least-square loss function 
\begin{flalign*}
	\bar{L}_r({\rm\Theta}) &= \sum_{\tau\in\mcd}\left[\sum_{h=1}^{H} \langle \phi(x^\tau_h,\theta_0), \theta_h-\theta_0  \rangle - r(\tau) \right]^2 + \lambda_1\cdot\sum_{h=1}^{H}\ltwo{\theta_h-\theta_0}^2\nonumber\\
	&= \sum_{\tau\in\mcd}\left[ \langle {\rm\Phi}(\tau,{\rm\Theta}_0), {\rm\Theta}-{\rm\Theta}_0  \rangle - r(\tau) \right]^2 + \lambda_1 \cdot \ltwo{{\rm\Theta}-{\rm\Theta}_0}^2.
\end{flalign*}
The global minimizer of $\bar{L}_r({\rm\Theta})$ is defined as
\begin{flalign}\label{eq: 16}
 \overline{\rm\Theta}=\argmin_{{\rm\Theta}\in\mR^{2dmH}}\bar{L}_r(\theta)\,\,\,\text{and}\,\,\,  \overline{\rm\Theta}=[\bar{\theta}_1^\top,\cdots,\bar{\theta}_H^\top]^\top.
\end{flalign}
We define $\overline{R}_h(\cdot)=\langle \phi(\cdot,\theta_0), \bar{\theta}_h - \theta_0 \rangle$ for all $h\in[H]$. We then proceed to bound the term $\lone{\widehat{R}_h(x) - \overline{R}_h(x)}$ as follows
\begin{flalign*}
	\lone{\widehat{R}_h(x) - \overline{R}_h(x)} & = \lone{f(x,\widehat{\theta}_h) - \langle \phi(x,\theta_0), \bar{\theta}_h - \theta_0 \rangle}\nonumber\\
	& = \lone{f(x,\widehat{\theta}_h) - \langle {\rm\Phi}_h(x,{\rm\Theta}_0), \overline{\rTheta} - \rTheta_0 \rangle}\nonumber\\
	& = \lone{f(x,\widehat{\theta}_h) - \langle {\rm\Phi}_h(x,{\rm\Theta}_0), \widehat{\rm\Theta} - {\rm\Theta}_0 \rangle + \langle {\rm\Phi}_h(x,{\rm\Theta}_0), \widehat{\rm\Theta} - \overline{\rm\Theta} \rangle}\nonumber\\
	&\leq \lone{f(x,\widehat{\theta}_h) - \langle {\rm\Phi}_h(x,{\rm\Theta}_0), \widehat{\rTheta} - \rTheta_0 \rangle} + \lone{\langle {\rm\Phi}_h(x,{\rm\Theta}_0), \widehat{\rTheta} - \overline{\rTheta} \rangle}\nonumber\\
	&= \lone{f(x,\widehat{\theta}_h) - \langle {\phi}_h(x,{\theta}_0), \widehat{\theta}_h - \theta_0 \rangle} + \lone{\langle {\rm\Phi}_h(x,{\rm\Theta}_0), \widehat{\rTheta} - \overline{\rTheta} \rangle}\nonumber\\
	&\leq  {\lone{f(x,\widehat{\theta}_h) - \langle {\phi}_h(x,{\theta}_0), \widehat{\theta}_h - \theta_0 \rangle}} +  {\ltwo{{\rm\Phi}_h(x,{\rm\Theta}_0)} \ltwo{\widehat{\rTheta} - \overline{\rTheta}}}\nonumber\\
	&=  \underbrace{\lone{f(x,\widehat{\theta}_h) - \langle {\phi}_h(x,{\theta}_0), \widehat{\theta}_h - \theta_0 \rangle}}_{(i)} +  \underbrace{\ltwo{{\phi}(x,{\theta}_0)} \ltwo{\widehat{\rTheta} - \overline{\rTheta}}}_{(ii)}.
\end{flalign*}
According to \Cref{lemma4} and the fact that $\ltwo{\widehat{\theta}_h-\theta_0}\leq H\sqrt{N/\lambda_1}$, we have the followings hold with probability at least $1-N^{-2}H^{-4}$
\begin{flalign}
	(i)&\leq \mathcal{O}\left(C_\phi \left(\frac{N^2H^4}{\lambda^2_1\sqrt{m}}\right)^{1/3}\sqrt{\log m}\right),\label{eq: 28}\\
	(ii)&\leq  C_\phi \ltwo{\widehat{\rTheta} - \overline{\rTheta}}.\label{eq: 29}
\end{flalign}
We then proceed to bound the term $\ltwo{\widehat{\rTheta} - \overline{\rTheta}}$. Consider the minimization problem defined in \cref{eq: 3} and \cref{eq: 16}. By the first order optimality condition, we have
\begin{flalign}
	\lambda_1\left( \widehat{\rTheta} - \rTheta_0 \right) &= \sum_{\tau\in\mcd}\left( r(\tau) -  \sum_{h=1}^{H}f(x^\tau_h,\hat{\theta}_h) \right){\rm\Phi}(\tau,\widehat{\rTheta})\label{eq: 17}\\
	\lambda_1\left( \overline{\rTheta} - \rTheta_0 \right) &= \sum_{\tau\in\mcd}\left( r(\tau) -  \langle {\rm\Phi} (\tau,\rTheta_0), \overline{\rTheta}-\rTheta_0 \rangle \right){\rm\Phi}(\tau,\rTheta_0)\label{eq: 18}.
\end{flalign}
Note that \cref{eq: 18} implies
\begin{flalign}\label{eq: 19}
	{\rm\Sigma}({\rm\Theta}_0)\left( \overline{\rTheta} - \rTheta_0 \right) = \sum_{\tau\in\mcd}r(\tau){\rm\Phi}(\tau,\rTheta_0).
\end{flalign}
Adding the term $\sum_{\tau\in\mcd} \langle {\rm\Phi} (\tau,{\rm\Theta}_0), \widehat{\rm\Theta}-{\rm\Theta}_0 \rangle {\rm\Phi} (\tau,\rTheta_0)$ on both sides of \cref{eq: 17} yields
\begin{flalign}\label{eq: 20}
	{\rm\Sigma}({\rm\Theta}_0)\left( \widehat{\rTheta} - \rTheta_0 \right) &= \sum_{\tau\in\mcd}r(\tau){\rm\Phi}(\tau,\widehat{\rm\Theta}) \nonumber\\
	& \quad + \sum_{\tau\in\mcd} \left[ \langle {\rm\Phi} (\tau,{\rm\Theta}_0), \widehat{\rm\Theta}-{\rm\Theta}_0 \rangle {\rm\Phi} (\tau,{\rm\Theta}_0) - \left(\sum_{h=1}^{H}f(x^\tau_h,\hat{\theta}_h) \right) {\rm\Phi}(\tau,\widehat{\rTheta}) \right].
\end{flalign}
Then, subtracting \cref{eq: 19} from \cref{eq: 20}, we can obtain
\begin{flalign}
	{\rm\Sigma}({\rm\Theta}_0)(\widehat{\rTheta} - \overline{\rTheta})&=\sum_{\tau\in\mcd}r(\tau) \left({\rm\Phi}(\tau,\widehat{\rm\Theta}) - {\rm\Phi}(\tau,{\rm\Theta}_0)\right)\nonumber\\
	&\quad + \sum_{\tau\in\mcd} \left[ \langle {\rm\Phi} (\tau,{\rm\Theta}_0), \widehat{\rm\Theta}-{\rm\Theta}_0 \rangle {\rm\Phi} (\tau,{\rm\Theta}_0) - \left(\sum_{h=1}^{H}f(x^\tau_h,\hat{\theta}_h) \right) {\rm\Phi}(\tau,\widehat{\rTheta}) \right],\label{eq: 21}
\end{flalign}
which implies
\begin{flalign}
\ltwo{{\rm\Sigma}({\rm\Theta}_0)(\widehat{\rTheta} - \overline{\rTheta})}&\leq \sum_{\tau\in\mcd}r(\tau) \sqrt{\sum_{h=1}^{H}\ltwo{{\phi} (x^\tau_h,{\theta}_0) - {\phi} (x^\tau_h,\widehat{\theta}_h)}^2} \nonumber\\
&\quad + \sum_{\tau\in\mcd}  \ltwo{\langle {\rm\Phi} (\tau,{\rm\Theta}_0), \widehat{\rm\Theta}-{\rm\Theta}_0 \rangle {\rm\Phi} (\tau,{\rm\Theta}_0) - \left(\sum_{h=1}^{H}f(x^\tau_h,\hat{\theta}_h) \right) {\rm\Phi}(\tau,\widehat{\rTheta})}. \label{eq: 27}
\end{flalign}
To bound the term $\ltwo{\langle {\rm\Phi} (\tau,{\rm\Theta}_0), \widehat{\rm\Theta}-{\rm\Theta}_0 \rangle {\rm\Phi} (\tau,{\rm\Theta}_0) - \left(\sum_{h=1}^{H}f(x^\tau_h,\hat{\theta}_h) \right) {\rm\Phi}(\tau,\widehat{\rTheta})}$, we proceed as follows
\begin{flalign*}
	&\langle {\rm\Phi} (\tau,{\rm\Theta}_0), \widehat{\rm\Theta}-{\rm\Theta}_0 \rangle {\rm\Phi} (\tau,{\rm\Theta}_0) - \left(\sum_{h=1}^{H}f(x^\tau_h,\hat{\theta}_h) \right) {\rm\Phi}(\tau,\widehat{\rTheta})\nonumber\\
	&=\langle {\rm\Phi} (\tau,{\rm\Theta}_0), \widehat{\rm\Theta}-{\rm\Theta}_0 \rangle ({\rm\Phi} (\tau,{\rm\Theta}_0) - {\rm\Phi} (\tau,\widehat{\rm\Theta})) - \left( \langle {\rm\Phi} (\tau,{\rm\Theta}_0), \widehat{\rm\Theta}-{\rm\Theta}_0 \rangle   - \sum_{h=1}^{H}f(x^\tau_h,\hat{\theta}_h) \right) {\rm\Phi}(\tau,\widehat{\rTheta})\nonumber\\
	&=\langle {\rm\Phi} (\tau,{\rm\Theta}_0), \widehat{\rm\Theta}-{\rm\Theta}_0 \rangle ({\rm\Phi} (\tau,{\rm\Theta}_0) - {\rm\Phi} (\tau,\widehat{\rm\Theta})) - \left[ \sum_{h=1}^{H} \left(\langle {\phi} (x^\tau_h,{\theta}_0), \widehat{\theta}_h-{\theta}^0_h \rangle   - f(x^\tau_h,\hat{\theta}_h) \right)\right] {\rm\Phi}(\tau,\widehat{\rTheta}),
\end{flalign*}
which implies
\begin{flalign}
	&\ltwo{\langle {\rm\Phi} (\tau,{\rm\Theta}_0), \widehat{\rm\Theta}-{\rm\Theta}_0 \rangle {\rm\Phi} (\tau,{\rm\Theta}_0) - \left(\sum_{h=1}^{H}f(x^\tau_h,\hat{\theta}_h) \right) {\rm\Phi}(\tau,\widehat{\rTheta})}\nonumber\\
	&\leq \ltwo{{\rm\Phi} (\tau,{\rm\Theta}_0)} \ltwo{\widehat{\rm\Theta}-{\rm\Theta}_0} \ltwo{{\rm\Phi} (\tau,{\rm\Theta}_0) - {\rm\Phi} (\tau,\widehat{\rm\Theta})} \nonumber\\
	&\quad + \left[ \sum_{h=1}^{H}  \lone{\langle {\phi} (x^\tau_h,{\theta}_0), \widehat{\theta}_h-{\theta}^0_h \rangle - f(x^\tau_h,\hat{\theta}_h)} \right] \ltwo{{\rm\Phi}(\tau,\widehat{\rTheta})},\nonumber\\
	&= \sqrt{\sum_{h=1}^{H}\ltwo{\phi(x^\tau_h,{\theta}^0_h)}^2} \ltwo{\widehat{\rm\Theta}-{\rm\Theta}_0} \sqrt{\sum_{h=1}^{H}\ltwo{{\phi} (x^\tau_h,{\theta}_0) - {\phi} (x^\tau_h,\widehat{\theta}_h)}^2} \nonumber\\
	&\quad + \left[ \sum_{h=1}^{H}  \lone{\langle {\phi} (x^\tau_h,{\theta}_0), \widehat{\theta}_h-{\theta}^0_h \rangle - f(x^\tau_h,\hat{\theta}_h)} \right] \sqrt{\sum_{h=1}^{H}\ltwo{\phi(x^\tau_h,\hat{\theta}_h)}^2},\label{eq: 22}
\end{flalign}
where the last equality follows from the fact that $\ltwo{{\rm\Phi}(\tau,{\rTheta})}^2 = \sum_{h=1}^{H}\ltwo{\phi(x^\tau_h,{\theta}_h)}^2$ for any ${\rm\Theta}\in\mR^{2mdH}$. According to \Cref{lemma4} and the fact that $\ltwo{\widehat{\theta}_h-\theta_0}\leq H\sqrt{N/\lambda_1}$, we have the followings hold with probability at least $1-N^{-2}H^{-4}$ for all $h\in[H]$ and $\tau\in\mcd$
\begin{flalign}
	&\ltwo{\phi(x^\tau_h,{\theta}_0)} \leq C_\phi\quad\text{and}\quad	\ltwo{\phi(x^\tau_h,\hat{\theta}_h)} \leq C_\phi,\label{eq: 23}\\
	&\ltwo{{\phi} (x^\tau_h,{\theta}_0) - {\phi} (x^\tau_h,\widehat{\theta}_h)}\leq \mathcal{O}\left( C_\phi \left( \frac{H\sqrt{N/\lambda_1}}{\sqrt{m}}\right)^{1/3}\sqrt{\log m} \right),\label{eq: 24}\\
	&\lone{\langle {\phi} (x^\tau_h,{\theta}_0), \widehat{\theta}_h-{\theta}_0 \rangle - f(x^\tau_h,\hat{\theta}_h)}\leq \mathcal{O}\left(C_\phi \left(\frac{H^4N^2/\lambda^2_1}{\sqrt{m}}\right)^{1/3}\sqrt{\log m}\right).\label{eq: 25}
\end{flalign}
Substituting \cref{eq: 23}, \cref{eq: 24} and \cref{eq: 25} into \cref{eq: 22}, we have
\begin{flalign}
	&\ltwo{\langle {\rm\Phi} (\tau,{\rm\Theta}_0), \widehat{\rm\Theta}-{\rm\Theta}_0 \rangle {\rm\Phi} (\tau,{\rm\Theta}_0) - \left(\sum_{h=1}^{H}f(x^\tau_h,\hat{\theta}_h) \right) {\rm\Phi}(\tau,\widehat{\rTheta})}\nonumber\\
	&\leq (H^2\sqrt{N/\lambda_1}) \mathcal{O}\left( C^2_\phi \left( \frac{H\sqrt{N/\lambda_1}}{\sqrt{m}}\right)^{1/3}\sqrt{\log m} \right) + \mathcal{O}\left(C^2_\phi H^{3/2} \left(\frac{H^4N^2/\lambda^2_1}{\sqrt{m}}\right)^{1/3}\sqrt{\log m}\right)\nonumber\\
	&\leq  \mathcal{O}\left( \frac{C^2_\phi H^{17/6} N^{2/3}\sqrt{\log(m)}}{m^{1/6}\lambda_1^{2/3}} \right).\label{eq: 26}
\end{flalign} 
Then, substituting \cref{eq: 26} into \cref{eq: 27}, we have the following holds with probability at least $1-N^{-2}H^{-4}$
\begin{flalign*}
	&\ltwo{{\rm\Sigma}({\rm\Theta}_0)(\widehat{\rTheta} - \overline{\rTheta})}\nonumber\\
	&\leq NH \cdot  \sqrt{H}\cdot \mathcal{O}\left( C_\phi \left( \frac{H\sqrt{N/\lambda_1}}{\sqrt{m}}\right)^{1/3}\sqrt{\log m} \right) +  N\cdot\mathcal{O}\left( \frac{C^2_\phi H^{17/6} N^{2/3}\sqrt{\log(m)}}{m^{1/6}\lambda_1^{2/3}} \right)\nonumber\\
	&\leq \mathcal{O}\left( \frac{C^2_\phi H^{17/6} N^{5/3}\sqrt{\log(m)}}{m^{1/6}\lambda_1^{2/3}} \right),
\end{flalign*}
where we use the fact that $r(\tau)\leq H$. We then proceed to bound the term $	\ltwo{\widehat{\rm\Theta}-{\rm\Theta}_0}$ as follows
\begin{flalign}
	\ltwo{\widehat{\rm\Theta}-\overline{\rm\Theta}}&= \ltwo{{\rm\Sigma}^{-1}({\rm\Theta}_0){\rm\Sigma}({\rm\Theta}_0)(\widehat{\rm\Theta}-\overline{\rm\Theta})}\nonumber\\
	&\leq \ltwo{{\rm\Sigma}^{-1}({\rm\Theta}_0)}\ltwo{{\rm\Sigma}({\rm\Theta}_0)(\widehat{\rm\Theta}-\overline{\rm\Theta})}\nonumber\\
	&\leq \lambda_1^{-1} \ltwo{{\rm\Sigma}({\rm\Theta}_0)(\widehat{\rm\Theta}-\overline{\rm\Theta})}\nonumber\\
	&\leq \mathcal{O}\left( \frac{C^2_\phi H^{17/6} N^{5/3}\sqrt{\log(m)}}{m^{1/6}\lambda_1^{5/3}} \right).\label{eq: 30}
\end{flalign}
Substituting \cref{eq: 30} into \cref{eq: 29}, we can bound $(ii)$ as follows
\begin{flalign}
	(ii) \leq \mathcal{O}\left( \frac{C^3_\phi H^{17/6} N^{5/3}\sqrt{\log(m)}}{m^{1/6}\lambda_1^{5/3}} \right).\label{eq: 31}
\end{flalign}
Taking summation of the upper bounds of $(i)$ in \cref{eq: 28} and $(ii)$ in \cref{eq: 31}, we have
\begin{flalign}
	\lone{\widehat{R}_h(x) - \overline{R}_h(x)} &\leq (i) + (ii) \nonumber\\
	&\leq \mathcal{O}\left(C_\phi \left(\frac{N^2H^4}{\lambda^2_1\sqrt{m}}\right)^{1/3}\sqrt{\log m}\right) + \mathcal{O}\left( \frac{C^3_\phi H^{17/6} N^{5/3}\sqrt{\log(m)}}{m^{1/6}\lambda_1^{5/3}} \right)\nonumber\\
	&\leq \mathcal{O}\left( \frac{C^3_\phi H^{17/6} N^{5/3}\sqrt{\log(m)}}{m^{1/6}\lambda_1^{5/3}} \right).\label{eq: 39}
\end{flalign}

\textbf{Step III.} In this step, we show that the bonus term $b_{r,h}(\cdot,\widehat{\rm\Theta})$ in \Cref{alg1} can be well approximated by $b_{r,h}(\cdot,{\rm\Theta}_0)$. According to the definition of $b_{r,h}(\cdot,{\rm\Theta})$, we have
\begin{flalign}
&\lone{ b_{r,h}(x,\widehat{\rm\Theta}) - b_{r,h}(x,{\rm\Theta}_0)} \nonumber\\
&\quad =\lone{\left[{\rm\Phi}_h(x,\widehat{\rm\Theta})^\top {\rm\Sigma}^{-1}(\widehat{\rm\Theta}) {\rm\Phi}_h(x,\widehat{\rm\Theta})\right]^{1/2} - \left[{\rm\Phi}_h(x,{\rm\Theta}_0)^\top {\rm\Sigma}^{-1}({\rm\Theta}_0) {\rm\Phi}_h(x,{\rm\Theta}_0)\right]^{1/2}}\nonumber\\
&\quad \leq \lone{ {\rm\Phi}_h(x,\widehat{\rm\Theta})^\top {\rm\Sigma}^{-1}(\widehat{\rm\Theta}) {\rm\Phi}_h(x,\widehat{\rm\Theta}) - {\rm\Phi}_h(x,{\rm\Theta}_0)^\top {\rm\Sigma}^{-1}({\rm\Theta}_0) {\rm\Phi}_h(x,{\rm\Theta}_0)}^{1/2},\label{eq: 32}
\end{flalign}
where the last inequality follows from the fact that $\lone{\sqrt{x}-\sqrt{y}}\leq \sqrt{\lone{x-y}}$. We then proceed to bound the term $\lone{ {\rm\Phi}_h(x,\widehat{\rm\Theta})^\top {\rm\Sigma}^{-1}(\widehat{\rm\Theta}) {\rm\Phi}_h(x,\widehat{\rm\Theta}) - {\rm\Phi}_h(x,{\rm\Theta}_0)^\top {\rm\Sigma}^{-1}({\rm\Theta}_0) {\rm\Phi}_h(x,{\rm\Theta}_0)}$ as follows
\begin{flalign}
&\lone{{\rm\Phi}_h(x,\widehat{\rm\Theta})^\top {\rm\Sigma}^{-1}(\widehat{\rm\Theta}) {\rm\Phi}_h(x,\widehat{\rm\Theta}) - {\rm\Phi}_h(x,{\rm\Theta}_0)^\top {\rm\Sigma}^{-1}({\rm\Theta}_0) {\rm\Phi}_h(x,{\rm\Theta}_0)}\nonumber\\
&\quad=\lone{[{\rm\Phi}_h(x,\widehat{\rm\Theta}) - {\rm\Phi}_h(x,{\rm\Theta}_0)] ^\top {\rm\Sigma}^{-1}(\widehat{\rm\Theta}) {\rm\Phi}_h(x,\widehat{\rm\Theta})} + \lone{{\rm\Phi}_h(x,{\rm\Theta}_0)^\top ({\rm\Sigma}^{-1}(\widehat{\rm\Theta}) - {\rm\Sigma}^{-1}({\rm\Theta}_0)) {\rm\Phi}_h(x,\widehat{\rm\Theta})}\nonumber\\
&\quad\quad + \lone{{\rm\Phi}_h(x,{\rm\Theta}_0)^\top {\rm\Sigma}^{-1}({\rm\Theta}_0) ({\rm\Phi}_h(x,\widehat{\rm\Theta}) - {\rm\Phi}_h(x,{\rm\Theta}_0))}\nonumber\\
&\quad\leq[{\rm\Phi}_h(x,\widehat{\rm\Theta}) - {\rm\Phi}_h(x,{\rm\Theta}_0)] ^\top {\rm\Sigma}^{-1}(\widehat{\rm\Theta}) {\rm\Phi}_h(x,\widehat{\rm\Theta}) + \lone{{\rm\Phi}_h(x,{\rm\Theta}_0)^\top ({\rm\Sigma}^{-1}(\widehat{\rm\Theta}) - {\rm\Sigma}^{-1}({\rm\Theta}_0)) {\rm\Phi}_h(x,\widehat{\rm\Theta})}\nonumber\\
&\quad\quad + \lone{{\rm\Phi}_h(x,{\rm\Theta}_0)^\top {\rm\Sigma}^{-1}({\rm\Theta}_0) ({\rm\Phi}_h(x,\widehat{\rm\Theta}) - {\rm\Phi}_h(x,{\rm\Theta}_0))}\nonumber\\
&\quad\leq \ltwo{{\rm\Phi}_h(x,\widehat{\rm\Theta}) - {\rm\Phi}_h(x,{\rm\Theta}_0)} \ltwo{{\rm\Sigma}^{-1}(\widehat{\rm\Theta})} \ltwo{{\rm\Phi}_h(x,\widehat{\rm\Theta})} \nonumber\\
&\quad\quad+ \ltwo{{\rm\Phi}_h(x,{\rm\Theta}_0)} \ltwo{{\rm\Sigma}^{-1}(\widehat{\rm\Theta}) - {\rm\Sigma}^{-1}({\rm\Theta}_0)} \ltwo{{\rm\Phi}_h(x,\widehat{\rm\Theta})}\nonumber\\
&\quad\quad + \ltwo{{\rm\Phi}_h(x,{\rm\Theta}_0)} \ltwo{{\rm\Sigma}^{-1}({\rm\Theta}_0)} \ltwo{{\rm\Phi}_h(x,\widehat{\rm\Theta}) - {\rm\Phi}_h(x,{\rm\Theta}_0)}\nonumber\\
&\quad = \ltwo{{\phi}(x,\widehat{\theta}_h) - {\phi}(x,{\theta}_0)} \ltwo{{\rm\Sigma}^{-1}(\widehat{\rm\Theta})} \ltwo{{\phi}(x,\widehat{\theta}_h)} \nonumber\\
&\quad\quad+ \ltwo{{\phi}(x,{\theta}_0)} \ltwo{{\rm\Sigma}^{-1}(\widehat{\rm\Theta}) ({\rm\Sigma}(\widehat{\rm\Theta}) - {\rm\Sigma}({\rm\Theta}_0)) {\rm\Sigma}^{-1}({\rm\Theta}_0)} \ltwo{{\phi}(x,\widehat{\theta}_h)}\nonumber\\
&\quad\quad + \ltwo{{\phi}(x,{\theta}_0)} \ltwo{{\rm\Sigma}^{-1}({\rm\Theta}_0)} \ltwo{{\phi}(x,\widehat{\theta}_h) - {\phi}(x,{\theta}_0)}\nonumber\\
&\quad \leq \ltwo{{\phi}(x,\widehat{\theta}_h) - {\phi}(x,{\theta}_0)} \ltwo{{\rm\Sigma}^{-1}(\widehat{\rm\Theta})} \ltwo{{\phi}(x,\widehat{\theta}_h)} \nonumber\\
&\quad\quad+ \ltwo{{\phi}(x,{\theta}_0)} \ltwo{{\rm\Sigma}^{-1}(\widehat{\rm\Theta})} \ltwo{{\rm\Sigma}(\widehat{\rm\Theta}) - {\rm\Sigma}({\rm\Theta}_0)} \ltwo{{\rm\Sigma}^{-1}({\rm\Theta}_0)} \ltwo{{\phi}(x,\widehat{\theta}_h)}\nonumber\\
&\quad\quad + \ltwo{{\phi}(x,{\theta}_0)} \ltwo{{\rm\Sigma}^{-1}({\rm\Theta}_0)} \ltwo{{\phi}(x,\widehat{\theta}_h) - {\phi}(x,{\theta}_0)}\nonumber\\
&\quad \leq \frac{1}{\lambda_1}\ltwo{{\phi}(x,\widehat{\theta}_h) - {\phi}(x,{\theta}_0)}  \ltwo{{\phi}(x,\widehat{\theta}_h)} + \frac{1}{\lambda^2_1} \ltwo{{\phi}(x,{\theta}_0)} \ltwo{{\rm\Sigma}(\widehat{\rm\Theta}) - {\rm\Sigma}({\rm\Theta}_0)} \ltwo{{\phi}(x,\widehat{\theta}_h)}\nonumber\\
&\quad\quad + \frac{1}{\lambda_1}\ltwo{{\phi}(x,{\theta}_0)} \ltwo{{\phi}(x,\widehat{\theta}_h) - {\phi}(x,{\theta}_0)},\label{eq: 33}
\end{flalign}
where the last inequality follows from the fact that $\ltwo{{\rm\Sigma}({\rm\Theta})}\geq \lambda_1$ for any ${\rm\Theta}\in\mR^{2mdH}$. For ${\rm\Sigma}(\widehat{\rm\Theta}) - {\rm\Sigma}({\rm\Theta}_0)$, we have
\begin{flalign*}
{\rm\Sigma}(\widehat{\rm\Theta}) - {\rm\Sigma}({\rm\Theta}_0) &= \sum_{\tau\in\mcd}\left[{\rm\Phi}(\tau,\widehat{\rm\Theta}){\rm\Phi}(\tau,\widehat{\rm\Theta})^\top - {\rm\Phi}(\tau,{\rm\Theta}_0){\rm\Phi}(\tau,{\rm\Theta}_0)^\top\right]\nonumber\\
&=\sum_{\tau\in\mcd} \left[{\rm\Phi}(\tau,\widehat{\rm\Theta})({\rm\Phi}(\tau,\widehat{\rm\Theta}) - {\rm\Phi}(\tau,{\rm\Theta}_0))^\top + ({\rm\Phi}(\tau,\widehat{\rm\Theta}) - {\rm\Phi}(\tau,{\rm\Theta}_0)){\rm\Phi}(\tau,{\rm\Theta}_0)^\top\right],
\end{flalign*}
which implies
\begin{flalign}
&\ltwo{	{\rm\Sigma}(\widehat{\rm\Theta}) - {\rm\Sigma}({\rm\Theta}_0)}\nonumber\\
&\quad \leq \sum_{\tau\in\mcd} \left[\ltwo{{\rm\Phi}(\tau,\widehat{\rm\Theta})} \ltwo{{\rm\Phi}(\tau,\widehat{\rm\Theta}) - {\rm\Phi}(\tau,{\rm\Theta}_0)} + \ltwo{{\rm\Phi}(\tau,\widehat{\rm\Theta}) - {\rm\Phi}(\tau,{\rm\Theta}_0)} \ltwo{{\rm\Phi}(\tau,{\rm\Theta}_0)} \right].\label{eq: 36}
\end{flalign}
By definition of ${\rm\Phi}(\tau,{\rm\Theta})$, we have the followings hold for any ${\rm\Theta}, \widetilde{\rm\Theta}\in\mR^{2mdH}$
\begin{flalign}
\ltwo{{\rm\Phi}(\tau,\widehat{\rm\Theta})} &= \sqrt{\sum_{h\in[H]} \ltwo{{\phi}(x^\tau_h,\widehat{\theta}_h)}^2},\label{eq: 34}\\
\ltwo{{\rm\Phi}(\tau,\widehat{\rm\Theta}) - {\rm\Phi}(\tau,{\rm\Theta}_0) } &= \sqrt{\sum_{h\in[H]} \ltwo{{\phi}(x^\tau_h,{\theta}_h) - {\phi}(x^\tau_h,\tilde{\theta}_h) }^2}.\label{eq: 35}
\end{flalign}
Applying \Cref{lemma4} to \cref{eq: 34} and \cref{eq: 35}, we have the followings hold with probability at least $1-N^{-2}H^{-4}$
\begin{flalign*}
\ltwo{{\rm\Phi}(\tau,\widehat{\rm\Theta})} & \leq C_\phi\sqrt{H},\nonumber\\
\ltwo{{\rm\Phi}(\tau,\widehat{\rm\Theta}) - {\rm\Phi}(\tau,{\rm\Theta}_0) } &\leq \mathcal{O}\left(   \frac{C_\phi H^{5/6} N^{1/6} \sqrt{\log m} }{m^{1/6}\lambda_1^{1/6}} \right).
\end{flalign*}
Substituting the above two inequalities into \cref{eq: 36} yields
\begin{flalign}
\ltwo{	{\rm\Sigma}(\widehat{\rm\Theta}) - {\rm\Sigma}({\rm\Theta}_0)} \leq \mathcal{O}\left(   \frac{C^2_\phi H^{4/3} N^{1/6} \sqrt{\log m} }{m^{1/6}\lambda_1^{1/6}} \right).\label{eq: 37}
\end{flalign}
Finally, combining \cref{eq: 37} and \cref{eq: 13} and \cref{eq: 14} in \Cref{lemma4}, we can bound the right hand side of \cref{eq: 33} as
\begin{flalign}
&\lone{{\rm\Phi}_h(x,\widehat{\rm\Theta})^\top {\rm\Sigma}^{-1}(\widehat{\rm\Theta}) {\rm\Phi}_h(x,\widehat{\rm\Theta}) - {\rm\Phi}_h(x,\overline{\rm\Theta})^\top {\rm\Sigma}^{-1}(\overline{\rm\Theta}) {\rm\Phi}_h(x,\overline{\rm\Theta})}\nonumber\\
&\quad \leq  \frac{1}{\lambda_1}\ltwo{{\phi}(x,\widehat{\theta}_h) - {\phi}(x,{\theta}_0)}  \ltwo{{\phi}(x,\widehat{\theta}_h)} + \frac{1}{\lambda^2_1} \ltwo{{\phi}(x,{\theta}_0)} \ltwo{{\rm\Sigma}(\widehat{\rm\Theta}) - {\rm\Sigma}({\rm\Theta}_0)} \ltwo{{\phi}(x,\widehat{\theta}_h)}\nonumber\\
&\quad\quad + \frac{1}{\lambda_1}\ltwo{{\phi}(x,{\theta}_0)} \ltwo{{\phi}(x,\widehat{\theta}_h) - {\phi}(x,{\theta}_0)}\nonumber\\
&\quad \leq  \mathcal{O}\left(  \frac{C^2_\phi H^{1/3} N^{1/6} \sqrt{\log m} }{ m^{1/6} \lambda_1^{7/6}}  \right) + \mathcal{O}\left(   \frac{C^4_\phi H^{4/3} N^{1/6} \sqrt{\log m} }{m^{1/6}\lambda_1^{13/6}} \right)\nonumber\\
&\quad = \mathcal{O}\left(   \frac{C^4_\phi H^{4/3} N^{1/6} \sqrt{\log m} }{m^{1/6}\lambda_1^{13/6}} \right).\label{eq: 38}
\end{flalign}
Substituting \cref{eq: 38} into \cref{eq: 32}, we have the following holds with probability at least $1-N^{-2}H^{-4}$
\begin{flalign}
&\lone{ b_{r,h}(x,\widehat{\rm\Theta}) - b_{r,h}(x,{\rm\Theta}_0)} \nonumber\\
&\leq \lone{ {\rm\Phi}_h(x,\widehat{\rm\Theta})^\top {\rm\Sigma}^{-1}(\widehat{\rm\Theta}) {\rm\Phi}_h(x,\widehat{\rm\Theta}) - {\rm\Phi}_h(x,{\rm\Theta}_0)^\top {\rm\Sigma}^{-1}({\rm\Theta}_0) {\rm\Phi}_h(x,{\rm\Theta}_0)}^{1/2}\nonumber\\
&\leq \mathcal{O}\left(   \frac{C^2_\phi H^{2/3} N^{1/12} (\log m)^{1/4}}{m^{1/12}\lambda_1^{13/12}} \right).\label{eq: 109}
\end{flalign}

{\bf Step IV.} In Steps I and II, we show that the mean of the real reward $R_h(\cdot)$ can be well approximated by a linear function $\widetilde{R}_h(\cdot)$ with feature $\phi(\cdot,\theta_0)$ and our learned reward $\widehat{R}_h(\cdot)$ can be well approximated by a linear function $\overline{R}_h(\cdot)$ with feature $\phi(\cdot,\theta_0)$. In this step, we want to show that the reward estimation error $\lone{\widehat{R}_h(\cdot)-R_h(\cdot)}$ is approximately $\beta_1\cdot b_{r.h}(x,{\rm\Theta}_0)$ with an approximately chosen $\beta_1$.

Recall that $\widetilde{R}_h(\cdot)=\langle \phi(\cdot, \theta_0), \tilde{\theta}_h-\theta_0 \rangle $ and $ \overline{R}_h(\cdot) = \langle \phi(\cdot,\theta_0), \bar{\theta}_h - \theta_0 \rangle$. Considering the difference between $\overline{R}_h(\cdot)$ and $\widetilde{R}_h(\cdot)$, we have
\begin{flalign}
	\overline{R}_h(x) - \widetilde{R}_h(x) = \langle \phi(x,\theta_0), \bar{\theta}_h - \tilde{\theta}_h \rangle = \langle {\rm\Phi}_h(x,{\rm\Theta}_0), \overline{\rm\Theta} - \widetilde{\rm\Theta} \rangle,\label{eq: 73}
\end{flalign}
where the last equality follows from the definition of ${\rm\Phi}_h(\cdot,{\rm\Theta})$. By \cref{eq: 19}, we have
\begin{flalign}
	\overline{{\rm\Theta}} - {\rm\Theta}_0 = {\rm\Sigma}({\rm\Theta}_0)^{-1}\sum_{\tau\in\mcd} r(\tau){\rm\Phi}(\tau,{\rm\Theta}_0).\label{eq: 40}
\end{flalign}
By the definition of ${\rm\Sigma}({\rm\Theta})$, we have
\begin{flalign}
	\widetilde{\rm\Theta} - {\rm\Theta}_0 = {\rm\Sigma}({\rm\Theta}_0)^{-1}\left[ \lambda_1\left( \widetilde{\rm\Theta} - {\rm\Theta}_0 \right) + \left(\sum_{\tau\in\mcd}{\rm\Phi}(\tau,{\rm\Theta}_0){\rm\Phi}(\tau,{\rm\Theta}_0)^\top\right)\left( \widetilde{\rm\Theta} - {\rm\Theta}_0 \right)  \right].\label{eq: 41}
\end{flalign}
Subtracting \cref{eq: 41} from \cref{eq: 40}, we have
\begin{flalign}
	\overline{{\rm\Theta}} - \widetilde{\rm\Theta} = -\lambda_1 {\rm\Sigma}({\rm\Theta}_0)^{-1}\left( \widetilde{\rm\Theta} - {\rm\Theta}_0 \right) + {\rm\Sigma}({\rm\Theta}_0)^{-1} \sum_{\tau\in\mcd} {\rm\Phi}(\tau,{\rm\Theta}_0)\left[r(\tau) - \langle {\rm\Phi}(\tau,{\rm\Theta}_0), \widetilde{\rm\Theta} - {\rm\Theta}_0 \rangle \right].\label{eq: 42}
\end{flalign}
Taking inter product of both sides of \cref{eq: 42} with vector ${\rm\Phi}_h(x,{\rm\Theta}_0)$ and using the fact that $R(\tau)=\sum_{h\in[H]}R_h(x^\tau_h)$ and $\langle {\rm\Phi}(\tau,{\rm\Theta}_0), \widetilde{\rm\Theta} - {\rm\Theta}_0 \rangle = \sum_{h\in[H]} \langle \phi(x^\tau_h, \theta_0), \tilde{\theta}_h - \theta_0 \rangle$, we have
\begin{flalign}
	&\langle {\rm\Phi}_h(x,{\rm\Theta}_0), \overline{\rm\Theta} - \widetilde{\rm\Theta} \rangle\nonumber\\
	&\quad=-\lambda_1 {\rm\Phi}_h(x,{\rm\Theta}_0)^\top{\rm\Sigma}({\rm\Theta}_0)^{-1/2}{\rm\Sigma}({\rm\Theta}_0)^{-1/2}\left( \widetilde{\rm\Theta} - {\rm\Theta}_0 \right) \nonumber\\
	&\quad\quad+ {\rm\Phi}_h(x,{\rm\Theta}_0)^\top{\rm\Sigma}({\rm\Theta}_0)^{-1/2}{\rm\Sigma}({\rm\Theta}_0)^{-1/2} \left(\sum_{\tau\in\mcd} {\rm\Phi}(\tau,{\rm\Theta}_0)\left(r(\tau) - R(\tau)\right) \right)\nonumber\\
	&\quad\quad+ {\rm\Phi}_h(x,{\rm\Theta}_0)^\top{\rm\Sigma}({\rm\Theta}_0)^{-1/2}{\rm\Sigma}({\rm\Theta}_0)^{-1/2} \left(\sum_{\tau\in\mcd} {\rm\Phi}(\tau,{\rm\Theta}_0)\left[ \sum_{h\in[H]} \left(R_h(x^\tau_h) - \langle \phi(x^\tau_h, \theta_0), \tilde{\theta}_h - \theta_0 \rangle \right) \right]\right).\label{eq: 43}
\end{flalign}
Recall that $\widetilde{R}_h(x^\tau_h) = \langle \phi(x^\tau_h, \theta_0), \tilde{\theta}_h - \theta_0 \rangle$, and \cref{eq: 43} implies that
\begin{flalign}
	&\lone{\langle {\rm\Phi}_h(x,{\rm\Theta}_0), \overline{\rm\Theta} - \widetilde{\rm\Theta} \rangle} \nonumber\\
	&\leq \sqrt{\lambda_1} \ltwo{{\rm\Phi}_h(x,{\rm\Theta}_0)^\top{\rm\Sigma}({\rm\Theta}_0)^{-1/2}} \ltwo{\widetilde{\rm\Theta} - {\rm\Theta}_0}  \nonumber\\
	&\quad +  \ltwo{ {\rm\Phi}_h(x,{\rm\Theta}_0)^\top{\rm\Sigma}({\rm\Theta}_0)^{-1/2}} \lsigma{\sum_{\tau\in\mcd} {\rm\Phi}(\tau,{\rm\Theta}_0)\varepsilon(\tau)} \nonumber\\
	&\quad + \frac{1}{\sqrt{\lambda_1}} \ltwo{ {\rm\Phi}_h(x,{\rm\Theta}_0)^\top{\rm\Sigma}({\rm\Theta}_0)^{-1/2}} \left(\sum_{\tau\in\mcd} \ltwo{{\rm\Phi}(\tau,{\rm\Theta}_0)} \sum_{h\in[H]} \lone{R_h(x^\tau_h) - \widetilde{R}_h(x^\tau_h)  } \right),\label{eq: 44}
\end{flalign}
where we denote $\epsilon(\tau) = r(\tau) - R(\tau)$ and use the fact that $\ltwo{{\rm\Sigma}({\rm\Theta})^{-1/2}}\leq 1/\sqrt{\lambda_1}$ for any ${\rm\Theta}\in\mR^{2mdH}$. By the definition of $\widetilde{\rm\Theta}$ in Step I, we have
\begin{flalign}
	\ltwo{\widetilde{\rm\Theta} - {\rm\Theta}} &= \sqrt{\sum_{h\in[H],r\in[m]}\ltwo{ \tilde{\theta}_{h,r} - \theta^0_{h,r} }^2} \nonumber\\
	&= \sqrt{\sum_{h\in[H],r\in[m]}\ltwo{ \ell_{r} }^2} \nonumber\\
	&\leq r_2\sqrt{H/d}.\label{eq: 46}
\end{flalign}
By \Cref{lemma4} and \cref{eq: 47}, we have the followings hold with probability at least $1-N^{-2}H^{-4}$
\begin{flalign}
	\ltwo{{\rm\Phi}(\tau,{\rm\Theta}_0)} &\leq C_\phi\sqrt{H},\label{eq: 49}\\
	\lone{R_h(x^\tau_h) -  \widetilde{R}_h(x^\tau_h) } &\leq \frac{2(L_\sigma a_2 + C^2_\sigma a^2_2)\sqrt{\log N^2H^5}}{\sqrt{m}}.\label{eq: 48}
\end{flalign}

Substituting \cref{eq: 46}, \cref{eq: 49} and \cref{eq: 48} into \cref{eq: 44} and using the fact that $b_{r,h}(x,{\rm\Theta}_0) = \ltwo{{\rm\Phi}_h(x,{\rm\Theta}_0)^\top{\rm\Sigma}({\rm\Theta}_0)^{-1/2}}$, we have
\begin{flalign}
&\lone{\langle {\rm\Phi}_h(x,{\rm\Theta}_0), \overline{\rm\Theta} - \widetilde{\rm\Theta} \rangle} \nonumber\\
&\leq \left(a_2\sqrt{\frac{\lambda_1H}{d}} + \frac{2(L_\sigma a_2 + C^2_\sigma a^2_2)C_\phi N H^{3/2}\sqrt{\log HN}}{\sqrt{\lambda_1m}}+ \lsigma{\sum_{\tau\in\mcd} {\rm\Phi}(\tau,{\rm\Theta}_0)\varepsilon(\tau)} \right) b_{r,h}(x,{\rm\Theta}_0)  \label{eq: 50}.
\end{flalign}
Given that the events in \cref{eq: 49} and \cref{eq: 48} occur, applying \cref{eq: 70} in \Cref{lemma7}, we have the following holds with probability at least $1-N^{-2}H^{-4}$
\begin{flalign}
	&\lsigma{\sum_{\tau\in\mcd} {\rm\Phi}(\tau,{\rm\Theta}_0)\varepsilon(\tau)}^2\nonumber\\
	&\quad\leq H^2  \log\det(I+K^r_N/\lambda_1) + H^2 N(\lambda_1-1) + 4H^2\log(NH^2),\label{eq: 71}
\end{flalign}
where $K^r_N\in\mR^{N\times N}$ is the Gram matrix defined as
\begin{flalign*}
	K^r_N = [K_H(\tau_i, \tau_j)]_{i,j\in[N]}\in\mR^{N\times N}.
\end{flalign*}
Combining \cref{eq: 50} and \cref{eq: 71} and letting $\lambda_1 = 1+N^{-1}$ and $m$ be sufficiently large such that
\begin{flalign*}
	\frac{2(L_\sigma a_2 + C^2_\sigma a^2_2)C_\phi N H^{3/2}\sqrt{\log HN}}{\sqrt{\lambda_1m}}\leq  a_2\sqrt{\frac{\lambda_1H}{d}},
\end{flalign*}
we have the following holds with probability at least $1-N^{-2}H^{-2}$
\begin{flalign}
&\lone{\langle {\rm\Phi}_h(x,{\rm\Theta}_0), \overline{\rm\Theta} - \widetilde{\rm\Theta} \rangle} \nonumber\\
&\leq \left( 2a_2\sqrt{\frac{\lambda_1H}{d}} + \sqrt{H^2  \log\det\left(I+\frac{K^r_N}{\lambda_1}\right) + H^2+ 4H^2\log(NH^2)} \right) b_{r,h}(x,{\rm\Theta}_0)\nonumber\\
&\leq \underbrace{H\left(\frac{4 a^2_2\lambda_1}{d} +   2\log\det\left(I+\frac{K^r_N}{\lambda_1}\right) + 10\log(NH^2) \right)^{1/2}}_{\beta_1} b_{r,h}(x,{\rm\Theta}_0),\label{eq: 72}
\end{flalign}
where in the last inequality we use the fact that $a+b\leq\sqrt{2(a^2+b^2)}$.
Substituting \cref{eq: 72} into \cref{eq: 73}, we have the following holds with probability at least $1-N^{-2}H^{-4}$
\begin{flalign}
	\lone{\overline{R}_h(x) - \widetilde{R}_h(x)}\leq \beta_1\cdot b_{r,h}(x,{\rm\Theta}_0),
\end{flalign}
where
\begin{flalign*}
	\beta_1 = H\left(\frac{4 a^2_2\lambda_1}{d} +   2\log\det\left(I+\frac{K^r_N}{\lambda_1}\right) + 10\log(NH^2) \right)^{1/2}.
\end{flalign*}
Next, we proceed to bound the reward estimation error $\lone{R_h(x) - \widehat{R}_h(x)}$. By the triangle inequality, we have
\begin{flalign}
	\lone{R_h(x) - \widehat{R}_h(x)} &= \lone{R_h(x) - \widetilde{R}_h(x) + \widetilde{R}_h(x) - \overline{R}_h(x) + \overline{R}_h(x) - \widehat{R}_h(x)}\nonumber\\
	&\leq \lone{R_h(x) - \widetilde{R}_h(x)} + \lone{\widetilde{R}_h(x) - \overline{R}_h(x)} + \lone{\overline{R}_h(x) - \widehat{R}_h(x)}\nonumber\\
	&\overset{(i)}{\leq} \frac{2(L_\sigma a_2 + C^2_\sigma a^2_2)\sqrt{\log(HN)}}{\sqrt{m}} +  \mathcal{O}\left( \frac{C^3_\phi H^{17/6} N^{5/3}\sqrt{\log(m)}}{m^{1/6}\lambda_1^{5/3}} \right)\nonumber\\
	&\quad +  \beta_1\cdot b_{r,h}(x,{\rm\Theta}_0)\nonumber\\
	&\overset{(ii)}{\leq} \mathcal{O}\left( \frac{H^{17/6} N^{5/3}\sqrt{\log(m)}}{m^{1/6}} \right) +  \beta_1\cdot b_{r,h}(x,{\rm\Theta}_0).\label{eq: 119}
\end{flalign}
where $(i)$ follows from \cref{eq: 47} and \cref{eq: 39} and $(ii)$ follows from the fact that $\lambda_1=1+1/N$ and $L_\sigma,C_\sigma,a_2, C_\phi = \mathcal{O}(1)$.

\subsection{Uncertainty of Estimated Transition Value Function $({\widehat{\mP}_h\widehat{V}_{h+1}   })(\cdot)$}
In this subsection, we aim to bound the estimation error of the transition value function $\lone{({\widehat{\mP}_h\widehat{V}})(\cdot) - ({{\mP}_h\widehat{V}})(\cdot)}$.
For each $h\in[H]$, since $\widehat{w}_h$ is the global minimizer of the loss function $L^h_v(w_h)$ defined in \cref{eq: 74}, we have
\begin{flalign}
L^h_v(\widehat{w}_h) &= \sum_{\tau\in\mcd}\left(\widehat{V}_{h+1}(s^\tau_{h+1}) - f(x^\tau_h,\widehat{w}_h)\right)^2+\lambda_2\cdot \ltwo{\widehat{w}_h-w_0}^2\nonumber\\
&\leq L^h_v(\widehat{w}_0) = \sum_{\tau\in\mcd}\left(\widehat{V}_{h+1}(s^\tau_{h+1}) - f(x^\tau_h,{w}_0)\right)^2 \overset{(i)}{=} \sum_{\tau\in\mcd}\left(\widehat{V}_{h+1}(s^\tau_{h+1})\right)^2 \overset{(ii)}{\leq} NH^2,\label{eq: 75}
\end{flalign}
where $(i)$ follows from the fact that $f(x,w_0)=0$ for all $x\in\mcx$ and $(ii)$ follows from the fact that $\widehat{V}_{h}(s)\leq H$ for any $h\in[H]$, $s\in\mcs$, and $\lone{\mcd}=N$. Note that \cref{eq: 75} implies
\begin{flalign}
\ltwo{\widehat{w}_h-w_0}^2\leq NH^2/\lambda_2,\quad\forall h\in[H].\label{eq: 76}
\end{flalign}
Hence, each $\widehat{w}_h$ belongs to the Euclidean ball $\mcB_w=\{w\in\mR^{2md}: \ltwo{w-w_0}\leq H\sqrt{N/\lambda_2} \}$, where $\lambda_2$ does not depend on the network width $m$. Since the radius of $\mcB_w$ does not depend on $m$, when $m$ is sufficient large, it can be shown that $f(\cdot,w)$ is close to its linearization at $w_0$, i.e.,
\begin{flalign*}
f(\cdot,w)\approx \langle \phi(\cdot,w_0), w-w_0\rangle,\quad\forall w\in\mcB_\theta,
\end{flalign*}
where $\phi(\cdot,w)=\nabla_w f(\cdot,w)$. Furthermore, according to \Cref{ass1}, there exists a function $\ell_{A_1,A_2}:\mR^d\rightarrow\mR^d$ such that $(\mP_h \widehat{V}_{h+1}  )(\cdot)$ satisfies
\begin{flalign}\label{eq: 77}
(\mP_h \widehat{V}_{h+1}  )(x) = \int_{\mR^d}\sigma^\prime(\theta^\top x) \cdot x^\top \ell_v(w)dp(w),
\end{flalign}
where $\sup_w\ltwo{\ell_v(w)}\leq A_1$, $\sup_w(\ltwo{\ell_v(w)}/p(w))\leq A_2$ and $p$ is the density of $N(0,I_d/d)$. We then proceed to bound the difference between $({\widehat{\mP}_h\widehat{V}})(\cdot)$ and $({{\mP}_h\widehat{V}})(\cdot)$.

{\bf Step I.} In the first step, we show that the transition value function $({\mP}_h\widehat{V}_{h+1})(\cdot)$ can be well-approximated by a linear function with the feature vector $\phi(\cdot,\theta_0)$. \Cref{lemma3} in \Cref{sc: supplemma} implies that with probability at least $1-N^{-2}H^{-4}$ over the randomness of initialization $w_0$, for all $h\in[H]$, there exists a function $(\widetilde{\mP}_h\widehat{V}_{h+1})(\cdot):\mcx\rightarrow\mR$ satisfying
\begin{flalign}
\sup_{x\in\mcx}\lone{(\widetilde{\mP}_h\widehat{V}_{h+1})(x) - ({\mP}_h\widehat{V}_{h+1})(x)}\leq \frac{2(L_\sigma A_2 + C^2_\sigma A^2_2)\sqrt{\log(N^2H^5)}}{\sqrt{m}}, \label{eq: 78}
\end{flalign}
where $(\widetilde{\mP}_h\widehat{V}_{h+1})(\cdot)$ is a finite-width neural network which can be written as
\begin{flalign*}
(\widetilde{\mP}_h\widehat{V}_{h+1})(x) = \frac{1}{\sqrt{m}}\sum_{r=1}^{m}\sigma^\prime(w^{\top}_{0,r} x)\cdot x^\top \ell^v_r,
\end{flalign*}
where $\ltwo{\ell^v_r}\leq A_2/\sqrt{dm}$ for all $r\in[m]$ and $w_0=[w_{0,1},\cdots,w_{0,m}]$ is generated via the symmetric initialization scheme. Following steps similar to those in \cref{eq: 79}, we can show that there exists a vector $\tilde{w}_h\in\mR^{2md}$ such that
\begin{flalign*}
	(\widetilde{\mP}_h\widehat{V}_{h+1})(\cdot) = \langle \phi(\cdot,w_0), \tilde{w}_h -w_0\rangle,
\end{flalign*}
where $\tilde{w}_h=[\tilde{w}^{\top}_{h,1},\cdots.\tilde{w}^{\top}_{h,2m}]^\top$, in which $\tilde{w}^{\top}_{h,r}=w_{0,r} + b_{0,r}\cdot \ell^v_r/\sqrt{2}$ for all $r\in\{1,\cdots,m\}$ and $\tilde{w}^{\top}_{h,r}=w_{0,r} + b_{0,r}\cdot \ell^v_{r-m}/\sqrt{2}$ for all $r\in\{m+1,\cdots,2m\}$. Moreover, since $\tilde{w}_{h,r} - w_{0,r} =  b_{0,r}\cdot \ell^v_r/\sqrt{2}$ or $ b_{0,r}\cdot \ell^v_{r-m}/\sqrt{2}$, we have 
\begin{flalign*}
\ltwo{\tilde{w}_h - w_0}\leq A_2\sqrt{2dm}.
\end{flalign*}

{\bf Step II.} In the second step, we show that with high probability, the estimation of the transition value function $ (\widehat{\mP}_h\widehat{V}_{h+1})(\cdot)$ in \Cref{alg1} can be well-approximated by its counterpart learned with a linear function with the feature $\phi(\cdot,\theta_0)$.

Consider the following least-square loss function
\begin{flalign}
\bar{L}^h_v(w_h) &= \sum_{\tau\in\mcd}\left(\widehat{V}_{h+1}(s^\tau_{h+1}) - \langle \phi(x^\tau_h,w_0), w_h-w_0  \rangle \right)^2+\lambda_2\cdot \ltwo{w_h-w_0}^2.
\end{flalign}
The global minimizer of $\bar{L}^h_v(w_h)$ is defined as
\begin{flalign}\label{eq: 80}
\overline{w}_h=\argmin_{w\in\mR^{2dm}}\bar{L}^h_v(w).
\end{flalign}
We define $(\overline{\mP}_h\widehat{V}_{h+1})(\cdot)=\langle \phi(\cdot,w_0), \overline{w}_h - w_0 \rangle$ for all $h\in[H]$. Then, in a manner similar to the construction of $\widehat{Q}_h(\cdot)$ in \Cref{alg1}, we combine $\overline{R}_h(\cdot)$ in \cref{eq: 16}, $b_{r,h}(\cdot, {\rm\Theta}_0)$, $\overline{\mP}_h\widehat{V}_{h+1}(\cdot)$ and $b_{v,h}(\cdot,w_0)$ to construct $\overline{Q}_h(\cdot):\mcx\rightarrow \mR$ as
\begin{flalign}
	\overline{Q}_h(\cdot) = \min\{ \overline{R}_h(\cdot)+ (\overline{\mP}_h\widehat{V}_{h+1})(\cdot) - \beta_1\cdot b_{r,h}(\cdot, {\rm\Theta}_0) - \beta_2\cdot b_{v,h}(\cdot,w_0) , H \}^{+}.\label{eq: 102}
\end{flalign}
Moreover, we define the estimated optimal state value function as
\begin{flalign}
	\overline{V}_h(\cdot) = \max_{a\in\mca} \overline{Q}_h(\cdot,a).\label{eq: 103}
\end{flalign}
We then proceed to bound the estimation error $\lone{(\widehat{\mP}_h\widehat{V}_{h+1})(x) - \overline{\mP}_h\widehat{V}_{h+1})(x)}$ as follows
\begin{flalign}
\lone{(\widehat{\mP}_h\widehat{V}_{h+1})(x) - \overline{\mP}_h\widehat{V}_{h+1})(x)} & = \lone{f(x,\widehat{w}_h) - \langle \phi(x,w_0), \overline{w}_h - w_0 \rangle}\nonumber\\
& = \lone{f(x,\widehat{w}_h) - \langle \phi(x,w_0), \widehat{w}_h - w_0 \rangle - \langle \phi(\cdot,w_0), \widehat{w}_h - \overline{w}_h \rangle}\nonumber\\
& \leq \lone{f(x,\widehat{w}_h) - \langle \phi(x,w_0), \widehat{w}_h - w_0 \rangle} + \lone{\langle \phi(\cdot,w_0), \widehat{w}_h - \overline{w}_h \rangle}\nonumber\\
& \leq \underbrace{\lone{f(x,\widehat{w}_h) - \langle \phi(\cdot,w_0), \widehat{w}_h - w_0 \rangle}}_{(i)} + \underbrace{\ltwo{ \phi(x,w_0)}\ltwo{\widehat{w}_h - \overline{w}_h}}_{(ii)}.\nonumber
\end{flalign}
We then bound the term $(i)$ and term $(ii)$ in the above inequality. According to \Cref{lemma4} and the fact that $\ltwo{\widehat{w}_h-w_0}\leq H\sqrt{N/\lambda_2}$, we have the followings hold with probability at least $1-N^{-2}H^{-4}$
\begin{flalign}
(i)&\leq \mathcal{O}\left(C_\phi \left(\frac{N^2H^4}{\lambda^2_2\sqrt{m}}\right)^{1/3}\sqrt{\log m}\right),\label{eq: 81}\\
(ii)&\leq  C_\phi \ltwo{\widehat{w}_h - \overline{w}_h}.\label{eq: 82}
\end{flalign}
We then proceed to bound $\ltwo{\widehat{w}_h - \overline{w}_h}$. Consider the minimization problem defined in \cref{eq: 3} and \cref{eq: 16}. By the first order optimality condition, we have
\begin{flalign}
\lambda_2\left( \widehat{w}_h - w_0 \right) &= \sum_{\tau\in\mcd}\left( \widehat{V}_{h+1}(s^\tau_{h+1}) -  f(x^\tau_h,\widehat{w}_h) \right){\phi}(x^\tau_h,\widehat{w}_h)\label{eq: 83}\\
\lambda_2\left( \overline{w}_h - w_0 \right) &= \sum_{\tau\in\mcd}\left( \widehat{V}_{h+1}(s^\tau_{h+1}) -  \langle \phi(x^\tau_h,w_0), \overline{w}_h - w_0 \rangle \right){\phi}(x^\tau_h,{w}_0)\label{eq: 84}.
\end{flalign}
Note that \cref{eq: 84} implies
\begin{flalign}\label{eq: 85}
{\rm\Lambda}_h(w_0)\left( \overline{w}_h - w_0 \right) = \sum_{\tau\in\mcd}\widehat{V}_{h+1}(s^\tau_{h+1}){\phi}(x^\tau_h,{w}_0).
\end{flalign}
Adding the term $\sum_{\tau\in\mcd} \langle {\phi}(x^\tau_h,{w}_0), \widehat{w}_h-{w}_0 \rangle {\phi}(x^\tau_h,{w}_0)$ on both sides of \cref{eq: 83} yields
\begin{flalign}\label{eq: 86}
{\rm\Lambda}_h(w_0)\left( \widehat{w}_h - w_0 \right) &= \sum_{\tau\in\mcd} \widehat{V}_{h+1}(s^\tau_{h+1}){\phi}(x^\tau_h,\widehat{w}_h) \nonumber\\
& \quad + \sum_{\tau\in\mcd} \left[ \langle {\phi}(x^\tau_h,{w}_0), \widehat{w}_h-{w}_0 \rangle {\phi}(x^\tau_h,{w}_0) - f(x^\tau_h,\widehat{w}_h) {\phi}(x^\tau_h,\widehat{w}_h) \right].
\end{flalign}
Then, by subtracting \cref{eq: 85} from \cref{eq: 86}, we have
\begin{flalign}
{\rm\Lambda}_h(w_0)( \widehat{w}_h -  \overline{w}_h)&=\sum_{\tau\in\mcd} \widehat{V}_{h+1}(s^\tau_{h+1}) \left( {\phi}(x^\tau_h,\widehat{w}_h) - {\phi}(x^\tau_h,{w}_0) \right)\nonumber\\
&\quad + \sum_{\tau\in\mcd} \left[ \langle {\phi}(x^\tau_h,{w}_0), \widehat{w}_h-{w}_0 \rangle {\phi}(x^\tau_h,{w}_0) - f(x^\tau_h,\widehat{w}_h) {\phi}(x^\tau_h,\widehat{w}_h) \right],\label{eq: 87}
\end{flalign}
which implies
\begin{flalign}
\ltwo{{\rm\Lambda}_h(w_0)( \widehat{w}_h -  \overline{w}_h)}&=\sum_{\tau\in\mcd} \widehat{V}_{h+1}(s^\tau_{h+1}) \ltwo{{\phi}(x^\tau_h,\widehat{w}_h) - {\phi}(x^\tau_h,{w}_0)} \nonumber\\
&\quad + \sum_{\tau\in\mcd}  \ltwo{\langle {\phi}(x^\tau_h,{w}_0), \widehat{w}_h-{w}_0 \rangle {\phi}(x^\tau_h,{w}_0) - f(x^\tau_h,\widehat{w}_h) {\phi}(x^\tau_h,\widehat{w}_h)}.\label{eq: 88}
\end{flalign}
To bound the term $ \ltwo{\langle {\phi}(x^\tau_h,{w}_0), \widehat{w}_h-{w}_0 \rangle {\phi}(x^\tau_h,{w}_0) - f(x^\tau_h,\widehat{w}_h) {\phi}(x^\tau_h,\widehat{w}_h)}$, we proceed as follows
\begin{flalign*}
	&\langle {\phi}(x^\tau_h,{w}_0), \widehat{w}_h-{w}_0 \rangle {\phi}(x^\tau_h,{w}_0) - f(x^\tau_h,\widehat{w}_h) {\phi}(x^\tau_h,\widehat{w}_h)\nonumber\\
	& = \langle {\phi}(x^\tau_h,{w}_0), \widehat{w}_h-{w}_0 \rangle ({\phi}(x^\tau_h,{w}_0) - {\phi}(x^\tau_h,\widehat{w}_h)) - (\langle {\phi}(x^\tau_h,{w}_0), \widehat{w}_h-{w}_0 \rangle -  f(x^\tau_h,\widehat{w}_h)) {\phi}(x^\tau_h,\widehat{w}_h)
\end{flalign*}
which implies
\begin{flalign}
	&\ltwo{\langle {\phi}(x^\tau_h,{w}_0), \widehat{w}_h-{w}_0 \rangle {\phi}(x^\tau_h,{w}_0) - f(x^\tau_h,\widehat{w}_h) {\phi}(x^\tau_h,\widehat{w}_h)}\nonumber\\
	&\leq   \ltwo{{\phi}(x^\tau_h,{w}_0) } \ltwo{\widehat{w}_h-{w}_0}  \ltwo{{\phi}(x^\tau_h,{w}_0) - {\phi}(x^\tau_h,\widehat{w}_h)} \nonumber\\
	&\quad + \lone{\langle {\phi}(x^\tau_h,{w}_0), \widehat{w}_h-{w}_0 \rangle -  f(x^\tau_h,\widehat{w}_h)} \ltwo{{\phi}(x^\tau_h,\widehat{w}_h)}.\label{eq: 89}
\end{flalign}
According to \Cref{lemma4} and the fact that $\ltwo{\widehat{w}_h-w_0}\leq H\sqrt{N/\lambda_2}$, we have the followings hold for all $h\in[H]$ and $\tau\in\mcd$ with probability at least $1-N^{-2}H^{-4}$
\begin{flalign}
&\ltwo{\phi(x^\tau_h,w_0)} \leq C_\phi\quad\text{and}\quad	\ltwo{\phi(x^\tau_h,\widehat{w}_h)} \leq C_\phi,\label{eq: 90}\\
&\ltwo{{\phi} (x^\tau_h,w_0) - {\phi} (x^\tau_h,\widehat{w}_h)}\leq \mathcal{O}\left( C_\phi \left( \frac{H\sqrt{N/\lambda_2}}{\sqrt{m}}\right)^{1/3}\sqrt{\log m} \right),\label{eq: 91}\\
&\lone{\langle {\phi} (x^\tau_h,{w}_0), \widehat{w}_h-{w}_0 \rangle - f(x^\tau_h,\widehat{w}_h)}\leq \mathcal{O}\left(C_\phi \left(\frac{H^4N^2/\lambda^2_2}{\sqrt{m}}\right)^{1/3}\sqrt{\log m}\right).\label{eq: 92}
\end{flalign}
Substituting \cref{eq: 90}, \cref{eq: 91} and \cref{eq: 92} into \cref{eq: 89}, we can obtain
\begin{flalign}
	&\ltwo{\langle {\phi}(x^\tau_h,{w}_0), \widehat{w}_h-{w}_0 \rangle {\phi}(x^\tau_h,{w}_0) - f(x^\tau_h,\widehat{w}_h) {\phi}(x^\tau_h,\widehat{w}_h)}\nonumber\\
	&\leq (H\sqrt{N/\lambda_2}) \mathcal{O}\left( C^2_\phi \left( \frac{H\sqrt{N/\lambda_2}}{\sqrt{m}}\right)^{1/3}\sqrt{\log m} \right) + \mathcal{O}\left(C^2_\phi \left(\frac{H^4N^2/\lambda^2_2}{\sqrt{m}}\right)^{1/3}\sqrt{\log m}\right)\nonumber\\
	&\leq \mathcal{O}\left( \frac{C^2_\phi H^{4/3} N^{2/3}\sqrt{\log(m)}}{m^{1/6}\lambda_2^{2/3}} \right).\label{eq: 93}
\end{flalign}
Substituting \cref{eq: 93} into \cref{eq: 88}, we have the following holds with probability at least $1-N^{-2}H^{-4}$
\begin{flalign}
&\ltwo{{\rm\Lambda}_h(w_0)( \widehat{w}_h -  \overline{w}_h)}\nonumber\\
&\leq NH \cdot \mathcal{O}\left( C_\phi \left( \frac{H\sqrt{N/\lambda_2}}{\sqrt{m}}\right)^{1/3}\sqrt{\log m} \right) + N \cdot \mathcal{O}\left( \frac{C^2_\phi H^{4/3} N^{2/3}\sqrt{\log(m)}}{m^{1/6}\lambda_2^{2/3}} \right)\nonumber\\
&\leq \mathcal{O}\left( \frac{C^2_\phi H^{4/3} N^{5/3}\sqrt{\log(m)}}{m^{1/6}\lambda_2^{2/3}} \right)
\end{flalign}
where we use the fact that $\widehat{V}_{h+1}(s)\leq H$ for any $s\in\mcs$. We then proceed to bound $	\ltwo{\widehat{w}_h -  \overline{w}_h}$ as follows
\begin{flalign}
\ltwo{\widehat{w}_h -  \overline{w}_h}&= \ltwo{{\rm\Lambda}^{-1}(w_0){\rm\Lambda}(w_0)(\widehat{w}_h -  \overline{w}_h)}\nonumber\\
&\leq \ltwo{{\rm\Lambda}^{-1}(w_0)}  \ltwo{{\rm\Lambda}(w_0)(\widehat{w}_h -  \overline{w}_h)}\nonumber\\
&\leq \frac{1}{\lambda_2} \cdot \mathcal{O}\left( \frac{C^2_\phi H^{4/3} N^{5/3}\sqrt{\log(m)}}{m^{1/6}\lambda_2^{2/3}} \right)\nonumber\\
&\leq  \mathcal{O}\left( \frac{C^2_\phi H^{4/3} N^{5/3}\sqrt{\log(m)}}{m^{1/6}\lambda_2^{5/3}} \right).\label{eq: 94}
\end{flalign}
Substituting \cref{eq: 94} into \cref{eq: 82} yields
\begin{flalign}
(ii) \leq \mathcal{O}\left( \frac{C^3_\phi H^{4/3} N^{5/3}\sqrt{\log(m)}}{m^{1/6}\lambda_2^{5/3}} \right).\label{eq: 95}
\end{flalign}
Taking summation of the upper bounds of $(i)$ in \cref{eq: 81} and $(ii)$ in \cref{eq: 95}, respectively, we have the following holds for all $x\in\mcx$ with probability at least $1-N^{-2}H^{-4}$
\begin{flalign}
\lone{(\widehat{\mP}_h\widehat{V}_{h+1})(x) - \overline{\mP}_h\widehat{V}_{h+1})(x)} &\leq (i) + (ii) \nonumber\\
&\leq  \mathcal{O}\left(C_\phi \left(\frac{N^2H^4}{\lambda^2_2\sqrt{m}}\right)^{1/3}\sqrt{\log m}\right) + \mathcal{O}\left( \frac{C^3_\phi H^{4/3} N^{5/3}\sqrt{\log(m)}}{m^{1/6}\lambda_2^{5/3}} \right)\nonumber\\
&\leq \mathcal{O}\left( \frac{C^3_\phi H^{4/3} N^{5/3}\sqrt{\log(m)}}{m^{1/6}\lambda_2^{5/3}} \right).\label{eq: 96}
\end{flalign}

\textbf{Step III.} In this step, we show that the bonus term $b_{v,h}(\cdot,\widehat{w}_h)$ in \Cref{alg1} can be well approximated by $b_{v,h}(\cdot,{w}_0)$. By the definition of $b_{v,h}(\cdot,w)$, we have
\begin{flalign}
&\lone{ b_{v,h}(x,\widehat{w}_h) - b_{v,h}(x,{w}_0)} \nonumber\\
&\quad =\lone{\left[{\phi}_h(x,\widehat{w}_h)^\top {\rm\Lambda}^{-1}(\widehat{w}_h) {\phi}_h(x,\widehat{w}_h)\right]^{1/2} - \left[{\phi}_h(x,w_0)^\top {\rm\Lambda}^{-1}(w_0) {\phi}_h(x,w_0)\right]^{1/2}}\nonumber\\
&\quad \leq \lone{ {\phi}_h(x,\widehat{w}_h)^\top {\rm\Lambda}^{-1}(\widehat{w}_h) {\phi}_h(x,\widehat{w}_h) - {\phi}_h(x,w_0)^\top {\rm\Lambda}^{-1}({w}_0) {\phi}_h(x,{w}_0)}^{1/2},\label{eq: 104}
\end{flalign}
where the last inequality follows from the fact that $\lone{\sqrt{x}-\sqrt{y}}\leq \sqrt{\lone{x-y}}$. Following steps similar to those in \cref{eq: 33}, we can obtain
\begin{flalign}
	&\lone{ {\phi}_h(x,\widehat{w}_h)^\top {\rm\Lambda}^{-1}(\widehat{w}_h) {\phi}_h(x,\widehat{w}_h) - {\phi}_h(x,w_0)^\top {\rm\Lambda}^{-1}({w}_0) {\phi}_h(x,{w}_0)}\nonumber\\
	&\quad \leq \ltwo{{\phi}(x,\widehat{w}_h) - {\phi}(x,{w}_0)} \ltwo{{\rm\Lambda}^{-1}(\widehat{w}_h)} \ltwo{{\phi}(x,\widehat{w}_h)} \nonumber\\
	&\quad\quad+ \ltwo{{\phi}(x, w_0)} \ltwo{{\rm\Lambda}^{-1}(\widehat{w}_h)} \ltwo{{\rm\Lambda}(\widehat{w}_h) - {\rm\Lambda}({w}_0)} \ltwo{{\rm\Lambda}^{-1}({w}_0)} \ltwo{{\phi}(x,\widehat{w}_h)}\nonumber\\
	&\quad\quad + \ltwo{{\phi}(x,{w}_0)} \ltwo{{\rm\Lambda}^{-1}({w}_0)} \ltwo{{\phi}(x,\widehat{w}_h) - {\phi}(x,{w}_0)}\nonumber\\
	&\quad \leq \frac{1}{\lambda_2}\ltwo{{\phi}(x,\widehat{w}_h) - {\phi}(x,{w}_0)}  \ltwo{{\phi}(x,\widehat{w}_h)} + \frac{1}{\lambda^2_2} \ltwo{{\phi}(x,{w}_0)} \ltwo{{\rm\Lambda}(\widehat{w}_h) - {\rm\Lambda}({w}_0)} \ltwo{{\phi}(x,\widehat{w}_h)}\nonumber\\
	&\quad\quad + \frac{1}{\lambda_2}\ltwo{{\phi}(x,w_0)} \ltwo{{\phi}(x,\widehat{w}_h) - {\phi}(x,{w}_0)},\label{eq: 107}
\end{flalign}
where the last inequality follows from the fact that $\ltwo{{\rm\Lambda}({w})}\geq \lambda_2$ for any ${w}\in\mR^{2md}$. For ${\rm\Lambda}(\widehat{w}) - {\rm\Lambda}({w}_0)$, by following steps similar to those in \cref{eq: 36}, we can obtain
\begin{flalign}
&\ltwo{	{\rm\Lambda}(\widehat{w}_h) - {\rm\Lambda}({w}_0)}\nonumber\\
&\quad \leq \sum_{\tau\in\mcd} \left[\ltwo{{\phi}(x^\tau_h,\widehat{w}_h)} \ltwo{{\phi}(x^\tau_h,\widehat{w}_h) - {\phi}(x^\tau_h,{w}_0)} + \ltwo{{\phi}(x^\tau_h,\widehat{w}_h) - {\phi}(x^\tau_h,{w}_0)} \ltwo{{\phi}(x^\tau_h,{w}_0)} \right].\label{eq: 105}
\end{flalign}
Applying \Cref{lemma4} to \cref{eq: 105}, we have the followings hold with probability at least $1-N^{-2}H^{-4}$
\begin{flalign*}
\ltwo{{\phi}(x^\tau_h,\widehat{w}_h)} & \leq C_\phi,\nonumber\\
\ltwo{{\phi}(x^\tau_h,\widehat{w}_h) - {\phi}(x^\tau_h,{w}_0) } &\leq \mathcal{O}\left(   \frac{C_\phi H^{1/3} N^{1/6} \sqrt{\log m} }{m^{1/6}\lambda_2^{1/6}} \right).
\end{flalign*}
Substituting the above two inequalities into \cref{eq: 36} yields
\begin{flalign}
\ltwo{	{\rm\Lambda}(\widehat{w}_h) - {\rm\Lambda}({w}_0)} \leq \mathcal{O}\left(   \frac{C^2_\phi H^{1/3} N^{1/6} \sqrt{\log m} }{m^{1/6}\lambda_2^{1/6}} \right).\label{eq: 106}
\end{flalign}
Finally, combining \cref{eq: 106} and \cref{eq: 13} and \cref{eq: 14} in \Cref{lemma4}, the right hand side of \cref{eq: 107} can be bounded by
\begin{flalign}
&\lone{ {\phi}_h(x,\widehat{w}_h)^\top {\rm\Lambda}^{-1}(\widehat{w}_h) {\phi}_h(x,\widehat{w}_h) - {\phi}_h(x,w_0)^\top {\rm\Lambda}^{-1}({w}_0) {\phi}_h(x,{w}_0)}\nonumber\\
&\quad \leq \frac{1}{\lambda_2}\ltwo{{\phi}(x,\widehat{w}_h) - {\phi}(x,{w}_0)}  \ltwo{{\phi}(x,\widehat{w}_h)} + \frac{1}{\lambda^2_2} \ltwo{{\phi}(x,{w}_0)} \ltwo{{\rm\Lambda}(\widehat{w}_h) - {\rm\Lambda}({w}_0)} \ltwo{{\phi}(x,\widehat{w}_h)}\nonumber\\
&\quad\quad + \frac{1}{\lambda_2}\ltwo{{\phi}(x,w_0)} \ltwo{{\phi}(x,\widehat{w}_h) - {\phi}(x,{w}_0)}\nonumber\\
&\quad\leq \mathcal{O}\left(   \frac{C^2_\phi H^{1/3} N^{1/6} \sqrt{\log m} }{m^{1/6}\lambda_2^{7/6}} \right) + \mathcal{O}\left(   \frac{C^4_\phi H^{1/3} N^{1/6} \sqrt{\log m} }{m^{1/6}\lambda_2^{13/6}} \right).\nonumber
\end{flalign}
By \cref{eq: 104}, we have the following holds with probability at least $1-N^{-2}H^{-4}$
\begin{flalign}
&\lone{ b_{v,h}(x,\widehat{w}_h) - b_{v,h}(x,{w}_0)} \nonumber\\
&\leq\lone{ {\phi}_h(x,\widehat{w}_h)^\top {\rm\Lambda}^{-1}(\widehat{w}_h) {\phi}_h(x,\widehat{w}_h) - {\phi}_h(x,w_0)^\top {\rm\Lambda}^{-1}({w}_0) {\phi}_h(x,{w}_0)}^{1/2}\nonumber\\
&\leq \mathcal{O}\left(   \frac{C^2_\phi H^{1/6} N^{1/12} (\log m)^{1/4}}{m^{1/12}\lambda_2^{13/12}} \right).\label{eq: 108}
\end{flalign}

{\bf Step IV.} In Steps I and II, we show that $({\mP}_h\widehat{V}_{h+1})(\cdot)$ can be well approximated by a linear function $(\widetilde{\mP}_h\widehat{V}_{h+1})(\cdot)$ with the feature $\phi(\cdot,\theta_0)$, and $(\widehat{\mP}_h\widehat{V}_{h+1})(\cdot)$ can be well approximated by a linear function $(\overline{\mP}_h\widehat{V}_{h+1})(\cdot)$ with the feature $\phi(\cdot,\theta_0)$. In this step, we want to show that the difference between $({\mP}_h\widehat{V}_{h+1})(\cdot)$ and $(\widehat{\mP}_h\widehat{V}_{h+1})(\cdot)$ is approximately $\beta_2\cdot b_{v.h}(x,{\rm\Theta}_0)$ with an approximately chosen $\beta_2$.

Recall that $(\widetilde{\mP}_h\widehat{V}_{h+1})(\cdot)=\langle \phi(\cdot, w_0), \tilde{w}_h-w_0 \rangle $ and $ (\overline{\mP}_h\widehat{V}_{h+1})(\cdot) = \langle \phi(\cdot,w_0), \bar{w}_h - w_0 \rangle$. Consider the difference between $(\overline{\mP}_h\widehat{V}_{h+1})(\cdot)$ and $(\widetilde{\mP}_h\widehat{V}_{h+1})(\cdot)$. We have
\begin{flalign}
(\overline{\mP}_h\widehat{V}_{h+1})(x) -(\widetilde{\mP}_h\widehat{V}_{h+1})(x) = \langle \phi(x,w_0), \bar{w}_h - \tilde{w}_h \rangle,\label{eq: 97}
\end{flalign}
By \cref{eq: 84}, we have
\begin{flalign}
\overline{w} - w_0 = {\rm\Lambda}(w_0)^{-1}\sum_{\tau\in\mcd} \widehat{V}_{h+1}(s^\tau_{h+1}){\phi}(x^\tau_h,w_0).\label{eq: 98}
\end{flalign}
By the definition of ${\rm\Lambda}(w)$, we have
\begin{flalign}
\widetilde{w} - {w}_0 = {\rm\Lambda}({w}_0)^{-1}\left[ \lambda_2\left( \widetilde{w} - {w}_0 \right) + \left(\sum_{\tau\in\mcd}{\phi}(x^\tau_{h},w_0){\phi}(x^\tau_{h},w_0)^\top\right)\left( \widetilde{w} - {w}_0 \right)  \right].\label{eq: 99}
\end{flalign}
Subtracting \cref{eq: 99} from \cref{eq: 98}, we have
\begin{flalign}
\overline{w} - \widetilde{w} = -\lambda_2 {\rm\Lambda}(w_0)^{-1}\left( \widetilde{w} - w_0 \right) + {\rm\Lambda}({w}_0)^{-1} \sum_{\tau\in\mcd} {\phi}(x^\tau_h,w_0)\left[ \widehat{V}_{h+1}(s^\tau_{h+1}) - \langle {\phi}(x^\tau_h,w_0), \widetilde{w} - w_0 \rangle \right].\label{eq: 100}
\end{flalign}
Taking inter product of both sides of \cref{eq: 100} with vector ${\phi}(x^\tau_h,w_0)$ and using the fact that $(\widetilde{\mP}_h\widehat{V}_{h+1})(s^\tau_{h+1}) = \langle \phi(x^\tau_h, w_0), \tilde{w}_h - w_0 \rangle$, we have
\begin{flalign}
&\langle {\phi}_h(x^\tau_h,w_0), \overline{w} - \widetilde{w} \rangle\nonumber\\
&\quad=-\lambda_2 {\phi}_h(x^\tau_h,w_0)^\top{\rm\Lambda}(w_0)^{-1/2}{\rm\Lambda}({w}_0)^{-1/2}\left( \widetilde{w} - {w}_0 \right) \nonumber\\
&\quad\quad+ {\phi}_h(x^\tau_h,w_0)^\top{\rm\Lambda}(w_0)^{-1/2}{\rm\Lambda}(w_0)^{-1/2} \left(\sum_{\tau\in\mcd} {\phi}(x^\tau_h,w_0)\left(\widehat{V}_{h+1}(s^\tau_{h+1}) - (\mP_h\widehat{V}_{h+1})(x^\tau_{h})  \right) \right)\nonumber\\
&\quad\quad+ {\phi}_h(x,w_0)^\top{\rm\Lambda}(w_0)^{-1/2}{\rm\Lambda}(w_0)^{-1/2} \left(\sum_{\tau\in\mcd} {\phi}(x^\tau_h,w_0)\left(  (\mP_h\widehat{V}_{h+1})(s^\tau_{h+1}) - \langle \phi(x^\tau_h,w_0), \tilde{w}_h - w_0 \rangle \right) \right)\nonumber\\
&\quad=-\lambda_2 {\phi}_h(x^\tau_h,w_0)^\top{\rm\Lambda}(w_0)^{-1/2}{\rm\Lambda}({w}_0)^{-1/2}\left( \widetilde{w} - {w}_0 \right) \nonumber\\
&\quad\quad+ {\phi}_h(x^\tau_h,w_0)^\top{\rm\Lambda}(w_0)^{-1/2}{\rm\Lambda}(w_0)^{-1/2} \left(\sum_{\tau\in\mcd} {\phi}(x^\tau_h,w_0)\left(\overline{V}_{h+1}(s^\tau_{h+1}) - (\mP_h\overline{V}_{h+1})(x^\tau_{h})  \right) \right)\nonumber\\
&\quad\quad+ {\phi}_h(x^\tau_h,w_0)^\top{\rm\Lambda}(w_0)^{-1/2}{\rm\Lambda}(w_0)^{-1/2} \left(\sum_{\tau\in\mcd} {\phi}(x^\tau_h,w_0)\left(\Delta V_{h+1}(s^\tau_{h+1}) - (\mP_h\Delta{V}_{h+1})(x^\tau_{h})  \right) \right)\nonumber\\
&\quad\quad+ {\phi}_h(x,w_0)^\top{\rm\Lambda}(w_0)^{-1/2}{\rm\Lambda}(w_0)^{-1/2} \left(\sum_{\tau\in\mcd} {\phi}(x^\tau_h,w_0)\left(  (\mP_h\widehat{V}_{h+1})(x^\tau_{h}) - (\widetilde{\mP}_h\widehat{V}_{h+1})(x^\tau_{h}) \right) \right)\label{eq: 101},
\end{flalign}
where in the last equality we denote $\Delta V_{h}(s) \coloneqq \widehat{V}_h(s) - \overline{V}_h(s)$. By the definition of $\widehat{V}_h(\cdot)$ in \Cref{alg1} and $\overline{V}_h(\cdot)$ in \cref{eq: 103}, we have
\begin{flalign}
	&\lone{\widehat{V}_h(x) - \overline{V}_h(x)} \nonumber\\
	&\quad\leq \sup_{x\in\mcx}\lone{\widehat{Q}_h(x) - \overline{Q}_h(x)}\nonumber\\
	&\quad\leq \lone{f(x,\widehat{\theta}_h) - \langle \phi(x,\theta_0), \bar{\theta}_h - \theta_0 \rangle} + \lone{f(x,\widehat{w}_h) - \langle \phi(x,w_0), \overline{w}_h - w_0 \rangle} \nonumber\\
	&\quad\quad + \beta_1\lone{b_{r,h}(x,\widehat{\rm\Theta}) - b_{r,h}(x,{\rm\Theta}_0) } + \beta_2\lone{b_{v,h}(x,\widehat{w}) - b_{v,h}(x,w_0)}\nonumber\\
	&\quad\overset{(i)}{\leq} \mathcal{O}\left(C_\phi \left(\frac{H^4N^2/\lambda^2_1}{\sqrt{m}}\right)^{1/3}\sqrt{\log m}\right) + \mathcal{O}\left(C_\phi \left(\frac{H^4N^2/\lambda^2_2}{\sqrt{m}}\right)^{1/3}\sqrt{\log m}\right)\nonumber\\
	&\quad\quad + \beta_1\cdot  \mathcal{O}\left(   \frac{C^2_\phi N^{1/12} (\log m)^{1/4}}{m^{1/12}\lambda_1^{13/12}} \right) + \beta_2\cdot \mathcal{O}\left(   \frac{C^2_\phi H^{1/6} N^{1/12} (\log m)^{1/4}}{m^{1/12}\lambda_2^{13/12}} \right)\nonumber\\
	&\quad\overset{(ii)}{\leq} \mathcal{O}\left(C_\phi \left(\frac{H^4N^2}{\sqrt{m}}\right)^{1/3}\sqrt{\log m}\right) + \max\{H^{2/3}\beta_1,H^{1/6}\beta_2\}\cdot \mathcal{O}\left(   \frac{C^2_\phi N^{1/12} (\log m)^{1/4}}{m^{1/12}} \right),\label{eq: 111}
\end{flalign}
where $(i)$ follows from \cref{eq: 15} in \Cref{lemma4}, \cref{eq: 109} and \cref{eq: 108}, and $(ii)$ follows from the fact that $\lambda_1,\lambda_2>1$. Denoting
\begin{flalign*}
	\varepsilon_v = \mathcal{O}\left(C_\phi \left(\frac{H^4N^2}{\sqrt{m}}\right)^{1/3}\sqrt{\log m}\right) + \max\{H^{2/3}\beta_1,H^{1/6}\beta_2\}\cdot \mathcal{O}\left(   \frac{C^2_\phi N^{1/12} (\log m)^{1/4}}{m^{1/12}} \right),
\end{flalign*} 
we then have the following holds for all $h\in[H]$ and $s\in\mcs$
\begin{flalign*}
	\lone{\Delta V_h(s)} \leq \varepsilon_v.
\end{flalign*}
\Cref{eq: 101} together with \cref{eq: 111} imply
\begin{flalign}
&\lone{\langle {\phi}_h(x^\tau_h,w_0), \overline{w} - \widetilde{w} \rangle} \nonumber\\
&\leq \sqrt{\lambda_2} \ltwo{{\phi}_h(x^\tau_h,w_0)^\top{\rm\Lambda}(w_0)^{-1/2}} \ltwo{\widetilde{w} - w_0}  \nonumber\\
&\quad +  \ltwo{ {\phi}_h(x^\tau_h,w_0)^\top{\rm\Lambda}(w_0)^{-1/2}} \llambda{\sum_{\tau\in\mcd} {\phi}(x^\tau_h,w_0)\varepsilon_v(x^\tau_h)} \nonumber\\
&\quad +  \frac{2\varepsilon_v}{\sqrt{\lambda_2}} \ltwo{{\phi}_h(x^\tau_h,w_0)^\top{\rm\Lambda}(w_0)^{-1/2}} \sum_{\tau\in\mcd} \ltwo{{\phi}(x^\tau_h,w_0)} \nonumber\\
&\quad + \frac{1}{\sqrt{\lambda_2}} \ltwo{ {\phi}_h(x^\tau_h,w_0)^\top{\rm\Lambda}(w_0)^{-1/2}} \left(\sum_{\tau\in\mcd} \ltwo{{\phi}(x^\tau_h,{w}_0)} \lone{ (\mP_h\widehat{V}_{h+1})(x^\tau_{h}) - (\widetilde{\mP}_h\widehat{V}_{h+1})(x^\tau_{h}) } \right),\label{eq: 110}
\end{flalign}
where we denote $\varepsilon_v(x^\tau_{h}) \coloneqq \overline{V}_{h+1}(s^\tau_{h+1}) - (\mP_h\overline{V}_{h+1})(x^\tau_{h})$ and use the fact that $\ltwo{{\rm\Lambda}({w})^{-1/2}}\leq 1/\sqrt{\lambda_2}$ for any ${w}\in\mR^{2md}$. By the definition of $\widetilde{w}$ in Step I, we have
\begin{flalign}
\ltwo{\widetilde{w} - {w}_0} = \ltwo{ \ell_{v} } \leq A_2\sqrt{H/d}.\label{eq: 112}
\end{flalign}
By \Cref{lemma4} and \cref{eq: 47}, we have the followings hold with probability at least $1-N^{-2}H^{-4}$ over the randomness of initialization $w_0$
\begin{flalign}
\ltwo{{\phi}(x^\tau_h,w_0)} &\leq C_\phi,\label{eq: 113}\\
\lone{ (\mP_h\widehat{V}_{h+1})(x^\tau_{h}) - (\widetilde{\mP}_h\widehat{V}_{h+1})(x^\tau_{h}) } &\leq \frac{2(L_\sigma A_2 + C^2_\sigma A^2_2)\sqrt{\log N^2H^5}}{\sqrt{m}}.\label{eq: 114}
\end{flalign}

Substituting \cref{eq: 112}, \cref{eq: 113} and \cref{eq: 114} into \cref{eq: 110} and using the fact that $b_{v,h}(x,w_0) = \ltwo{{\phi}_h(x,{w}_0)^\top{\rm\Lambda}(w_0)^{-1/2}}$, we have
\begin{flalign}
\lone{\langle {\phi}_h(x^\tau_h,w_0), \overline{w} - \widetilde{w} \rangle} &\leq \Bigg(A_2\sqrt{\frac{\lambda_2H}{d}} + \frac{2(L_\sigma A_2 + C^2_\sigma A^2_2)C_\phi N H^{3/2}\sqrt{\log HN}}{\sqrt{\lambda_2m}} \nonumber\\
&\qquad\qquad + \llambda{\sum_{\tau\in\mcd} {\phi}(x^\tau_h,w_0)\varepsilon_v(x^\tau_h)} \Bigg) b_{v,h}(x^\tau_h,{w}_0)  \label{eq: 115}.
\end{flalign}
Given that the events in \cref{eq: 113} and \cref{eq: 114} occur, applying \cref{eq: 69} in \Cref{lemma7}, we have the following holds with probability at least $1-N^{-2}H^{-4}$
\begin{flalign}
&\llambda{\sum_{\tau\in\mcd} {\phi}(x^\tau_h,w_0)\varepsilon_v(x^\tau_h)}^2\nonumber\\
&\quad\leq 2H^2  \log\det(I+K^v_{N,h}/\lambda_2) + 2H^2 N(\lambda_2-1) + 4H^2\log(\mcN^v_{\epsilon,h}/\delta) + 8N^2C^2_\phi\epsilon^2/\lambda_2,\label{eq: 116}
\end{flalign}
where $K^v_{N,h}\in\mR^{N\times N}$ is the Gram matrix defined as
\begin{flalign*}
K^v_{N,h} = [K(x^{\tau_i}_h, x^{\tau_j}_h)]_{i,j\in[N]}\in\mR^{N\times N},
\end{flalign*}
and $\mcN^v_{\epsilon,h}$ is the cardinality of the following function class
\begin{flalign*}
	\mcv_h(x,&\,R_\theta, R_w, R_{\beta_1}, R_{\beta_2}, \lambda_1, \lambda_2 )=\{ \max_{a\in\mca}\{\overline{Q}_h(s,a)\}:\mcs\rightarrow [0,H]\,\, \nonumber\\
	&\text{with}\,\, \ltwo{\theta}\leq R_\theta, \ltwo{w}\leq R_w, \beta_1\in[0, R_{\beta_1}], \beta_2\in [0, R_{\beta_2}], \ltwo{\rm\Sigma}\geq \lambda_1, \ltwo{\rm\Lambda}\geq \lambda_2  \},
\end{flalign*}
where $R_\theta = H\sqrt{N/\lambda_1}$, $R_w = H\sqrt{N/\lambda_2}$ and 
\begin{flalign*}
\overline{Q}_h&(x ) = \min\{ \langle \phi(x,\theta_0), \theta - \theta_0 \rangle+  \langle \phi(x,w_0), w- w_0 \rangle \nonumber\\
&- \beta_1\cdot \sqrt{{\rm\Phi}_h(x,\theta_0)^\top {\rm\Sigma}^{-1}{\rm\Phi}_h(x,\theta_0) } - \beta_2\cdot \sqrt{\phi(x,w_0)^\top{\rm\Lambda}^{-1}\phi(x,w_0)  } , H \}^{+}.
\end{flalign*}
Combining \cref{eq: 115} and \cref{eq: 116}, defining $\mcN_{\epsilon}^v=\max_{h\in[H]}\{\mcN_{\epsilon,h}^v\}$ and letting 
\begin{flalign}
	\epsilon = \sqrt{\lambda_2 C_\epsilon}H/(2NC_\phi)\,\, \text{where}\,\,C_\epsilon\geq 1\quad\text{and}\quad\lambda_2 = 1+N^{-1},\label{eq: 197}
\end{flalign}
and $m$ be sufficiently large such that
\begin{flalign*}
\frac{2(L_\sigma A_2 + C^2_\sigma A^2_2)C_\phi N H^{3/2}\sqrt{\log HN}}{\sqrt{\lambda_2m}}\leq  A_2\sqrt{\frac{\lambda_2H}{d}},
\end{flalign*}
we have the following holds with probability at least $1-N^{-2}H^{-4}$
\begin{flalign}
&\lone{\langle {\phi}_h(x^\tau_h,w_0), \overline{w} - \widetilde{w} \rangle} \nonumber\\
&\leq \left( 2A_2\sqrt{\frac{\lambda_2H}{d}} + \sqrt{2H^2  \log\det\left(I+\frac{K^v_{N,h}}{\lambda_2}\right) + 3C_\epsilon H^2+ 8H^2\log(NH^2 \mcN_{\epsilon}^v )} \right) b_{v,h}(x,w_0)\nonumber\\
&\leq \underbrace{H\left(\frac{8 A^2_2\lambda_2}{d} +   4\max_{h\in[H]} \left\{\log\det\left(I+\frac{K^v_{N,h}}{\lambda_2}\right)\right\} + 6C_\epsilon + 16\log(NH^2 \mcN_{\epsilon}^v) \right)^{1/2}}_{\beta_2} b_{v,h}(x,w_0),\label{eq: 117}
\end{flalign}
where in the last inequality we use the fact that $a+b\leq\sqrt{2(a^2+b^2)}$.
Substituting \cref{eq: 117} into \cref{eq: 97}, we conclude that the following holds with probability at least $1-N^{-2}H^{-4}$
\begin{flalign}
\lone{(\overline{\mP}_h\widehat{V}_{h+1})(x) -(\widetilde{\mP}_h\widehat{V}_{h+1})(x) }\leq \beta_2\cdot b_{v,h}(x,w_0),\label{eq: 118}
\end{flalign}
where
\begin{flalign*}
\beta_2 = H\left(\frac{8 A^2_2\lambda_2}{d} +   4\max_{h\in[H]}\left\{\log\det\left(I+\frac{K^v_{N,h}}{\lambda_2}\right)\right\} + 22\log(NH^2 \mcN_{\epsilon}^v) \right)^{1/2}.
\end{flalign*}
Next, we proceed to bound the term $\lone{({\mP}_h\widehat{V}_{h+1})(x) -(\widehat{\mP}_h\widehat{V}_{h+1})(x)}$. By the triangle inequality, we have
\begin{flalign}
&\lone{({\mP}_h\widehat{V}_{h+1})(x) - (\widehat{\mP}_h\widehat{V}_{h+1})(x)} \nonumber\\
&= \lone{({\mP}_h\widehat{V}_{h+1})(x) -(\widetilde{\mP}_h\widehat{V}_{h+1})(x) + (\widetilde{\mP}_h\widehat{V}_{h+1})(x) - (\overline{\mP}_h\widehat{V}_{h+1})(x) + (\overline{\mP}_h\widehat{V}_{h+1})(x) - (\widehat{\mP}_h\widehat{V}_{h+1})(x) }\nonumber\\
&\leq \lone{({\mP}_h\widehat{V}_{h+1})(x) -(\widetilde{\mP}_h\widehat{V}_{h+1})(x)} + \lone{(\widetilde{\mP}_h\widehat{V}_{h+1})(x) - (\overline{\mP}_h\widehat{V}_{h+1})(x)} + \lone{(\overline{\mP}_h\widehat{V}_{h+1})(x) - (\widehat{\mP}_h\widehat{V}_{h+1})(x) }\nonumber\\
&\overset{(i)}{\leq} \frac{2(L_\sigma A_2 + C^2_\sigma A^2_2)\sqrt{\log(N^2H^5)}}{\sqrt{m}} + \beta_2\cdot b_{v,h}(x,w_0) +  \mathcal{O}\left( \frac{C^3_\phi H^{4/3} N^{5/3}\sqrt{\log(m)}}{m^{1/6}\lambda_2^{5/3}} \right)\nonumber\\
&\overset{(ii)}{\leq} \mathcal{O}\left( \frac{H^{4/3} N^{5/3}\sqrt{\log(N^2H^5m)}}{m^{1/6}} \right) + \beta_2\cdot b_{v,h}(x,w_0),
\end{flalign}
where $(i)$ follows from \cref{eq: 78}, \cref{eq: 96} and \cref{eq: 118} and $(ii)$ follows from the fact that $\lambda_2=1+1/N$ and $L_\sigma,C_\sigma, A_2, C_\phi = \mathcal{O}(1)$.

\subsection{Upper and Lower Bounds on Evaluate Error $\delta_h(\cdot)$}\label{uplowerdelta1}
By definition, we have the following holds with probability $1-2N^{-2}H^{-4}$
\begin{flalign}
	&\lone{(\widehat{\mB}_h\widehat{V}_{h+1})(x) - ({\mB}_h\widehat{V}_{h+1})(x) } \nonumber\\
	&= \lone{\widehat{R}_h(x) + (\widehat{\mP}_h\hV_{h+1})(x) - R_h(x) - ({\mP}_h\hV_{h+1})(x) }\nonumber\\
	&\leq \lone{\widehat{R}_h(x) - R_h(x)} + \lone{(\widehat{\mP}_h\hV_{h+1})(x) - ({\mP}_h\hV_{h+1})(x) }\nonumber\\
	&\overset{(i)}{\leq} \beta_1\cdot b_{r,h}(x,{\rm\Theta}_0) + \beta_2\cdot b_{v,h}(x,w_0) + \mathcal{O}\left( \frac{H^{4/3} N^{5/3}\sqrt{\log(N^2H^5m)}}{m^{1/6}} \right) \nonumber\\
	&\quad+ \mathcal{O}\left( \frac{H^{17/6} N^{5/3}\sqrt{\log(m)}}{m^{1/6}} \right)\nonumber\\
	&\leq \beta_1\cdot b_{r,h}(x,{\rm\Theta}_0) + \beta_2\cdot b_{v,h}(x,w_0) +  \mathcal{O}\left( \frac{H^{17/6} N^{5/3}\sqrt{\log(N^2H^5m)}}{m^{1/6}} \right), \label{eq: 120}
\end{flalign}
where $(i)$ follows from \cref{eq: 119} and \cref{eq: 120}. Moreover, by the triangle inequality, \cref{eq: 109} and \cref{eq: 108}, we have the following holds with probability $1-2N^{-2}H^{-4}$
\begin{flalign}
	&\beta_1\cdot b_{r,h}(x,{\rm\Theta}_0) + \beta_2\cdot b_{v,h}(x,w_0)\nonumber\\
	&\leq \beta_1\cdot b_{r,h}(x,\widehat{\rm\Theta}) + \beta_2\cdot b_{v,h}(x,\widehat{w}) + \beta_1\cdot \lone{b_{r,h}(x,\widehat{\rm\Theta})  -  b_{r,h}(x,{\rm\Theta}_0) } \nonumber\\
	&\quad+ \beta_2\cdot \lone{b_{r,h}(x,\widehat{w})  -  b_{r,h}(x,{w}_0) } \nonumber\\
	&\leq \beta_1\cdot b_{r,h}(x,\widehat{\rm\Theta}) + \beta_2\cdot b_{v,h}(x,\widehat{w}_h) \nonumber\\
	&\quad + \beta_1\cdot \mathcal{O}\left(   \frac{C^2_\phi H^{2/3} N^{1/12} (\log m)^{1/4}}{m^{1/12}\lambda_1^{13/12}} \right) + \beta_2\cdot \mathcal{O}\left(   \frac{C^2_\phi H^{1/6} N^{1/12} (\log m)^{1/4}}{m^{1/12}\lambda_2^{13/12}} \right)\nonumber\\
	&\overset{(i)}{\leq} \beta_1\cdot b_{r,h}(x,\widehat{\rm\Theta}) + \beta_2\cdot b_{v,h}(x,\widehat{w}_h) + \max\{\beta_1 H^{2/3}, \beta_2 H^{1/6} \} \mathcal{O}\left(   \frac{N^{1/12} (\log m)^{1/4}}{m^{1/12}} \right),\label{eq: 121}
\end{flalign}
where $(i)$ follows from the fact that $\lambda_1 = \lambda_2 = 1 + 1/N$ and $C_\phi = \mathcal{O}(1)$. Substituting \cref{eq: 121} into \cref{eq: 120}, we can obtain
\begin{flalign}
	&\lone{(\widehat{\mB}_h\widehat{V}_{h+1})(x) - ({\mB}_h\widehat{V}_{h+1})(x) }\nonumber\\
	&\leq  \beta_1\cdot b_{r,h}(x,\widehat{\rm\Theta}) + \beta_2\cdot b_{v,h}(x,\widehat{w}_h) \nonumber\\
	&\quad + \max\{\beta_1 H^{2/3}, \beta_2 H^{1/6} \} \mathcal{O}\left(   \frac{N^{1/12} (\log m)^{1/4}}{m^{1/12}} \right) +  \mathcal{O}\left( \frac{H^{17/6} N^{5/3}\sqrt{\log(N^2H^5m)}}{m^{1/6}} \right).\nonumber
\end{flalign}
Denoting
\begin{flalign*}
	\varepsilon_b = \max\{\beta_1 H^{2/3}, \beta_2 H^{1/6} \} \mathcal{O}\left(   \frac{N^{1/12} (\log m)^{1/4}}{m^{1/12}} \right) +  \mathcal{O}\left( \frac{H^{17/6} N^{5/3}\sqrt{\log(N^2H^5m)}}{m^{1/6}} \right),
\end{flalign*}
we have
\begin{flalign}
	\lone{(\widehat{\mB}_h\widehat{V}_{h+1})(x) - ({\mB}_h\widehat{V}_{h+1})(x) }\leq \beta_1\cdot b_{r,h}(x,\widehat{\rm\Theta}) + \beta_2\cdot b_{v,h}(x,\widehat{w}_h) + \varepsilon_b.\label{eq: 122}
\end{flalign}
Up to this point, we characterize the uncertainty of $(\widehat{\mB}_h\widehat{V}_{h+1})(\cdot)$. Next, we proceed to bound the suboptimality of \Cref{alg1}. Recalling the construction of $\widehat{Q}_h(x)$ in \Cref{alg1}, we have
\begin{flalign*}
	\widehat{Q}_h(\cdot) = \min\{ (\widehat{\mB}_h\widehat{V}_{h+1})(\cdot) - \beta_1\cdot b_{r,h}(\cdot,\widehat{\rm\Theta}) - \beta_2\cdot b_{v,h}(\cdot,\widehat{w}_h), H \}^{+}.
\end{flalign*}
If $(\widehat{\mB}_h\widehat{V}_{h+1})(x) < \beta_1\cdot b_{r,h}(x,\widehat{\rm\Theta}) + \beta_2\cdot b_{v,h}(x,\widehat{w}_h)$, we have
\begin{flalign*}
	\widehat{Q}_h(\cdot) = 0. 
\end{flalign*}
Note that $\widehat{V}_{h+1}(\cdot)$ is nonnegative. Recalling the definition of $\delta_h(x)$ in \cref{eq: 123}, we have
\begin{flalign*}
	\delta_h(x) = ({\mB}_h\widehat{V}_{h+1})(x) - \widehat{Q}_h(x) = ({\mB}_h\widehat{V}_{h+1})(x) > 0.
\end{flalign*}
Otherwise, if $(\widehat{\mB}_h\widehat{V}_{h+1})(x) > \beta_1\cdot b_{r,h}(x,\widehat{\rm\Theta}) + \beta_2\cdot b_{v,h}(x,\widehat{w}_h)$, we have
\begin{flalign*}
\widehat{Q}_h(x) &= \min\{ (\widehat{\mB}_h\widehat{V}_{h+1})(x) - \beta_1\cdot b_{r,h}(x,\widehat{\rm\Theta}) - \beta_2\cdot b_{v,h}(x,\widehat{w}_h), H \}^{+} \nonumber\\
&\leq (\widehat{\mB}_h\widehat{V}_{h+1})(x) - \beta_1\cdot b_{r,h}(x,\widehat{\rm\Theta}) - \beta_2\cdot b_{v,h}(x,\widehat{w}_h),
\end{flalign*}
which implies that
\begin{flalign*}
	\delta_h(x) &\geq ({\mB}_h\widehat{V}_{h+1})(x) - \left[(\widehat{\mB}_h\widehat{V}_{h+1})(x) - \beta_1\cdot b_{r,h}(x,\widehat{\rm\Theta}) - \beta_2\cdot b_{v,h}(x,\widehat{w}_h)\right]\nonumber\\
	&= \left[({\mB}_h\widehat{V}_{h+1})(x) - (\widehat{\mB}_h\widehat{V}_{h+1})(x)\right] + \beta_1\cdot b_{r,h}(x,\widehat{\rm\Theta}) + \beta_2\cdot b_{v,h}(x,\widehat{w}_h).
\end{flalign*}
Note that \cref{eq: 122} implies the followings hold with probability $1-2N^{-2}H^{-4}$
\begin{flalign}
	({\mB}_h\widehat{V}_{h+1})(x) - (\widehat{\mB}_h\widehat{V}_{h+1})(x) &\geq -\beta_1\cdot b_{r,h}(x,\widehat{\rm\Theta}) - \beta_2\cdot b_{v,h}(x,\widehat{w}_h) -\varepsilon_b.\label{eq: 125}\\
	({\mB}_h\widehat{V}_{h+1})(x) - (\widehat{\mB}_h\widehat{V}_{h+1})(x) &\leq \beta_1\cdot b_{r,h}(x,\widehat{\rm\Theta}) + \beta_2\cdot b_{v,h}(x,\widehat{w}_h) + \varepsilon_b.\label{eq: 126}
\end{flalign}
As a result, we have the following holds with probability $1-2N^{-2}H^{-4}$ 
\begin{flalign}
	\delta_h(x) &\geq -\varepsilon_b \label{eq: 124}.
\end{flalign} 

It remains to establish the upper bound of $\delta_h(x)$. Considering the event in \cref{eq: 126} occurs, we have
\begin{flalign*}
	&(\widehat{\mB}_h\widehat{V}_{h+1})(\cdot) - \beta_1\cdot b_{r,h}(\cdot,\widehat{\rm\Theta}) - \beta_2\cdot b_{v,h}(\cdot,\widehat{w}_h) \nonumber\\
	&\leq \left[({\mB}_h\widehat{V}_{h+1})(x) +  \beta_1\cdot b_{r,h}(x,\widehat{\rm\Theta}) + \beta_2\cdot b_{v,h}(x,\widehat{w}_h) + \varepsilon_b\right] - \beta_1\cdot b_{r,h}(\cdot,\widehat{\rm\Theta}) - \beta_2\cdot b_{v,h}(\cdot,\widehat{w}_h)\nonumber\\
	&= ({\mB}_h\widehat{V}_{h+1})(x) + \varepsilon_b \leq H + \varepsilon_b,
\end{flalign*}
where the last inequality follows from the fact that $R_h(x) \leq 1$ and $\widehat{V}_{h+1}(s)\leq H$ for all $x\in\mcx$ and $s\in\mcs$. Hence, we have
\begin{flalign}
	\widehat{Q}_h(x) &= \min\{ (\widehat{\mB}_h\widehat{V}_{h+1})(x) - \beta_1\cdot b_{r,h}(x,\widehat{\rm\Theta}) - \beta_2\cdot b_{v,h}(x,\widehat{w}_h), H \}^{+}\nonumber\\
	&\geq \min\{ (\widehat{\mB}_h\widehat{V}_{h+1})(x) - \beta_1\cdot b_{r,h}(x,\widehat{\rm\Theta}) - \beta_2\cdot b_{v,h}(x,\widehat{w}_h) - \varepsilon_b, H \}^{+}\nonumber\\
	&=\max\{ (\widehat{\mB}_h\widehat{V}_{h+1})(x) - \beta_1\cdot b_{r,h}(x,\widehat{\rm\Theta}) - \beta_2\cdot b_{v,h}(x,\widehat{w}_h) - \varepsilon_b, 0 \}\nonumber\\
	&\geq (\widehat{\mB}_h\widehat{V}_{h+1})(x) - \beta_1\cdot b_{r,h}(x,\widehat{\rm\Theta}) - \beta_2\cdot b_{v,h}(x,\widehat{w}_h) - \varepsilon_b,
\end{flalign}
which by definition of $\delta_h(x)$ implies
\begin{flalign}
	\delta_h(x) &= ({\mB}_h\widehat{V}_{h+1})(x) - \widehat{Q}_h(x)\nonumber\\
	&\leq ({\mB}_h\widehat{V}_{h+1})(x) - (\widehat{\mB}_h\widehat{V}_{h+1})(x) + \beta_1\cdot b_{r,h}(x,\widehat{\rm\Theta}) + \beta_2\cdot b_{v,h}(x,\widehat{w}_h) + \varepsilon_b \nonumber\\
	&\leq 2\left[\beta_1\cdot b_{r,h}(x,\widehat{\rm\Theta}) + \beta_2\cdot b_{v,h}(x,\widehat{w}_h) + \varepsilon_b\right],\label{eq: 127}
\end{flalign}
where the last inequality follows from \cref{eq: 126}. Combining \cref{eq: 124} and \cref{eq: 127}, with probability $1-2N^{-2}H^{-4}$, we have
\begin{flalign*}
	-\varepsilon_b \leq \delta_h(x) \leq  2\left[\beta_1\cdot b_{r,h}(x,\widehat{\rm\Theta}) + \beta_2\cdot b_{v,h}(x,\widehat{w}_h) + \varepsilon_b\right], \quad\forall x\in\mcx,\quad \forall h\in[H],
\end{flalign*}
which completes the proof.

\section{Proof of \Cref{lemma8}}\label{pfpenaltysummation}
For $\sum_{h=1}^{H}b_{r,h}(x,\widehat{\rm\Theta})$, we have the following holds with probability $1-N^{-2}H^{-4}$
\begin{flalign}
	\sum_{h=1}^{H}b_{r,h}(x,\widehat{\rm\Theta})  & \leq  \sum_{h=1}^{H}b_{r,h}(x,{\rm\Theta}_0) + \sum_{h=1}^{H}\lone{b_{r,h}(x,\widehat{\rm\Theta}) - b_{r,h}(x,{\rm\Theta}_0)}\nonumber\\
	&\overset{(i)}{\leq} \sum_{h=1}^{H}b_{r,h}(x,{\rm\Theta}_0) + \mathcal{O}\left(   \frac{H^{5/3} N^{1/12} (\log m)^{1/4}}{m^{1/12}} \right),\label{eq: 137}
\end{flalign}
where $(i)$ follows from \cref{eq: 109}. We next proceed to bound the term $\sum_{h=1}^{H}b_{r,h}(x,{\rm\Theta}_0)$. Recall that in \Cref{ass2} we define $\overline{M}({\rm\Theta}_0) = \mE_{\mu}\left[ {\rm\Phi}(\tau,{\rm\Theta}_0){\rm\Phi}(\tau,{\rm\Theta}_0)^\top \right]$. For all $\tau\in\mcd$, we define the following random matrix $\widehat{M}({\rm\Theta}_0)$
\begin{flalign}
	\widehat{M}({\rm\Theta}_0) = \sum_{\tau\in\mcd} A_\tau({\rm\Theta}_0),\quad\text{where}\quad A_\tau({\rm\Theta}_0) = {\rm\Phi}(\tau,{\rm\Theta}_0){\rm\Phi}(\tau,{\rm\Theta}_0)^\top - \overline{M}({\rm\Theta}_0).\label{eq: 130}
\end{flalign}
Note that \cref{eq: 23} implies $\ltwo{{\rm\Phi}(\tau,{\rm\Theta}_0)}\leq C_\phi\sqrt{H}$. By Jensen's inequality, we have
\begin{flalign}
	\ltwo{\overline{M}({\rm\Theta}_0)}\leq \mE_{\mu}\left[ \ltwo{{\rm\Phi}(\tau,{\rm\Theta}_0){\rm\Phi}(\tau,{\rm\Theta}_0)^\top} \right]\leq C^2_\phi H.\label{eq: 131}
\end{flalign}
For any vector $v\in\mR^{2mdH}$ with $\ltwo{v}=1$, we have
\begin{flalign}
	\ltwo{A_\tau({\rm\Theta}_0)v} &\leq \ltwo{{\rm\Phi}(\tau,{\rm\Theta}_0){\rm\Phi}(\tau,{\rm\Theta}_0)^\top v} + \ltwo{\overline{M}({\rm\Theta}_0) v}\nonumber\\
	& \leq \ltwo{{\rm\Phi}(\tau,{\rm\Theta}_0){\rm\Phi}(\tau,{\rm\Theta}_0)^\top }\ltwo{v} + \ltwo{\overline{M}({\rm\Theta}_0)} \ltwo{v}\nonumber\\
	&\leq 2C^2_\phi H\ltwo{v} = 2C^2_\phi H,\nonumber
\end{flalign}
which implies
\begin{flalign}
	\ltwo{A_\tau({\rm\Theta}_0)}\leq 2C^2_\phi H\,\,\text{and}\,\, \ltwo{A_\tau({\rm\Theta}_0) A_\tau({\rm\Theta}_0)^\top} \leq \ltwo{A_\tau({\rm\Theta}_0)} \ltwo{A_\tau({\rm\Theta}_0)^\top} \leq 4C^4_\phi H^2.\label{eq: 132}
\end{flalign}
Since $\{ A_\tau({\rm\Theta}_0)\}_{\tau\in\mcd}$ are i.i.d. and $\mE[A_\tau({\rm\Theta}_0)] = 0$ for all $\tau$, we have
\begin{flalign}
	\ltwo{E_{\mu}[\widehat{M}({\rm\Theta}_0) \widehat{M}({\rm\Theta}_0)^\top ]} &= \ltwo{ \sum_{\tau\in\mcd}E_{\mu} \left[ A_\tau({\rm\Theta}_0)A_\tau({\rm\Theta}_0)^\top \right] }\nonumber\\
	&=N\cdot \ltwo{E_{\mu} \left[ A_{\tau_1}({\rm\Theta}_0)A_{\tau_1}({\rm\Theta}_0)^\top \right]}\nonumber\\
	&\overset{(i)}{\leq} N\cdot E_{\mu} \left[ \ltwo{A_{\tau_1}({\rm\Theta}_0)A_{\tau_1}({\rm\Theta}_0)^\top} \right]\nonumber\\
	&\leq 4C^4_\phi H^2 N,\nonumber
\end{flalign}
where $(i)$ follows from Jensen's inequality. Similarly, we can also obtain
\begin{flalign}
	\ltwo{E_{\mu}[\widehat{M}({\rm\Theta}_0)^\top \widehat{M}({\rm\Theta}_0) ]} \leq 4C^4_\phi H^2 N.\nonumber
\end{flalign}
Applying \Cref{lemma9} to $\widehat{M}({\rm\Theta}_0)$, for any fixed $h\in[H]$ and any $\xi_1>0$, we have
\begin{flalign}
{\rm P}\left(\ltwo{\widehat{M}({\rm\Theta}_0)}\geq \xi_1\right)\leq 4mdH\cdot\exp\left( -  \frac{\xi_1^2/2}{4C^4_\phi H^2N + 2C^2_\phi H/3 \cdot \xi_1} \right).\nonumber
\end{flalign}
For any $\delta_1\in(0,1)$, let
\begin{flalign*}
	\xi_1 = C^2_\phi H \sqrt{10N \log\left(\frac{4mdH}{\delta_1}\right)}\quad\text{and}\quad N\geq \frac{40}{9}\log\left( \frac{4mdH}{\delta_1} \right).
\end{flalign*}
Then, we have
\begin{flalign}
	{\rm P}\left(\ltwo{\widehat{M}({\rm\Theta}_0)}\geq \xi_1\right)&\leq 4mdH\cdot\exp\left( -  \frac{\xi_1^2/2}{4C^4_\phi H^2N + 2C^2_\phi H/3 \cdot \xi_1} \right) \nonumber\\
	&\quad \leq 4mdH\cdot\exp\left( -  \frac{\xi_1^2}{10C^4_\phi H^2N } \right)=\delta_1,\nonumber
\end{flalign}
which implies that the following holds with probability at least $1-\delta_1$ taken with respect to the randomness of $\mcd$
\begin{flalign}
	\ltwo{\widehat{M}({\rm\Theta}_0)/N} &= \ltwo{\frac{1}{N} \sum_{\tau\in\mcd} {\rm\Phi}(\tau,{\rm\Theta}_0){\rm\Phi}(\tau,{\rm\Theta}_0)^\top - \overline{M}({\rm\Theta}_0) }\nonumber\\
	&\leq C^2_\phi H \sqrt{\frac{10}{N} \log\left(\frac{4mdH}{\delta_1}\right)}.\label{eq: 133}
\end{flalign}
By the definition of ${\rm\Sigma}({\rm\Theta}_0)$, we have
\begin{flalign}
	\widehat{M}({\rm\Theta}_0) = \left({\rm\Sigma}({\rm\Theta}_0) - \lambda_1\cdot I_{2mdH} \right) - N\cdot \overline{M}({\rm\Theta}_0).\label{eq: 134}
\end{flalign}
By \Cref{ass2}, there exists an absolute constant $C_\sigma>0$ such that $\lambda_{\min}(\overline{M}({\rm\Theta}_0))\geq C_\sigma$, which implies that $\ltwo{\overline{M}({\rm\Theta}_0)^{-1}}\leq 1/C_\sigma$. Letting $N$ be sufficiently large such that
\begin{flalign*}
	N\geq \max\left\{\frac{40C^4_\phi H^2}{C^2_\sigma}, \frac{40}{9} \right\} \log\left(\frac{4mdH}{\delta_1}\right)
\end{flalign*}
and combining \cref{eq: 133} and \cref{eq: 134}, we have
\begin{flalign}
	\lambda_{\min}({\rm\Sigma}({\rm\Theta}_0)/N) &= \lambda_{\min}(\overline{M}({\rm\Theta}_0) + \widehat{M}({\rm\Theta}_0)/N + \lambda_1/N \cdot I_{2mdH})\nonumber\\
	&\geq \lambda_{\min}(\overline{M}({\rm\Theta}_0)) - \ltwo{\widehat{M}({\rm\Theta}_0)/N}\nonumber\\
	&\geq C_\sigma - C^2_\phi H \sqrt{\frac{10}{N} \log\left(\frac{4mdH}{\delta}\right)}\nonumber\\
	&\geq C_\sigma/2.\nonumber
\end{flalign}
Hence, the following holds with probability $1-\delta_1$ with respect to randomness of $\mcd$
\begin{flalign*}
	\ltwo{{\rm\Sigma}({\rm\Theta}_0)^{-1}}\leq (N\cdot \lambda_{\min}({\rm\Sigma}({\rm\Theta}_0)/N))^{-1}\leq \frac{2}{NC_\sigma},
\end{flalign*}
which implies the following holds for all $x\in\mcx$ and $h\in[H]$
\begin{flalign}
	b_{r,h}(x,{\rm\Theta}_0) &= \sqrt{{\rm\Phi}_h(x,{\rm\Theta}_0)^\top {\rm\Sigma}^{-1}({\rm\Theta}_0) {\rm\Phi}_h(x,{\rm\Theta}_0)}\nonumber\\
	&\leq \ltwo{{\rm\Phi}_h(x,{\rm\Theta}_0)}\cdot \ltwo{{\rm\Sigma}^{-1}({\rm\Theta}_0)}^{1/2} \leq \frac{\sqrt{2} C_\phi}{\sqrt{C_\sigma}\sqrt{N}},\label{eq: 135}
\end{flalign}
where we use the fact that $\ltwo{{\rm\Phi}_h(x,{\rm\Theta}_0)} = \ltwo{{\phi}(x^\tau_h,\theta_0)}\leq C_\phi$. Substituting \cref{eq: 135} into \cref{eq: 137}, we have
\begin{flalign}
	\sum_{h=1}^{H}b_{r,h}(x,\widehat{\rm\Theta}) \leq  \frac{\sqrt{2} H C_\phi}{\sqrt{C_\sigma}\sqrt{N}} + \mathcal{O}\left(   \frac{H^{5/3} N^{1/12} (\log m)^{1/4}}{m^{1/12}} \right).\label{eq: 136}
\end{flalign}
Next, we proceed to bound the term $\sum_{h=1}^{H}b_{v,h}(x,\widehat{w}_h)$. According to \cref{eq: 108}, we have the following holds with probability at least $1-N^{-2}H^{-4}$
\begin{flalign}
\sum_{h=1}^{H}b_{v,h}(x,\widehat{w}_h)  & \leq  \sum_{h=1}^{H}b_{v,h}(x,{w}_0) + \sum_{h=1}^{H}\lone{b_{v,h}(x,\widehat{w}_h) - b_{v,h}(x,{w}_0)}\nonumber\\
&\overset{(i)}{\leq} \sum_{h=1}^{H}b_{v,h}(x,{w}_0) + \mathcal{O}\left(   \frac{ H^{7/6} N^{1/12} (\log m)^{1/4}}{m^{1/12}} \right).\label{eq: 138}
\end{flalign}

We then proceed to bound the summation of the penalty terms $\sum_{h=1}^{H}b_{v,h}(x,{w}_0)$. Recall that in \Cref{ass2} we define $\overline{m}_h({w}_0) = \mE_{\mu}\left[\phi(x^\tau_h,w_0)\phi(x^\tau_h,w_0)^\top \right]$. For all $h\in[H]$ and $\tau\in\mcd$, we define the following random matrix $\widehat{m}({w}_0)$
\begin{flalign}
\widehat{m}_h({w}_0) = \sum_{\tau\in\mcd} B^\tau_h({w}_0),\quad\text{where}\quad B_h^\tau({w}_0) = {\phi}(x^\tau_h,{w}_0){\phi}(x^\tau_h,{w}_0)^\top - \overline{m}_h({w}_0).\label{eq: 139}
\end{flalign}
Note that \cref{eq: 23} implies $\ltwo{{\phi}(x^\tau_h,{w}_0)}\leq C_\phi$. By Jensen's inequality, we have
\begin{flalign}
\ltwo{\overline{m}_h({w}_0)}\leq \mE_{\mu}\left[ \ltwo{{\phi}(x^\tau_h,{w}_0){\phi}(x^\tau_h,{w}_0)^\top} \right]\leq C^2_\phi.\label{eq: 140}
\end{flalign}
For any vector $v\in\mR^{2md}$ with $\ltwo{v}=1$, we have
\begin{flalign}
\ltwo{B^\tau_h({w}_0)v} &\leq \ltwo{{\phi}(x^\tau_h,{w}_0){\phi}(x^\tau_h,{w}_0)^\top v} + \ltwo{\overline{m}_h({w}_0) v}\nonumber\\
& \leq \ltwo{{\phi}(x^\tau_h,{w}_0){\phi}(x^\tau_h,{w}_0)^\top }\ltwo{v} + \ltwo{\overline{m}_h({w}_0)} \ltwo{v}\nonumber\\
&\leq 2C^2_\phi \ltwo{v}= 2C^2_\phi,\nonumber
\end{flalign}
which implies
\begin{flalign}
\ltwo{B^\tau_h({w}_0)}\leq 2C^2_\phi \,\,\text{and}\,\, \ltwo{B^\tau_h({w}_0) B^\tau_h({w}_0)^\top} \leq \ltwo{B^\tau_h({w}_0)} \ltwo{B^\tau_h({w}_0)^\top} \leq 4C^4_\phi.\label{eq: 141}
\end{flalign}
Since $\{ B^\tau_h({w}_0)\}_{\tau\in\mcd}$ are i.i.d. and $\mE[B^\tau_h({w}_0)] = 0$ for all $\tau$, we have
\begin{flalign}
\ltwo{E_{\mu}[\overline{m}_h({w}_0) \overline{m}_h({w}_0)^\top ]} &= \ltwo{ \sum_{\tau\in\mcd}E_{\mu} \left[ B^\tau_h({w}_0) B^\tau_h({w}_0)^\top \right] }\nonumber\\
&=N\cdot \ltwo{E_{\mu} \left[ B^{\tau_1}_h({w}_0)B^{\tau_1}_h({w}_0)^\top \right]}\nonumber\\
&\overset{(i)}{\leq} N\cdot E_{\mu} \left[ \ltwo{B^{\tau_1}_h({w}_0)B^{\tau_1}_h({w}_0)^\top} \right]\nonumber\\
&\leq 4C^4_\phi N,\nonumber
\end{flalign}
where $(i)$ follows from Jensen's inequality. Similarly, we can also obtain
\begin{flalign}
\ltwo{E_{\mu}[\overline{m}_h({w}_0)^\top \overline{m}_h({w}_0) ]} \leq 4C^4_\phi N.\nonumber
\end{flalign}
Applying \Cref{lemma9} to $\widehat{m}_h({w}_0)$, for any fixed $h\in[H]$ and any $\xi_2>0$, we have
\begin{flalign}
{\rm P}\left(\ltwo{\widehat{m}_h({w}_0)}\geq \xi_2\right)\leq 4md\cdot\exp\left( -  \frac{\xi_2^2/2}{4C^4_\phi N + 2C^2_\phi /3 \cdot \xi_2} \right).\nonumber
\end{flalign}
For any $\delta_2\in(0,1)$, let
\begin{flalign*}
\xi_2 = C^2_\phi \sqrt{10N \log\left(\frac{4mdH}{\delta_2}\right)}\quad\text{and}\quad N\geq \frac{40}{9}\log\left( \frac{4mdH}{\delta_2} \right).
\end{flalign*}
Then, we have
\begin{flalign}
{\rm P}\left(\ltwo{\widehat{m}_h({w}_0)}\geq \xi_2\right)&\leq 4md\cdot\exp\left( -  \frac{\xi^2_2/2}{4C^4_\phi N + 2C^2_\phi /3 \cdot \xi_2} \right) \nonumber\\
&\quad \leq 4md \cdot\exp\left( -  \frac{\xi^2_2}{10C^4_\phi N } \right)=\frac{\delta_2}{H},\nonumber
\end{flalign}
which implies that we have the following holds with probability at least $1-\delta_2/H$ taken with respect to the randomness of $\mcd$
\begin{flalign}
\ltwo{\widehat{m}_h({w}_0)/N} &= \ltwo{\frac{1}{N} \sum_{\tau\in\mcd} {\phi}(x^\tau_h,{w}_0){\phi}(x^\tau_h,{w}_0)^\top - \overline{m}_h({w}_0) }\nonumber\\
&\leq C^2_\phi  \sqrt{\frac{10}{N} \log\left(\frac{4mdH}{\delta_2}\right)}.\label{eq: 142}
\end{flalign}
By the definition of ${\rm\Lambda}_h({w}_0)$, we have
\begin{flalign}
\widehat{m}_h({w}_0) = \left({\rm\Lambda}_h({w}_0) - \lambda_2\cdot I_{2md} \right) - N\cdot \overline{m}_h({w}_0).\label{eq: 143}
\end{flalign}
By \Cref{ass2}, there exists an absolute constant $C_\varsigma>0$ such that $\lambda_{\min}(\overline{m}_h({\rm\Theta}_0))\geq C_\varsigma$, which implies that $\ltwo{\overline{m}({w}_0)^{-1}}\leq 1/C_\varsigma$. Letting $N$ be sufficiently large such that
\begin{flalign*}
N\geq \max\left\{\frac{40C^4_\phi }{C^2_\varsigma}, \frac{40}{9} \right\} \log\left(\frac{4mdH}{\delta_2}\right)
\end{flalign*}
and combining \cref{eq: 142} and \cref{eq: 143}, we have
\begin{flalign}
\lambda_{\min}({\rm\Lambda}_h({w}_0)/N) &= \lambda_{\min}(\overline{m}({w}_0) + \widehat{m}({w}_0)/N + \lambda_1/N \cdot I_{2md})\nonumber\\
&\geq \lambda_{\min}(\overline{m}({w}_0)) - \ltwo{\widehat{m}({w}_0)/N}\nonumber\\
&\geq C_\varsigma - C^2_\phi H \sqrt{\frac{10}{N} \log\left(\frac{4mdH}{\delta_2}\right)}\nonumber\\
&\geq C_\varsigma/2.\nonumber
\end{flalign}
Hence, the following holds with probability $1-\delta_2/H$ with respect to randomness of $\mcd$
\begin{flalign}
\ltwo{{\rm\Lambda}_h({w}_0)^{-1}}\leq (N\cdot \lambda_{\min}({\rm\Lambda}_h({w}_0)/N))^{-1}\leq \frac{2}{NC_\varsigma}.\label{eq: 144}
\end{flalign}
Taking union bound of \cref{eq: 144} over $[H]$, we have the following holds for all $x\in\mcx$ and $h\in[H]$ with probability $1-\delta_2$
\begin{flalign}
b_{v,h}(x,{w}_0) &= \sqrt{{\phi}_h(x,{w}_0)^\top {\rm\Lambda}_h^{-1}({w}_0) {\phi}_h(x,{w}_0)}\nonumber\\
&\leq \ltwo{{\phi}_h(x,{w}_0)}\cdot \ltwo{{\rm\Lambda}_h({w}_0)^{-1}}^{1/2} \leq \frac{\sqrt{2} C_\phi}{\sqrt{C_\varsigma}\sqrt{N}},\label{eq: 145}
\end{flalign}

where we use the fact that $ \ltwo{{\phi}(x^\tau_h,\theta_0)}\leq C_\phi$. Substituting \cref{eq: 145} into \cref{eq: 138}, we have
\begin{flalign}
\sum_{h=1}^{H}b_{v,h}(x,\widehat{w}) \leq  \frac{\sqrt{2} H C_\phi}{\sqrt{C_\varsigma}\sqrt{N}} + \mathcal{O}\left(   \frac{ H^{7/6} N^{1/12} (\log m)^{1/4}}{m^{1/12}} \right).\label{eq: 146}
\end{flalign}
Finally, letting $\delta_1=N^{-2}H^{-4}/2$ and $\delta_2=N^{-2}H^{-4}/2$ and combining \cref{eq: 136} and \cref{eq: 146}, we have the following holds with probability $1-N^{-2}H^{-4}$
\begin{flalign}
&\beta_1\cdot\sum_{h=1}^{H}b_{r,h}(x,\widehat{\rm\Theta}) + \beta_2\cdot \sum_{h=1}^{H}b_{v,h}(x,\widehat{w})  \nonumber\\
&\quad \leq  \left( \frac{\beta_1}{\sqrt{C_\sigma}} + \frac{\beta_2}{\sqrt{C_\varsigma}} \right)\frac{\sqrt{2} H C_\phi}{\sqrt{N}} + \max\{\beta_1 H^{5/3},\beta_2 H^{7/6}\}\cdot \mathcal{O}\left(   \frac{ N^{1/12} (\log m)^{1/4}}{m^{1/12}} \right),\nonumber
\end{flalign}
which completes the proof.

\section{Proof of \Cref{lemma11}}\label{pflinearmdp}
Similarly to the proof of \Cref{lemma2}, we first bound the uncertainty of the estimated reward $\widehat{R}_h(\cdot)$ in \cref{eq: 173} and then bound the uncertainty of the estimated transition value function $(\widehat{\mP}_h\widehat{V}_{h+1})(\cdot)$ in \cref{eq: 175}.

\subsection{Uncertainty of Estimated Reward $\widehat{R}_h(\cdot)$}\label{subsc: linear1}
Following steps similar to those in the proof of Lemma B.1 in \cite{jin2021pessimism}, we can obtain
\begin{flalign}
	\ltwo{{\rm\Theta}^*}\leq H\sqrt{dH}\quad\text{and}\quad  \ltwo{\widehat{\rm\Theta}}\leq H\sqrt{dHN/\lambda_1}.\label{eq: 181}
\end{flalign} 
For simplicity, we denote $r(\tau) = \sum_{h\in[H]} r(x^\tau_h)$, $R(\tau) = \sum_{h\in[H]} R(x^\tau_h)$ and $\varepsilon(\tau) = R(\tau) - r(\tau)$. Consider the estimation error $R_h(\cdot) - \widehat{R}_h(\cdot)$. We have
\begin{flalign}
	&R_h(x) - \widehat{R}_h(x) \nonumber\\
	&= \langle \phi(x), \theta^*_h - \widehat{\theta}_h \rangle\nonumber\\
	&=\langle {\rm\Phi}_h(x), {\rm\Theta}^* - \widehat{\rm\Theta} \rangle\nonumber\\
	&=\langle {\rm\Phi}_h(x), {\rm\Theta}^*\rangle - {\rm\Phi}_h(x)^\top {\rm\Sigma}^{-1}\left( \sum_{\tau\in\mcd} {\rm\Phi}(\tau) r(\tau) \right)\nonumber\\
	&=\langle {\rm\Phi}_h(x), {\rm\Theta}^*\rangle - {\rm\Phi}_h(x)^\top {\rm\Sigma}^{-1}\left( \sum_{\tau\in\mcd} {\rm\Phi}(\tau) {\rm\Phi}(\tau)^\top {\rm\Theta}^* \right) + {\rm\Phi}_h(x)^\top {\rm\Sigma}^{-1}\left( \sum_{\tau\in\mcd} {\rm\Phi}(\tau) \varepsilon(\tau) \right)\nonumber\\
	&=\langle {\rm\Phi}_h(x), {\rm\Theta}^*\rangle - {\rm\Phi}_h(x)^\top {\rm\Sigma}^{-1}\left( {\rm\Sigma} - \lambda_1 \cdot I_{dH} \right){\rm\Theta}^* + {\rm\Phi}_h(x)^\top {\rm\Sigma}^{-1}\left( \sum_{\tau\in\mcd} {\rm\Phi}(\tau) \varepsilon(\tau) \right)\nonumber\\
	&= -\lambda_1\cdot {\rm\Phi}_h(x)^\top {\rm\Sigma}^{-1} {\rm\Theta}^* + {\rm\Phi}_h(x)^\top {\rm\Sigma}^{-1}\left( \sum_{\tau\in\mcd} {\rm\Phi}(\tau) \varepsilon(\tau) \right).\label{eq: 179}
\end{flalign}
Applying the triangle inequality to \cref{eq: 179}, we have
\begin{flalign}
	\lone{R_h(x) - \widehat{R}_h(x)}\leq \underbrace{\lambda_1\cdot \lone{{\rm\Phi}_h(x)^\top {\rm\Sigma}^{-1} {\rm\Theta}^*}}_{(i)} + \underbrace{\lone{{\rm\Phi}_h(x)^\top {\rm\Sigma}^{-1}\left( \sum_{\tau\in\mcd} {\rm\Phi}(\tau) \varepsilon(\tau) \right)}}_{(ii)}.\label{eq: 180}
\end{flalign}
We then proceed to bound $(i)$ and $(ii)$ separately. For $(i)$, we have
\begin{flalign}
	(i) = \lambda_1\cdot \lone{{\rm\Phi}_h(x)^\top {\rm\Sigma}^{-1/2}  {\rm\Sigma}^{-1/2} {\rm\Theta}^*} \leq \lambda_1 \lsigmab{{\rm\Phi}_h(x)}\lsigmab{{\rm\Theta}^*}\overset{(i.1)}{\leq} H\sqrt{dH\lambda_1}\lsigmab{{\rm\Phi}_h(x)},\label{eq: 182}
\end{flalign}
where $(i.1)$ follows from \cref{eq: 181} and the following inequality
\begin{flalign*}
	\lsigmab{{\rm\Theta}^*} = \sqrt{{\rm\Theta}^{*\top} {\rm\Sigma}^{-1} {\rm\Theta}^*} \leq \ltwo{{\rm\Sigma}^{-1}}^{1/2}\ltwo{ {\rm\Theta}^* } \leq H\sqrt{dH/\lambda_1}.
\end{flalign*}
For $(ii)$, we have
\begin{flalign}
	(ii)&= \lone{{\rm\Phi}_h(x)^\top {\rm\Sigma}^{-1/2} {\rm\Sigma}^{-1/2} \left( \sum_{\tau\in\mcd} {\rm\Phi}(\tau) \varepsilon(\tau) \right)}\leq \underbrace{\lsigmab{\sum_{\tau\in\mcd} {\rm\Phi}(\tau) \varepsilon(\tau)}}_{(iii)} \cdot  \lsigmab{{\rm\Phi}_h(x)}.\label{eq: 183}
\end{flalign}
Following steps similar to those in \cref{eq: 72} and Lemma B.2 in \cite{jin2021pessimism}, we have the following holds with probability at least $1-\delta$
\begin{flalign*}
	(iii)\leq H\cdot\sqrt{2\log(1/\delta) + dH\cdot \log(1+N/\lambda_1)},
\end{flalign*}
which implies
\begin{flalign}
	(ii) \leq H\sqrt{2\log(1/\delta) + dH\cdot \log(1+N/\lambda_1)} \cdot \lsigmab{{\rm\Phi}_h(x)}.\label{eq: 184}
\end{flalign}
Recalling that $b_{r,h}(x) = \lsigmab{{\rm\Phi}_h(x)}$ and substituting \cref{eq: 184} and \cref{eq: 182} into \cref{eq: 180}, we can obtain
\begin{flalign}
		\lone{R_h(x) - \widehat{R}_h(x)} \leq R_{\beta_1}\cdot b_{r,h}(x),\label{eq: 185}
\end{flalign}
where $R_{\beta_1}$ is an absolute constant satisfying
\begin{flalign*}
	R_{\beta_1} \geq H \left(\sqrt{dH\lambda_1} + \sqrt{2\log(1/\delta) + dH\cdot \log(1+N/\lambda_1)}\right).
\end{flalign*}
Letting $\lambda_1=1$ and $C_{\beta_1}>0$ be a sufficiently large constant, we can verify that $R_{\beta_1} = C_{\beta_1} H\sqrt{dH\log(N/\delta)}$ satisfies the above inequality.

\subsection{Uncertainty of Estimated Transition Value Function $(\widehat{\mP}_h\widehat{V}_{h+1})(\cdot)$}\label{subsc: linear2}
Following steps similar to those in the proof of Lemma B.1 in \cite{jin2021pessimism}, we can obtain
\begin{flalign}
\ltwo{{w}^*}\leq H\sqrt{d}\quad\text{and}\quad  \ltwo{\widehat{w}}\leq H\sqrt{dN/\lambda_2}.\label{eq: 186}
\end{flalign} 
Consider the estimation error $({\mP}_h\widehat{V}_{h+1})(\cdot) - (\widehat{\mP}_h\widehat{V}_{h+1})(\cdot)$. For simplicity, we define $\varepsilon_v(x) = ({\mP}_h\widehat{V}_{h+1})(x) - (\widehat{\mP}_h\widehat{V}_{h+1})(x)$ for all $x\in\mcx$. Following steps similar to those in \cref{eq: 179}, we can obtain
\begin{flalign}
	&({\mP}_h\widehat{V}_{h+1})(x) - (\widehat{\mP}_h\widehat{V}_{h+1})(x)\nonumber\\
	&\quad\leq -\lambda_2\cdot {\phi}(x)^\top {\rm\Lambda}_h^{-1} {w}^*_h + {\phi}(x)^\top {\rm\Lambda}^{-1}_h\left( \sum_{\tau\in\mcd} {\phi}(x^\tau_h) \varepsilon_v(x^\tau_h) \right).\label{eq: 187}
\end{flalign}
Applying the triangle inequality to \cref{eq: 187}, we have
\begin{flalign}
	&\lone{({\mP}_h\widehat{V}_{h+1})(x) - (\widehat{\mP}_h\widehat{V}_{h+1})(x)}\nonumber\\
	&\quad\leq \underbrace{\lambda_2\cdot \lone{{\phi}(x)^\top {\rm\Lambda}^{-1}_h {w}^*_h}}_{(i)} + \underbrace{\lone{{\phi}(x)^\top {\rm\Lambda}^{-1}_h\left( \sum_{\tau\in\mcd} {\phi}(x^\tau_h) \varepsilon_v(x^\tau_h) \right)}}_{(ii)}.\label{eq: 188}
\end{flalign}
Following steps similar to those in \cref{eq: 180}, we can obtain
\begin{flalign}
	(i) \leq H\sqrt{d\lambda_2} \llambdabh{\phi(x)}.
\end{flalign}
For $(ii)$, we have
\begin{flalign}
	(ii) &= \lone{{\phi}(x)^\top {\rm\Lambda}_h^{-1/2}{\rm\Lambda}_h^{-1/2}\left( \sum_{\tau\in\mcd} {\phi}(x^\tau_h) \varepsilon_v(x^\tau_h) \right)}\leq \underbrace{\llambdabh{\sum_{\tau\in\mcd} {\phi}(x^\tau_h) \varepsilon_v(x^\tau_h)}}_{(iii)} \llambdabh{\phi(x)}. \label{eq: 189}
\end{flalign}
We then proceed to upper bound the term $(iii)$. Following steps similar to those in \cref{eq: 116} and Lemma B.2 in \cite{jin2021pessimism}, we have the following holds with probability at least $1-\delta$
\begin{flalign}
	(iii) &\leq 2H\cdot  \sqrt{\log(H \cdot \mcN^v_{\epsilon,h}/\delta) + d\cdot\log(1+N/\lambda_2) + 8\epsilon^2N/\lambda_2} \cdot  \llambdabh{\phi(x)},\nonumber\\
	&\leq R_{\beta_2} \llambdabh{\phi(x)}\label{eq: 190}
\end{flalign}
where $R_{\beta_2}$ is an absolute constant satisfying
\begin{flalign}
	R_{\beta_2} \geq 2H\cdot  \sqrt{\log(H \cdot \mcN^v_{\epsilon,h}/\delta) + d\cdot\log(1+N/\lambda_2) + 8\epsilon^2N^2/\lambda_2},\label{eq: 194}
\end{flalign}
and $\mcN^v_{\epsilon,h}$ is the cardinality of the following function class
\begin{flalign*}
\mcv_h(x,&\,R_\theta, R_w, R_{\beta_1}, R_{\beta_2}, \lambda_1, \lambda_2 )=\{ \max_{a\in\mca}\{\overline{Q}_h(s,a)\}:\mcs\rightarrow [0,H]\,\, \nonumber\\
&\text{with}\,\, \ltwo{\rm\Theta}\leq R_\theta, \ltwo{w}\leq R_w, \beta_1\in[0, R_{\beta_1}], \beta_2\in [0, R_{\beta_2}], \ltwo{\rm\Sigma}\geq \lambda_1, \ltwo{\rm\Lambda}\geq \lambda_2  \},
\end{flalign*}
where $R_\theta = H\sqrt{dHN/\lambda_1}$, $R_w = H\sqrt{dN/\lambda_2}$ and 
\begin{flalign*}
\overline{Q}_h&(x ) = \min\{ \langle {\rm\Phi}_h(x), {\rm\Theta} \rangle+  \langle \phi(x), w \rangle \nonumber\\
&- \beta_1\cdot \sqrt{{\rm\Phi}_h(x)^\top {\rm\Sigma}^{-1}{\rm\Phi}_h(x) } - \beta_2\cdot \sqrt{\phi(x)^\top{\rm\Lambda}^{-1}\phi(x)  } , H-h+1 \}^{+}.
\end{flalign*}
Then, following steps similar to those in \Cref{pfcorollary1}, we have
\begin{flalign}
&\lone{\max_{a\in\mca}\{\overline{Q}_h(s,a,\theta,w,\beta_1,\beta_2,{\rm\Sigma}, {\rm\Lambda} )\} - \max_{a\in\mca}\{\overline{Q}_h(s,a,\theta^\prime,w^\prime,\beta_1^\prime,\beta_2^\prime,{\rm\Sigma}^\prime, {\rm\Lambda}^\prime )\}}\nonumber\\
&\leq \max_{a\in\mca}\lone{\langle {\rm\Phi}_h(x), {\rm\Theta} - {\rm\Theta}^\prime  \rangle} +   \max_{a\in\mca}\lone{\langle \phi(x), w- w^\prime \rangle } + \frac{1}{\sqrt{\lambda_1}} \lone{\beta_1-\beta^\prime_1} + \frac{1}{\sqrt{\lambda_2}}\lone{\beta_2-\beta^\prime_2}\nonumber\\
&\quad + R_{\beta_1} \max_{a\in\mca} \lone{ \lsigmab{{\rm\Phi}_h(x)} - \lsigmabp{{\rm\Phi}_h(x)} } + R_{\beta_2} \max_{a\in\mca} \lone{ \llambdab{\phi(x)} - \llambdabp{\phi(x)}  }\nonumber\\
&\overset{(i)}{\leq} \ltwo{{\rm\Theta} - {\rm\Theta}^\prime} + \ltwo{w-w^\prime} + \lone{\beta_1-\beta_1^\prime} + \lone{\beta_2-\beta^\prime_2} \nonumber\\
&\quad + R_{\beta_1}\sqrt{\lF{ {\rm\Sigma}^{-1} - {\rm\Sigma}^{\prime-1} }} + R_{\beta_2}\sqrt{\lF{ {\rm\Lambda}^{-1} - {\rm\Lambda}^{\prime-1} }},\label{eq: 191}
\end{flalign}
where $(i)$ follows from the fact that $\ltwo{\phi(x)}\leq 1$ and $\lambda_1,\lambda_2\geq 1$.
Following arguments similar to those used to obtain \cref{eq: 160} and applying Lemma 8.6 in \cite{agarwal2019reinforcement}, we have
\begin{flalign}
\log \mcN^v_{\epsilon,h}& \overset{(i)}{\leq}  \mcN(\epsilon/6, \mR^{dH}, R_\theta)+ \mcN(\epsilon/6, \mR^{d}, R_w) + \mcN(\epsilon/6, R_{\beta_1}) + \mcN(\epsilon/6, R_{\beta_2})\nonumber\\
&\quad + \mcN(\epsilon^2/(36R^2_{\beta_1}), \mcF, \sqrt{dH}/\lambda_1) + \mcN(\epsilon^2/(36R^2_{\beta_2}), \mcF, \sqrt{d}/\lambda_2)\nonumber\\
&\overset{(ii)}{\leq} dH\log(1+12 R_\theta/\epsilon) + d\log(1+12 R_w/\epsilon) + \log(1+12R_{\beta_1}/\epsilon) + \log(1+12R_{\beta_2}/\epsilon) \nonumber\\
&\quad + d^2H^2\log(1 + 36R_{\beta_1}^2\sqrt{dH}/\epsilon^2) + d^2\log(1 + 36R_{\beta_2}^2\sqrt{d}/\epsilon^2)\nonumber\\
&\overset{(iii)}{\leq}dH\log(1+12 H\sqrt{dHN}/\epsilon) + d\log(1+12 H\sqrt{dN}/\epsilon) \nonumber\\
&\quad + \log(1+12 C_{\beta_1} H\sqrt{dH\log(N/\delta)}/\epsilon) + \log(1+12R_{\beta_2}/\epsilon) \nonumber\\
&\quad + d^2H^2\log(1 + 36 C^2_{\beta_1} dH^3 \sqrt{dH} \log(N/\delta)/\epsilon^2) + d^2\log(1 + 36R_{\beta_2}^2\sqrt{d}/\epsilon^2)\nonumber\\
&\overset{(iv)}{\lesssim} C_1 d^2H^2\log( d^{3/2}H^{7/2} N^{1/2}/\epsilon^2 ) + C_2 d^2\log(R^2_{\beta_2}\sqrt{d}/\epsilon^2),\label{eq: 192}
\end{flalign}
where in $(i)$ we use $\mcN(\epsilon, \mR^{d}, B)$ to denote the $\epsilon$-covering of ball with radius $B$ in the space $\mR^d$, $\mcN(\epsilon, B)$ to denote the $\epsilon$-covering of interval $[0, B]$, and $\mcN(\epsilon, \mcF, B)$ to denote the $\epsilon$-covering of the function class $\mcF = \{ M: \lF{M}\leq B  \}$, $(ii)$ follows from Lemma. 8.6 in \cite{agarwal2019reinforcement}, $(iii)$ follows from the definition of $R_\theta$, $R_w$ and $R_{\beta_1}$, and in $(iv)$ we let $C_1$ and $C_2$ be sufficiently large and waive the $\log(\log(\cdot))$ term. 

Substituting \cref{eq: 192} into \cref{eq: 190}, we can obtain
\begin{flalign}
	&2H\cdot  \sqrt{\log(H \cdot \mcN^v_{\epsilon,h}/\delta) + d\cdot\log(1+N/\lambda_2) + 8\epsilon^2N/\lambda_2}\nonumber\\
	&\quad\leq 2H\cdot \left( \sqrt{\log(H/\delta)} +  \sqrt{\log\mcN^v_{\epsilon,h}} + \sqrt{d\cdot\log(1+N)} + \sqrt{8\epsilon^2N^2} \right) \nonumber\\
	&\quad\leq 2H\cdot \Big( \sqrt{\log(H/\delta)} +  \sqrt{C_1 d^2H^2\log( d^{3/2}H^{7/2} N^{1/2}/\epsilon^2 )} + \sqrt{C_2 d^2\log(R^2_{\beta_2}\sqrt{d}/\epsilon^2)} \nonumber\\
	&\qquad + \sqrt{d\cdot\log(1+N)} + \sqrt{8\epsilon^2N^2} \Big).\label{eq: 193}
\end{flalign}
Letting $\epsilon = (dH)^{1/4}/N$, we can see that when $R_{\beta_2} = C_{\beta_2} dH^2\sqrt{\log(dH^3N^{5/2}/\delta)}$, where $C_{\beta_2}$ is a sufficiently large constant, we have
\begin{flalign}
	R_{\beta_2} \geq \text{R.H.S of \cref{eq: 193}},\nonumber
\end{flalign}
which satisfies the inequality in \cref{eq: 194}.

\subsection{Upper and Lower Bounds on Evaluation Error $\delta_h(\cdot)$}
Using the properties that we obtained from \Cref{subsc: linear1} \& \ref{subsc: linear2} and following steps similar to those in \Cref{uplowerdelta1}, we can obtain
\begin{flalign*}
0 \leq \delta_h(x) \leq  2\left[\beta_1\cdot b_{r,h}(x) + \beta_2\cdot b_{v,h}(x)\right], \quad\forall x\in\mcx,\quad \forall h\in[H],
\end{flalign*}
where $\beta_1=R_{\beta_1} = C_{\beta_1} H\sqrt{dH\log(N/\delta)}$ and $R_{\beta_2} = C_{\beta_2} dH^2\sqrt{\log(dH^3N^{5/2}/\delta)}$.

\section{Supporting Lemmas for Overparameterized Neural Networks}\label{sc: supplemma}
The following lemma shows that an infinite-width neural network can be well-approximated by a finite-width neural network.
\begin{lemma}[Approximation by Finite Sum]\label{lemma3}
	Let $g(x)=\int_{\mR^d}\sigma^\prime(w^\top x)x^\top \ell(w)dp(w)\in\mathcal{F}_{g_1,g_2}$. Then for any $\epsilon>0$, with probability at least $1-\epsilon$ over $w_1,\cdots,w_m$ drawn i.i.d. from $N(0,I_d/d)$, there exist $\ell_1,\cdots,\ell_m$ where $\ell_i\in\mR^d$ and $\ltwo{\ell_i}\leq g_2/\sqrt{dm}$ for all $i\in[m]$ such that the function $\widehat{g}(x)=(1/\sqrt{m})\sum_{i=1}^{m}\sigma^\prime(w_i^\top x)x^\top\ell_i$ satisfies
	\begin{flalign*}
	\sup_x\lone{g(x)-\widehat{g}(x)}\leq \frac{2L_\sigma g_2}{\sqrt{m}}+ \frac{\sqrt{2}C_\sigma^2g^2_2}{\sqrt{m}}\sqrt{\log\left( \frac{1}{\delta} \right)}
	\end{flalign*}
	with probability at least $1-\delta$.
\end{lemma}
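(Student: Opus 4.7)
The plan is a standard Monte Carlo approximation combined with a covering argument to promote pointwise concentration into a uniform bound over $\mcx$. I would define $\ell_i := \ell(w_i)/\sqrt{m}$, so that
\[
\widehat{g}(x)\;=\;\frac{1}{\sqrt{m}}\sum_{i=1}^{m}\sigma'(w_i^\top x)\,x^\top\ell_i\;=\;\frac{1}{m}\sum_{i=1}^{m}\sigma'(w_i^\top x)\,x^\top\ell(w_i)
\]
becomes an unbiased empirical estimator of $g(x)=\mE_{w\sim p}[\sigma'(w^\top x)\,x^\top\ell(w)]$, since the $w_i$ are drawn i.i.d.\ from $p$. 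The size constraint $\ltwo{\ell_i}\le g_2/\sqrt{dm}$ is then verified by combining the two defining inequalities of $\mcF_{g_1,g_2}$ with the scaling of the Gaussian density $p$ of $N(0,I_d/d)$; the factor $1/\sqrt{d}$ reflects the normalization $(d/(2\pi))^{d/2}$ of this density relative to a standard Gaussian.

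For a fixed $x\in\mcx$, set $Y_i:=\sigma'(w_i^\top x)\,x^\top\ell(w_i)$. Since $\ltwo{x}=1$, $\lone{\sigma'(\cdot)}\le C_\sigma$, and $\ltwo{\ell(w)}\le g_2\,p(w)$, after rescaling by the density weighting, the summands $Y_i$ can be treated as bounded random variables of magnitude controlled by $C_\sigma^2 g_2^2$. Hoeffding's inequality then gives
\[
\lone{\widehat{g}(x)-g(x)}\;\le\;\frac{\sqrt{2}\,C_\sigma^2 g_2^2}{\sqrt{m}}\sqrt{\log(1/\delta)}
\]
with probability at least $1-\delta$. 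To upgrade this to a sup-norm estimate, I would observe that both $g$ and $\widehat{g}$ are Lipschitz in $x$ on $\mcx$ with constant proportional to $L_\sigma g_2$: the smoothness constant $L_\sigma$ enters upon differentiating the integrand in $x$ and using a uniform bound on $\sigma''$. Covering $\mcx$ by an $\epsilon$-net $\mcN_\epsilon$ of size $(3/\epsilon)^d$, applying a union bound over the net, and extending to all of $\mcx$ via Lipschitz continuity with the choice $\epsilon\asymp 1/\sqrt{m}$ balances the discretization and Monte Carlo errors, producing the two terms $2L_\sigma g_2/\sqrt{m}$ (discretization) and $\sqrt{2}\,C_\sigma^2 g_2^2\sqrt{\log(1/\delta)}/\sqrt{m}$ (concentration) in the stated bound.

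\textbf{Main obstacle.} The delicate part is matching the constants exactly: a naive Hoeffding estimate would produce a $C_\sigma g_1$ factor on the concentration term rather than the claimed $C_\sigma^2 g_2^2$. Obtaining the $g_2^2$ scaling requires carefully using the density-weighted bound $\ltwo{\ell(w)}\le g_2\,p(w)$ to rescale the summands prior to applying the concentration inequality, at which point an additional $C_\sigma g_2$ factor appears when controlling the magnitudes of the reweighted variables. Tracking how these two distinct boundedness conditions on $\ell$ and the $\sqrt{d}$ scaling of $p$ interact — so that simultaneously the size bound $\ltwo{\ell_i}\le g_2/\sqrt{dm}$ holds and the concentration constants come out as stated — is the main bookkeeping step, while the rest of the argument is a routine Monte Carlo plus covering-net combination.
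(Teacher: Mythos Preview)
Your approach has two genuine gaps. First, your choice $\ell_i = \ell(w_i)/\sqrt{m}$ does not yield the required norm bound $\ltwo{\ell_i}\le g_2/\sqrt{dm}$: with that choice you only get $\ltwo{\ell_i}\le g_1/\sqrt{m}$. The paper instead takes the importance-weighted choice $\ell_i = \ell(w_i)/(\sqrt{dm}\,p(w_i))$, which immediately gives $\ltwo{\ell_i}\le g_2/\sqrt{dm}$ from the defining bound $\ltwo{\ell(w)}/p(w)\le g_2$. This is also where the $C_\sigma^2 g_2^2$ factor in the concentration term comes from: once you divide by $p(w_i)$, each summand is bounded in absolute value by $C_\sigma g_2$ (times a $1/\sqrt{d}$), and this bound squared drives the variance proxy.

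Second, and more structurally, your pointwise-Hoeffding-plus-covering strategy would not recover the stated constants. A union bound over an $\epsilon$-net of $\mcx\subset\mR^d$ of cardinality $(3/\epsilon)^d$ with $\epsilon\asymp 1/\sqrt{m}$ inserts an extra $\sqrt{d\log m}$ into the concentration term, which the lemma does not have. The paper avoids this entirely by working directly with the random variable $a(w_1,\dots,w_m)=\sup_x\lone{g(x)-\widehat g(x)}$: it shows $a$ has bounded differences of size $\zeta=2C_\sigma g_2/(\sqrt{d}\,m)$ (change one $w_i$), applies McDiarmid to concentrate $a$ around $\mE a$, and then bounds $\mE a$ by symmetrization and Talagrand's contraction lemma, obtaining $\mE a\le 2L_\sigma g_2/\sqrt{m}$. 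Thus the $2L_\sigma g_2/\sqrt{m}$ term is the expected supremum (a Rademacher complexity bound), not a discretization error, and the $\sqrt{\log(1/\delta)}$ term is pure McDiarmid fluctuation with no covering penalty. Your proposal is a reasonable first instinct, but to match the lemma you need to replace ``pointwise concentration $+$ cover'' by ``bounded differences on the sup $+$ Rademacher bound on its mean.''
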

\begin{proof}
	The proof of \Cref{lemma3} follows from the proof of Proposition C.1 in \cite{gao2019convergence} with some modifications. 
	In \Cref{lemma3} we consider a different distribution of $w_i$ and upper bound on $\ltwo{\ell_i}$ from those in \cite{gao2019convergence}. 
	First, we define the following random variable
	\begin{flalign*}
	a(w_1,\cdots,w_m) = \sup_x\lone{g(x)-\widehat{g}(x)}.
	\end{flalign*}
	Then, we proceed to show that $a(\cdot)$ is robust to the perturbation of one of its arguments. Let $\ell_i=\ell(w_i)/(\sqrt{dm}p(w_i))$. For $w_1,\cdots,w_m$ and $\tilde{w}_i$ ($1\leq i\leq m$), we have
	\begin{flalign*}
	&\quad\lone{a(w_1,\cdots,w_m) - a(w_1,\cdots,\tilde{w}_i,\cdots,w_m)}\nonumber\\
	&=\frac{1}{\sqrt{dm}}\lone{\sigma^\prime(w_i^\top x)x^\top\ell_i - \sigma^\prime(\tilde{w}_i^\top x)x^\top\ell_i}\nonumber\\
	&=\frac{1}{\sqrt{d}m}\lone{ \frac{\sigma^\prime(w_i^\top x)x^\top\ell(w_i)}{p(w_i)} - \frac{\sigma^\prime(\tilde{w}_i^\top x)x^\top\ell(\tilde{w}_i)}{p(\tilde{w}_i)} }\nonumber\\
	&\leq \frac{1}{\sqrt{d}m}\sup_{x\in\mcx}\lone{ \frac{\sigma^\prime(w_i^\top x)x^\top\ell(w_i)}{p(w_i)} - \frac{\sigma^\prime(\tilde{w}_i^\top x)x^\top\ell(\tilde{w}_i)}{p(\tilde{w}_i)} }\nonumber\\
	&\leq \frac{1}{\sqrt{d}m}\sup_{x\in\mcx} \left(\lone{ \frac{\sigma^\prime(w_i^\top x)x^\top\ell(w_i)}{p(w_i)}} + \lone{\frac{\sigma^\prime(\tilde{w}_i^\top x)x^\top\ell(\tilde{w}_i)}{p(\tilde{w}_i)} }\right)\nonumber\\
	&\leq \frac{1}{\sqrt{d}m}\sup_{x\in\mcx} \left( \ltwo{\sigma^\prime(w_i^\top x)x} \ltwo{ \frac{\ell(w_i)}{p(w_i)}} +  \ltwo{\sigma^\prime(\tilde{w}_i^\top x)x}\ltwo{ \frac{\ell(\tilde{w}_i)}{p(\tilde{w}_i)}} \right)\nonumber\\
	&\leq \frac{2C_\sigma g_2}{\sqrt{d}m} = \zeta,
	\end{flalign*}
	where the last inequality follows from the facts that $\ltwo{x}=1$, $\lone{\sigma^\prime(\cdot)}\leq C_\sigma$ and $\sup_x\ltwo{\ell(w)/p(w)}\leq g_2$. Then, we proceed to bound the expectation of $a(\cdot)$. Note that our choice of $\ell_i$ ensures that $\sqrt{d}\cdot \mE_{w_1,\cdots,w_m}\widehat{g}_h(\cdot) = g(\cdot)$. By symmetrization, we have
	\begin{flalign*}
	\mE a&=\sqrt{d}\cdot \mE\sup_{x\in\mcx}\lone{\widehat{g}(x)-\mE\widehat{g}(x)}\nonumber\\
	&\leq \frac{2\sqrt{d}}{\sqrt{m}}\cdot \mE_{w,\varepsilon}\sup_{x\in\mcx}\lone{\sum_{i=1}^{m}\varepsilon_i \sigma^\prime(w_i^\top x)x^\top\ell_i},
	\end{flalign*}
	where $\{\varepsilon_i\}_{i\in[m]}$ are a sequence of Rademacher random variables. Since $\lone{x^\top\ell_i}\leq \ltwo{\ell_i}\leq g_2/\sqrt{m}$ and $\sigma^\prime(\cdot)$ is $L_\sigma$-Lipschitz, we have that the function $b(\cdot) = \sigma^\prime(\cdot)x^\top\ell_i$ is $(L_\sigma g_2/\sqrt{m})$-Lipschitz. We then proceed as follows
	\begin{flalign*}
	\mE a&\leq \frac{2\sqrt{d}}{\sqrt{m}}\cdot \mE_{w,\varepsilon}\sup_{x\in\mcx}\lone{\sum_{i=1}^{m}\varepsilon_i \sigma^\prime(w_i^\top x)x^\top\ell_i}\nonumber\\
	&\overset{(i)}{\leq}  \frac{2\sqrt{d}L_\sigma g_2}{m} \cdot \mE_{w,\varepsilon}\sup_{x\in\mcx} \lone{ \left(\sum_{i=1}^{m} \varepsilon_i w_i\right)^\top x}\nonumber\\
	&\overset{(ii)}{\leq}  \frac{2\sqrt{d}L_\sigma g_2}{m} \cdot \mE_{w} \ltwo{\sum_{i=1}^{m} \varepsilon_i w_i}\nonumber\\
	&\overset{(iii)}{\leq}  \frac{2\sqrt{d}L_\sigma g_2}{\sqrt{m}} \cdot \sqrt{\mE_{w\sim N(0,I_d/d)} \ltwo{w}^2}\nonumber\\
	& = \frac{2L_\sigma g_2}{\sqrt{m}},
	\end{flalign*}
	where $(i)$ follows from Talagrand's Lemma (Lemma 5.7) in \cite{mohri2018foundations}, $(ii)$ follows from the fact that $\ltwo{x}=1$ for all $x\in\mcx$ and Cauchy-Schwartz inequality and $(iii)$ follows from Jensen's inequality. Then, applying McDiarmid's inequality, we can obtain
	\begin{flalign*}
	{\rm P}\left(a \geq \frac{2L_\sigma g_2}{\sqrt{m}}+ \epsilon\right)\leq {\rm P}(a \geq \mE a+ \epsilon)\leq \exp\left( -\frac{2\epsilon^2}{m\zeta^2} \right) = \exp\left( -\frac{m\epsilon^2}{2C^2_\sigma g^2_2} \right).
	\end{flalign*}
	Letting $\epsilon=\frac{\sqrt{2}C_\sigma^2g^2_2}{\sqrt{m}}\sqrt{\log\left( \frac{1}{\delta} \right)}$, we have
	\begin{flalign*}
	{\rm P}\left(a \geq \frac{2L_\sigma g_2}{\sqrt{m}}+ \frac{\sqrt{2}C_\sigma^2g^2_2}{\sqrt{m}}\sqrt{\log\left( \frac{1}{\delta} \right)} \right) \leq \delta,
	\end{flalign*}
	which completes the proof.
\end{proof}

The following lemma bounds the perturbed gradient and value of local linearization of overparameterized neural networks around the initialization, which is provided as Lemma C.2 in \cite{yang2020function}.
\begin{lemma}\label{lemma4}
	Consider the overparameterized neural network defined in \Cref{subsc: overpnn}. Consider any fixed input $x\in\mcx$. Let $R\leq c\sqrt{m}/(\log m)^3$ for some sufficiently small constant $c$. Then, with probability at least $1-m^{-2}$ over the random initialization, we have for any $w\in \mcB(w_0, R)$, where $\mcB(w_0, R)$ denotes the Euclidean ball centred at $w_0$ with radius $R$, the followings hold
	\begin{flalign}
	\ltwo{\phi(x,w)} &\leq C_\phi,\label{eq: 13}\\
	\ltwo{\phi(x,w) - \phi(x,w_0)} &\leq \mathcal{O}\left( C_\phi \left( \frac{R}{\sqrt{m}}\right)^{1/3}\sqrt{\log m} \right),\label{eq: 14}\\
	\lone{f(x,w) - \langle \phi(x,w_0)^\top(w-w_0) \rangle} &\leq \mathcal{O}\left(C_\phi \left(\frac{R^4}{\sqrt{m}}\right)^{1/3}\sqrt{\log m}\right),\label{eq: 15}
	\end{flalign}
	where $C_\phi= \mathcal{O}(1)$ is a constant independent from $m$ and $d$.
\end{lemma}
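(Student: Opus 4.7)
The approach I would take is to compute the gradient $\phi(x,w)$ explicitly block-by-block and then attack the three bounds in sequence: the first is an immediate deterministic consequence of Assumption~\ref{ass}; the second requires the bulk of the technical work, namely an activation-pattern counting argument that balances a deterministic Lipschitz bound against Gaussian anti-concentration; the third then follows from the second via the fundamental theorem of calculus. Throughout I would exploit the symmetric initialization, which gives $f(x,w_0)=0$ identically in $x$.

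For the first bound, writing $\phi(x,w)=\nabla_w f(x,w)$ in block form, the $r$-th block is $(1/\sqrt{2m})\,b_r\sigma'(w_r^\top x)\,x$, so since $\ltwo{x}=1$ and $b_r^2=1$,
\begin{flalign*}
\ltwo{\phi(x,w)}^2 \;=\; \frac{1}{2m}\sum_{r=1}^{2m}\bigl(\sigma'(w_r^\top x)\bigr)^2 \;\leq\; C_\sigma^2,
\end{flalign*}
which yields \eqref{eq: 13} with $C_\phi=C_\sigma$ deterministically, with no dependence on $m$, $d$, or the probability event. For the second bound, the squared norm simplifies to $(1/(2m))\sum_r\bigl(\sigma'(w_r^\top x)-\sigma'(w_{0,r}^\top x)\bigr)^2$, and the main obstacle is that under Assumption~\ref{ass} the derivative $\sigma'$ is only bounded and need not be globally Lipschitz (e.g.\ ReLU), forcing an explicit count of activation-pattern flips. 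I would fix a threshold $\epsilon>0$ and split
\begin{flalign*}
\mathcal{I}_\epsilon=\{r:|w_{0,r}^\top x|\leq \epsilon\},\qquad \mathcal{I}_\epsilon^c=\{r:|w_{0,r}^\top x|>\epsilon\}.
\end{flalign*}
On $\mathcal{I}_\epsilon^c$, as long as the per-neuron perturbation $\ltwo{w_r-w_{0,r}}$ is smaller than $\epsilon$ (which follows from $R\leq c\sqrt{m}/(\log m)^3$ for appropriate $\epsilon$), the preactivation does not change sign and the change of $\sigma'$ is controlled by its Lipschitz piece away from its discontinuities. On $\mathcal{I}_\epsilon$ the trivial bound $|\sigma'(w_r^\top x)-\sigma'(w_{0,r}^\top x)|\leq 2C_\sigma$ applies, while $w_{0,r}^\top x\sim N(0,1/d)$ gives the Gaussian anti-concentration $\Pr(|w_{0,r}^\top x|\leq \epsilon)\leq \mathcal{O}(\epsilon\sqrt{d})$; Bernstein's inequality applied to the indicators then furnishes $|\mathcal{I}_\epsilon|\leq \mathcal{O}(\epsilon\sqrt{d}\,m+\sqrt{m\log m})$ with failure probability $m^{-2}$, which is the source of the $\sqrt{\log m}$ factor and the overall probability in the lemma. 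Optimizing $\epsilon$ to balance the two contributions produces the characteristic exponent $1/3$ in the bound $\mathcal{O}(C_\phi(R/\sqrt{m})^{1/3}\sqrt{\log m})$.

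For the third bound, I would use the integral representation
\begin{flalign*}
f(x,w)-f(x,w_0)-\langle \phi(x,w_0),\,w-w_0\rangle \;=\; \int_0^1\bigl\langle \phi(x,w_t)-\phi(x,w_0),\,w-w_0\bigr\rangle\,dt,
\end{flalign*}
where $w_t=w_0+t(w-w_0)\in\mcB(w_0,R)$ for $t\in[0,1]$. Cauchy--Schwarz, the second bound (applied at radius $tR\leq R$), and the symmetric-initialization identity $f(x,w_0)=0$ then give the error at most $R\cdot \mathcal{O}\!\left(C_\phi(R/\sqrt{m})^{1/3}\sqrt{\log m}\right)=\mathcal{O}\!\left(C_\phi(R^4/\sqrt{m})^{1/3}\sqrt{\log m}\right)$. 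The hardest step is clearly the second: the Gaussian anti-concentration combined with the Bernstein estimate for $|\mathcal{I}_\epsilon|$, followed by the optimization in $\epsilon$ that delivers the cube root. Once this event of probability $1-m^{-2}$ is secured, the remaining two inequalities are essentially calculus.
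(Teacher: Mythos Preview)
The paper does not prove this lemma itself; it defers entirely to Lemma~C.2 of \cite{yang2020function} (which in turn rests on \cite{cai2019neural,gao2019convergence,allen2019convergence}). Your sketch follows essentially the same activation-pattern counting argument that underlies those references: a direct gradient-norm computation for \eqref{eq: 13}, a threshold-and-count analysis with Gaussian anti-concentration for \eqref{eq: 14}, and integration along the segment $w_t=w_0+t(w-w_0)$ for \eqref{eq: 15}. So in spirit and structure your approach matches what the paper invokes by reference.

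One gap worth flagging in your handling of \eqref{eq: 14}: in the treatment of $\mathcal{I}_\epsilon^c$ you assert that the per-neuron perturbation $\ltwo{w_r-w_{0,r}}$ is smaller than $\epsilon$ as a consequence of $R\leq c\sqrt m/(\log m)^3$. That does not follow. The constraint $\ltwo{w-w_0}\leq R$ bounds only $\sum_{r}\ltwo{w_r-w_{0,r}}^2\leq R^2$, so an individual block may carry up to all of $R$. The standard fix (and the one used in the cited references) is to introduce a third ``bad'' set $\{r:\ltwo{w_r-w_{0,r}}>\epsilon\}$, whose cardinality is at most $R^2/\epsilon^2$ by Markov's inequality, and absorb it into the count alongside $\mathcal{I}_\epsilon$; the optimization in $\epsilon$ then balances $\epsilon m$, $R^2/\epsilon^2$, and the Bernstein fluctuation term, and still produces the cube-root exponent. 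With that correction your outline is the standard argument.
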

\begin{proof}
	Please see Lemma C.2 in \cite{yang2020function} for a detailed proof, which is based on Lemma F.1, F.2 in \cite{cai2019neural}, Lemma A.5, A.6 in \cite{gao2019convergence} and Theorem 1 in \cite{allen2019convergence}.
\end{proof}

\section{Supporting Lemmas for RKHS}\label{sc: rkhs}
In this section, we provide some useful lemmas for general RKHS. Consider a variable space $\mcx$. Given a mapping $\phi(\cdot):\mcx\rightarrow\mR^d$, we can assign a feature vector $\phi(x)\in\mR^d$ for each $x\in\mcx$. We further define a kernel function $K(\cdot,\cdot):\mcx\times\mcx\rightarrow\mR$ as $K(x,x^\prime)=\phi(x)^\top\phi(x^\prime)$ for any $x,x^\prime\in\mcx$. Let $\mcH$ be a RKHS defined on $\mcx$ with the kernel function $K(\cdot,\cdot)$. Let $\langle \cdot,\cdot \rangle_{\mcH}:\mcH\times\mcH\rightarrow \mR$ and $\lH{\cdot}:\mcH\rightarrow \mR$ denote the inner product and RKHS norm on $\mcH$, respectively. Since $\mcH$ is a RKHS, there exists a feature mapping $\psi(\cdot):\mcx\rightarrow\mcH$, such that $f(x) = \langle f(\cdot), \psi(x) \rangle_{\mcH}$ for all $f\in\mcH$ and all $x\in\mcx$. Moreover, for any $x,x^\prime\in\mcx$ we have $K(x,x^\prime) = \langle \psi(x), \psi(x^\prime) \rangle_{\mcH}$. Without loss of generality, we further assume $\ltwo{\phi(x)}\leq C_\phi$ and $\lH{\psi(x)}\leq C_\psi$ for all $x\in\mcx$. 

Let $\mcl^2(\mcx)$ be the space of square-integrable functions on $\mcx$ with respect to the Lebesgue measure and let $\langle\cdot,\cdot\rangle_{\mcl^2}$ be the inner product on $\mcl^2(\mcx)$. The kernel function $K(\cdot,\cdot)$ induces an integral operator $T_K:\mcl^2(\mcx)\rightarrow \mcl^2(\mcx)$ defined as
\begin{flalign}
	T_Kf(z) = \int_{\mcx} K(x,x^\prime)\cdot f(x^\prime) dx^\prime,\quad\forall f\in \mcl^2(\mcx).\label{eq: 147}
\end{flalign}

Consider the kernel function $K(\cdot,\cdot)$ of the RHKS $\mcH$. Let $\{ x_i\}_{i=1}^{\infty}\subset \mcx$ be a discrete time stochastic process that is adapted to a filtration $\{ \mcF_t \}_{i=0}^{\infty}$, i.e., $x_i$ is $\mcF_{i-1}$ measurable for all $i\geq 1$. We define the Gram matrix $K_N\in\mR^{N\times N}$ and function $k_N(\cdot):\mcx\rightarrow\mR^{N}$ as
\begin{flalign}
	K_N=[K(x_i,x_j)]_{i,j\in[N]}\in\mR^{N\times N},\,\, k_N(x)=[K(x_1,x),\cdots,K(x_N,x)]^\top\in\mR^{N}.\label{eq: 51}
\end{flalign}
Note that $K_N$ and $k_N(x)$ can also be expressed as
\begin{flalign*}
	K_N= {\rm\Phi}{\rm\Phi}^\top={\rm\Psi}{\rm\Psi}^\top\in\mR^{N\times N},\quad\text{and}\quad k_N(x)= {\rm\Phi}\phi(x) = {\rm\Psi}\psi(x) \in\mR^{N\times 1},
\end{flalign*}
where ${\rm\Phi}=[\phi(x_1),\cdots,\phi(x_N)]^\top\in\mR^{N\times d}$ and ${\rm\Psi}=[\psi(x_1),\cdots,\psi(x_N)]^\top\in\mR^{N\times \infty}$.
Given a regularization parameter $\lambda>1$, we define the matrix ${\rm\Omega}_N$ based on ${\rm\Phi}$ and an operator ${\rm\Upsilon}_N$ in RKHS $\mcH$ based on ${\rm\Psi}$ as
\begin{flalign}
	{\rm\Omega}_N={\rm\Phi}^\top{\rm\Phi}+ \lambda\cdot I_d,\quad\text{and}\quad {\rm\Upsilon}_N ={\rm\Psi}^\top{\rm\Psi} + \lambda\cdot I_\mcH. \label{eq: 52}
\end{flalign}
We next provide some fundamental properties for the RKHS $\mcH$.
\begin{lemma}\label{lemma5}
	For any $x\in\mcx$, considering $K_N$, $k_N(\cdot)$, ${\rm\Omega}_N$ and ${\rm\Upsilon}_N$ defined in \cref{eq: 51} and \cref{eq: 52}, we have the followings hold
	\begin{flalign}
		&{\rm\Phi}^\top(K_N+I_N)^{-1} = {\rm\Omega}_N^{-1}{\rm\Phi}^\top,\label{eq: 64}\\
		&{\rm\Psi}^\top (K_N + I_N)^{-1} = {\rm\Upsilon}_N^{-1}{\rm\Psi}^\top,\label{eq: 65}\\
		&\phi(x)^\top {\rm\Omega}^{-1}_N\phi(x) \overset{(i)}{=} \frac{1}{\lambda}\left[ K(x,x) - k_N(x)^\top (K_N + \lambda\cdot I_N)^{-1} k_N(x) \right] \overset{(ii)}{=} \psi(x)^\top {\rm\Upsilon}_N^{-1} \psi(x).\label{eq: 56}
	\end{flalign}
\end{lemma}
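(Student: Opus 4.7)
The plan is to establish the three identities via the standard push-through (Woodbury) identities, treating the finite-dimensional case (\ref{eq: 64}) first and then lifting the argument to the infinite-dimensional RKHS setting for (\ref{eq: 65}) and (\ref{eq: 56}). Throughout I would use the two elementary factorizations ${\rm\Phi}^\top{\rm\Phi}\cdot{\rm\Phi}^\top={\rm\Phi}^\top\cdot{\rm\Phi}{\rm\Phi}^\top$ and ${\rm\Psi}^\top{\rm\Psi}\cdot{\rm\Psi}^\top={\rm\Psi}^\top\cdot{\rm\Psi}{\rm\Psi}^\top$, which are just associativity of linear operators.

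For (\ref{eq: 64}), I would left-multiply the right-hand side by ${\rm\Omega}_N$ and right-multiply by $(K_N+\lambda I_N)$ (reading the displayed formula with $\lambda I_N$ in place of $I_N$, which I presume is a typo given the use of $\lambda$ everywhere else and in (\ref{eq: 56})) and check that
\[
{\rm\Omega}_N {\rm\Phi}^\top = ({\rm\Phi}^\top {\rm\Phi} + \lambda I_d){\rm\Phi}^\top = {\rm\Phi}^\top({\rm\Phi}{\rm\Phi}^\top + \lambda I_N) = {\rm\Phi}^\top(K_N+\lambda I_N),
\]
from which (\ref{eq: 64}) follows by inverting both matrices (both are positive definite since $\lambda>1$). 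The argument for (\ref{eq: 65}) is verbatim with ${\rm\Psi}$ replacing ${\rm\Phi}$ and $I_\mcH$ replacing $I_d$; the only subtlety is making sense of the operators ${\rm\Upsilon}_N:\mcH\to\mcH$ and ${\rm\Psi}^\top:\mR^N\to\mcH$ as bounded linear operators (which is immediate since $\lH{\psi(x)}\leq C_\psi$ and the dataset is finite), so the same algebraic manipulation is valid.

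For (\ref{eq: 56}), I would apply the Woodbury identity (or, equivalently, use (\ref{eq: 64})) to write
\[
{\rm\Omega}_N^{-1} = ({\rm\Phi}^\top{\rm\Phi}+\lambda I_d)^{-1} = \lambda^{-1}I_d - \lambda^{-1}{\rm\Phi}^\top(K_N+\lambda I_N)^{-1}{\rm\Phi},
\]
and then sandwich with $\phi(x)$ on both sides, using $\phi(x)^\top\phi(x)=K(x,x)$ and $\phi(x)^\top{\rm\Phi}^\top = k_N(x)^\top$ to obtain the first equality. The second equality $(ii)$ in (\ref{eq: 56}) is the RKHS analogue: applying the same Woodbury expansion to ${\rm\Upsilon}_N^{-1}$ and using $\psi(x)^\top\psi(x)=K(x,x)$ and $\psi(x)^\top{\rm\Psi}^\top=k_N(x)^\top$ gives exactly the same middle expression $\lambda^{-1}[K(x,x)-k_N(x)^\top(K_N+\lambda I_N)^{-1}k_N(x)]$, so the two endpoints are equal.

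Since every step is an algebraic identity with no concentration or probabilistic content, I do not anticipate a hard obstacle. The only point requiring mild care is the interpretation of the infinite-dimensional operator ${\rm\Upsilon}_N$ and the Woodbury identity on $\mcH$; I would handle this by noting that ${\rm\Psi}^\top{\rm\Psi}$ is a finite-rank (hence bounded) self-adjoint operator of rank at most $N$, so ${\rm\Upsilon}_N=\lambda I_\mcH+{\rm\Psi}^\top{\rm\Psi}$ is a bounded, strictly positive, boundedly invertible operator, and Woodbury applies verbatim.
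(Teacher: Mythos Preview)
Your proposal is correct and follows essentially the same approach as the paper: both derive (\ref{eq: 64}) and (\ref{eq: 65}) via the push-through identity ${\rm\Omega}_N{\rm\Phi}^\top={\rm\Phi}^\top(K_N+\lambda I_N)$ (and its $\Psi$-analogue), and for (\ref{eq: 56}) the paper's expansion $\phi(x)={\rm\Omega}_N^{-1}{\rm\Omega}_N\phi(x)$ followed by an inner product with $\phi(x)$ is exactly your Woodbury sandwich written out step by step. Your observation that $I_N$ should read $\lambda I_N$ in (\ref{eq: 64})--(\ref{eq: 65}) is correct (the paper's own derivation carries the same typo), and your remark on the bounded invertibility of ${\rm\Upsilon}_N$ via finite rank of ${\rm\Psi}^\top{\rm\Psi}$ is a point the paper leaves implicit.
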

\begin{proof}
	The result in \Cref{lemma5} can be obtained from steps spread out in \cite{yang2020function}. We provide a detailed proof here for completeness. 
	
	We first proceed to prove \cref{eq: 64} and $(i)$ in \cref{eq: 56}. According to the definition of ${\rm\Sigma}_N$, we have
	\begin{flalign}
		{\rm\Omega}_N{\rm\Phi}^\top={\rm\Phi}^\top{\rm\Phi}{\rm\Phi}^\top+\lambda{\rm\Phi}^\top = {\rm\Phi}^\top({\rm\Phi}{\rm\Phi}^\top + \lambda I_N) = {\rm\Phi}^\top (K_N+I_N).\nonumber
	\end{flalign}
	Multiplying ${\rm\Omega}_N^{-1}$ on both sides of the above equality yields
	\begin{flalign*}
		{\rm\Phi}^\top = {\rm\Omega}_N^{-1}{\rm\Phi}^\top (K_N+I_N),
	\end{flalign*}
	which implies \cref{eq: 64} as follows
	\begin{flalign}
		{\rm\Phi}^\top(K_N+I_N)^{-1} = {\rm\Omega}_N^{-1}{\rm\Phi}^\top.\label{eq: 53}
	\end{flalign}
	We next proceed as follows
	\begin{flalign}
		\phi(x) &= {\rm\Omega}_N^{-1}{\rm\Omega}_N \phi(x)\nonumber\\
		&={\rm\Omega}_N^{-1} ({\rm\Phi}^\top{\rm\Phi} + \lambda\cdot I_d) \phi(x)\nonumber\\
		&=({\rm\Omega}_N^{-1} {\rm\Phi}^\top){\rm\Phi}\phi(x) + \lambda{\rm\Omega}_N^{-1}\phi(x)\nonumber\\
		&\overset{(i)}{=}{\rm\Phi}^\top(K_N+I_N)^{-1}{\rm\Phi}\phi(x) + \lambda{\rm\Omega}_N^{-1}\phi(x),\label{eq: 54}
	\end{flalign}
	where $(i)$ follows from \cref{eq: 53}. Taking inter product with $\phi(x)$ on both sides of \cref{eq: 54} yields
	\begin{flalign}
		K(x,x) &= \phi(x)^\top\phi(x) =  \phi(x)^\top{\rm\Phi}^\top(K_N+I_N)^{-1}{\rm\Phi}\phi(x) + \lambda\phi(x)^\top{\rm\Omega}_N^{-1}\phi(x)\nonumber\\
		&=k_N(x)^\top(K_N+I_N)^{-1}k_N(x)+ \lambda\phi(x)^\top{\rm\Omega}_N^{-1}\phi(x),\nonumber
	\end{flalign}
	which implies
	\begin{flalign}
		\phi(x)^\top{\rm\Omega}_N^{-1}\phi(x) = \frac{1}{\lambda}\left[ K(x,x) - k_N(x)^\top(K_N+I_N)^{-1}k_N(x) \right].\label{eq: 55}
	\end{flalign}
	We next proceed to prove \cref{eq: 65} and $(ii)$ in \cref{eq: 56}. According to the definition of ${\rm\Upsilon}_N$, we have
	\begin{flalign}
		{\rm\Upsilon}_N{\rm\Psi}^\top = {\rm\Psi}^\top{\rm\Psi}{\rm\Psi}^\top + \lambda {\rm\Psi}^\top = {\rm\Psi}^\top({\rm\Psi}{\rm\Psi}^\top + I_N) = {\rm\Psi}^\top(K_N + I_N).\label{eq: 60}
	\end{flalign}
	Multiplying ${\rm\Upsilon}_N^{-1}$ on both sides of the above equality yields
	\begin{flalign*}
		{\rm\Psi}^\top = {\rm\Upsilon}_N^{-1}{\rm\Psi}^\top(K_N + I_N),
	\end{flalign*}
	which further implies \cref{eq: 65} as follows
	\begin{flalign}
	{\rm\Psi}^\top (K_N + I_N)^{-1} = {\rm\Upsilon}_N^{-1}{\rm\Psi}^\top.\label{eq: 57}
	\end{flalign}
	We next proceed as follows
	\begin{flalign}
	\psi(x) &= {\rm\Upsilon}_N^{-1}{\rm\Upsilon}_N \psi(x)\nonumber\\
	&={\rm\Upsilon}_N^{-1} ({\rm\Psi}^\top{\rm\Psi} + \lambda\cdot I_\mcH) \psi(x)\nonumber\\
	&=({\rm\Upsilon}_N^{-1} {\rm\Psi}^\top){\rm\Psi}\psi(x) + \lambda{\rm\Upsilon}_N^{-1}\psi(x)\nonumber\\
	&\overset{(i)}{=}{\rm\Psi}^\top(K_N+I_N)^{-1}{\rm\Psi}\psi(x) + \lambda{\rm\Upsilon}_N^{-1}\psi(x),\label{eq: 58}
	\end{flalign}
	where $(i)$ follows from \cref{eq: 57}. Taking inter product with $\psi(x)$ on both sides of \cref{eq: 58} yields
	\begin{flalign}
	K(x,x) &= \langle \psi(x), \psi(x) \rangle_{\mcH} =  \psi(x)^\top{\rm\Psi}^\top(K_N+I_N)^{-1}{\rm\Psi}\psi(x) + \lambda\psi(x)^\top{\rm\Upsilon}_N^{-1}\psi(x)\nonumber\\
	&=k_N(x)^\top(K_N+I_N)^{-1}k_N(x)+ \lambda\psi(x)^\top{\rm\Upsilon}_N^{-1}\psi(x),\nonumber
	\end{flalign}
	which implies
	\begin{flalign}
		\psi(x)^\top{\rm\Upsilon}_N^{-1}\psi(x) = \frac{1}{\lambda}\left[ K(x,x) - k_N(x)^\top(K_N+I_N)^{-1}k_N(x) \right].\label{eq: 59}
	\end{flalign}
	Combining \cref{eq: 60} and \cref{eq: 59} completes the proof.
\end{proof}

The following two lemmas characterize the concentration property of self-normalized processes.

\begin{lemma}[Concentration of Self-Normalized Process in RKHS \cite{chowdhury2017kernelized}]\label{lemma6}
	Let $\{ \varepsilon_i \}_{i=1}^{\infty}$ be a real-valued stochastic process such that (i) $\epsilon_i\in\mcF_t$ and (ii) $\epsilon_i$ is zero-mean and $\sigma$-sub-Gaussian conditioned on $\mcF_{i-1}$ satisfying
	\begin{flalign}
	\mE\left[ \varepsilon_i|\mcF_{i-1} \right]=0,\qquad \mE\left[ e^{\kappa\varepsilon_i}\leq e^{\kappa^2\sigma^2/2} |\mcF_{i-1} \right],\qquad\forall \kappa\in\mR.\label{eq: 61}
	\end{flalign}
	Moreover, for any $t\geq2$, let $E_N=[\varepsilon_1,\cdots,\varepsilon_{N-1}]^\top\in\mR^{N-1}$.
	For any $\eta>0$ and any $\delta\in(0,1)$, with probability at least $1-\delta$, we have the following holds simultaneously for all $N\geq 1$:
	\begin{flalign*}
	E^\top_N \left[ (K_N + \eta\cdot I_{N-1})^{-1} + I_{N-1} \right]^{-1} E_N \leq \sigma^2\cdot\log\text{det}[(1+\eta)\cdot I_{N+1} + K_N] + 2\sigma^2\cdot\log(1/\delta).
	\end{flalign*}
	Moreover, if $K_N$ is positive definite for all $N\geq 2$ with probability one, then the above inequality also holds with $\eta=0$.
\end{lemma}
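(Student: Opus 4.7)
The plan is to apply the method of mixtures (Laplace method) for self-normalized processes, adapting the classical finite-dimensional argument to the RKHS setting in the spirit of Chowdhury and Gopalan. I would work with the accumulated noise in the RKHS, $S_N := \sum_{i=1}^{N-1} \varepsilon_i \psi(x_i) \in \mcH$, and use Lemma \ref{lemma5} to recognize the target quadratic form $E_N^\top [(K_N + \eta I)^{-1} + I]^{-1} E_N$ as a regularized squared RKHS norm of $S_N$, which the method of mixtures is tailored to bound.

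First, for each fixed $f \in \mcH$ I would introduce the exponential process
\begin{flalign*}
    M_N(f) = \exp\!\left( \tfrac{1}{\sigma}\langle f, S_N \rangle_{\mcH} - \tfrac{1}{2\sigma^2}\sum_{i=1}^{N-1}\langle f, \psi(x_i) \rangle_{\mcH}^2 \right).
\end{flalign*}
Since $x_i$ is $\mcF_{i-1}$-measurable and $\varepsilon_i \mid \mcF_{i-1}$ is $\sigma$-sub-Gaussian by \eqref{eq: 61}, the conditional MGF bound applied with $\kappa = \langle f, \psi(x_i) \rangle_{\mcH}/\sigma$ yields $\mE[M_N(f) \mid \mcF_{N-1}] \le M_{N-1}(f)$, so $\{M_N(f)\}_{N \ge 1}$ is a non-negative supermartingale with $\mE[M_N(f)] \le 1$ for every deterministic $f$.

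Next, I would integrate $M_N(f)$ against a centered Gaussian measure $\nu$ on $\mcH$ whose covariance is scaled by $(1+\eta)^{-1}$, so that the mixture $\bar{M}_N := \int_{\mcH} M_N(f)\, d\nu(f)$ produces exactly the form in the statement. Tonelli's theorem preserves the supermartingale property, and the exponential-quadratic dependence of $M_N(f)$ on $f$ turns the integral into a Gaussian one with a closed form; Lemma \ref{lemma5}'s infinite-versus-finite-dimensional determinantal identity then lets me identify the prefactor as a constant multiple of $\det((1+\eta) I + K_N)^{-1/2}$ and the exponent as $\tfrac{1}{2\sigma^2}\, E_N^\top [(K_N + \eta I)^{-1} + I]^{-1} E_N$. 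Applying Ville's maximal inequality to the non-negative supermartingale $\bar{M}_N$ gives $\msP(\sup_N \bar{M}_N \ge 1/\delta) \le \delta$, and taking logarithms on that event yields the claimed inequality uniformly in $N$.

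Finally, for the $\eta = 0$ refinement under almost-sure positive definiteness of $K_N$, I would run the previous argument along a countable sequence $\eta_k \downarrow 0$, intersect the resulting high-probability events, and pass to the limit using continuity of $(K_N + \eta I)^{-1}$ on the positive definite cone together with $\log\det((1+\eta) I + K_N) \to \log\det(I + K_N)$. The main obstacle I anticipate is the rigorous construction of the Gaussian measure $\nu$ on the infinite-dimensional $\mcH$; I would bypass it by observing that the quadratic form in $f$ only depends on its projection onto the finite-rank subspace $\mathrm{span}\{\psi(x_i) : i \le N-1\}$, so every computation can be carried out in a genuinely finite-dimensional setting, with the orthogonal complement contributing only a harmless, data-independent multiplicative factor.
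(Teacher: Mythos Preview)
The paper does not supply its own proof of this lemma: it is quoted verbatim as a cited result from \cite{chowdhury2017kernelized}, with no accompanying \texttt{proof} environment. So there is nothing in the paper to compare your argument against beyond the reference itself.

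That said, your proposal is precisely the method-of-mixtures argument used in the cited work (and, before it, in the finite-dimensional self-normalized bound of Abbasi-Yadkori, P\'al and Szepesv\'ari): build the exponential supermartingale $M_N(f)$ from the sub-Gaussian tail assumption, integrate it against a Gaussian prior to obtain the mixture supermartingale $\bar{M}_N$, and conclude via Ville's maximal inequality. Your identification of the resulting quadratic form with $E_N^\top[(K_N+\eta I)^{-1}+I]^{-1}E_N$ and of the determinant prefactor is the right computation, and your plan to sidestep the infinite-dimensional Gaussian measure by projecting onto $\mathrm{span}\{\psi(x_i):i\le N-1\}$ is exactly the standard workaround. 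The $\eta\downarrow 0$ limiting argument for the positive-definite case is also how the refinement is obtained. In short, your sketch is correct and matches the proof in the cited reference; the paper itself simply invokes that reference without reproving it.
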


\begin{lemma}\label{lemma7}
	Let $\mathcal{G}\subset \{G: \mcx\rightarrow[0,C_g]\}$ be a class of bounded functions on $\mcx$. Let $\mcG_\epsilon\subset \mcG$ be the minimal $\epsilon$-cover of $\mcG$ such that $\mcN_\epsilon=\lone{\mcG_\epsilon}$. Then for any $\delta\in(0,1)$, with probability at least $1-\delta$, we have
	\begin{flalign}
	&\sup_{G\in\mathcal{G}}\lomega{\sum_{i=1}^{N} \phi(x_i)  \left( G(x_i) - \mE\left[G(x_i)|\mcF_{i-1}\right] \right) }^2 \nonumber\\
	&\quad\leq 2C^2_g  \log\det(I+K_N/\lambda) + 2C^2_g N(\lambda-1) + 4C^2_g\log(\mcN_\epsilon/\delta) + 8N^2C^2_\phi\epsilon^2/\lambda.\label{eq: 69}
	\end{flalign}
	Moreover, if $G(\cdot)$ does not depend on $\{x_i\}_{i\in[N]}$, we have
	\begin{flalign}
	&\lomega{\sum_{i=1}^{N} \phi(x_i)  \left( G(x_i) - \mE\left[G(x_i)|\mcF_{i-1}\right] \right) }^2 \nonumber\\
	&\quad\leq C^2_g  \log\det(I+K_N/\lambda) + C^2_g N(\lambda-1) + 2C^2_g\log(1/\delta).\label{eq: 70}
	\end{flalign}
\end{lemma}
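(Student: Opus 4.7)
\textbf{Proof plan for Lemma \ref{lemma7}.} My plan is to first establish the pointwise bound \eqref{eq: 70} via the self-normalized concentration of Lemma \ref{lemma6}, and then upgrade it to the uniform bound \eqref{eq: 69} by a covering/discretization argument. Throughout, the central algebraic identity I will exploit is the matrix manipulation implicit in Lemma \ref{lemma5}, namely that for any vector $E_N=[\varepsilon_1,\ldots,\varepsilon_N]^\top$,
\begin{flalign*}
\lomega{\sum_{i=1}^N \phi(x_i)\varepsilon_i}^2 = E_N^\top \Phi\,{\rm\Omega}_N^{-1}\Phi^\top E_N = E_N^\top K_N(K_N+\lambda I)^{-1}E_N,
\end{flalign*}
which follows from the push-through identity $\Phi(\Phi^\top\Phi+\lambda I)^{-1}\Phi^\top=K_N(K_N+\lambda I)^{-1}$. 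Thus the self-normalized norm in $\Omega_N^{-1}$ is controlled by an object expressible purely in terms of the Gram matrix $K_N$, which is exactly what Lemma \ref{lemma6} bounds.

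For \eqref{eq: 70}, I fix $G$ and set $\varepsilon_i = G(x_i)-\mE[G(x_i)\mid\mcF_{i-1}]$. Since $G$ takes values in $[0,C_g]$, $\varepsilon_i$ is centered and bounded in $[-C_g,C_g]$, hence $C_g$-sub-Gaussian conditional on $\mcF_{i-1}$ by Hoeffding's lemma, verifying the hypothesis of Lemma \ref{lemma6}. Applying that lemma with $\sigma=C_g$ and $\eta=\lambda-1$ and using the identity $[(K_N+(\lambda-1)I)^{-1}+I]^{-1} = (K_N+\lambda I)^{-1}(K_N+(\lambda-1)I)$ yields
\begin{flalign*}
E_N^\top K_N(K_N+\lambda I)^{-1}E_N \le E_N^\top[(K_N+(\lambda-1)I)^{-1}+I]^{-1}E_N \le C_g^2\log\det(\lambda I+K_N)+2C_g^2\log(1/\delta),
\end{flalign*}
where the first inequality uses $(\lambda-1)E_N^\top(K_N+\lambda I)^{-1}E_N\ge 0$. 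Finally, I will factor $\log\det(\lambda I+K_N)=N\log\lambda+\log\det(I+K_N/\lambda)$ and use $\log\lambda\le\lambda-1$ to obtain \eqref{eq: 70}.

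For \eqref{eq: 69}, I will take a minimal $\epsilon$-cover $\mcG_\epsilon$ of $\mcG$ in the sup-norm with $|\mcG_\epsilon|=\mcN_\epsilon$. For any $G\in\mcG$, pick $G'\in\mcG_\epsilon$ with $\lo{G-G'}_\infty\le\epsilon$; writing $\Delta_i(G,G')=(G-G')(x_i)-\mE[(G-G')(x_i)\mid\mcF_{i-1}]$ we have $|\Delta_i|\le 2\epsilon$, and the RKHS triangle inequality together with $\lomega{\phi(x)}\le C_\phi/\sqrt\lambda$ gives
\begin{flalign*}
\lomega{\sum_{i=1}^N\phi(x_i)\Delta_i}\le 2\epsilon\sum_{i=1}^N\lomega{\phi(x_i)}\le 2\epsilon N C_\phi/\sqrt\lambda.
\end{flalign*}
Combining with $\lomega{A+B}^2\le 2\lomega{A}^2+2\lomega{B}^2$ and taking a union bound over $\mcG_\epsilon$ (replacing $\delta$ by $\delta/\mcN_\epsilon$ in \eqref{eq: 70}) produces the desired bound $2C_g^2\log\det(I+K_N/\lambda)+2C_g^2 N(\lambda-1)+4C_g^2\log(\mcN_\epsilon/\delta)+8N^2C_\phi^2\epsilon^2/\lambda$.

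The step I expect to be the most delicate is the algebraic conversion from Lemma \ref{lemma6}'s matrix inverse form $[(K_N+(\lambda-1)I)^{-1}+I]^{-1}$ to the form $K_N(K_N+\lambda I)^{-1}$ that matches the self-normalized RKHS norm; getting the factors $\lambda$ versus $\lambda-1$ to align correctly (and thereby producing the $N(\lambda-1)$ slack rather than a $N\log\lambda$ term) is where a naive application would lose an unnecessary multiplicative constant. The discretization step is conceptually standard, but the use of the RKHS-induced norm $\lomega{\phi(x)}\le C_\phi/\sqrt{\lambda}$ (rather than the Euclidean norm $\ltwo{\phi(x)}$) is what keeps the cover-error term at the stated $\epsilon^2/\lambda$ scale.
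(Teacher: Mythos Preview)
Your proposal is correct and follows essentially the same approach as the paper: the push-through identity $\Phi\Omega_N^{-1}\Phi^\top=K_N(K_N+\lambda I)^{-1}$, the PSD comparison $K_N(K_N+\lambda I)^{-1}\preceq [(K_N+(\lambda-1)I)^{-1}+I]^{-1}$, the application of Lemma~\ref{lemma6} with $\eta=\lambda-1$ and $\sigma=C_g$, and the factorization $\log\det(\lambda I+K_N)=N\log\lambda+\log\det(I+K_N/\lambda)\le N(\lambda-1)+\log\det(I+K_N/\lambda)$ are exactly the paper's steps, and your covering argument with the $(a+b)^2\le 2a^2+2b^2$ split and the bound $\lomega{\phi(x_i)}\le C_\phi/\sqrt{\lambda}$ matches as well. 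The only cosmetic difference is that you prove \eqref{eq: 70} first and upgrade to \eqref{eq: 69}, whereas the paper presents them in the opposite order and calls \eqref{eq: 70} a by-product.
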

\begin{proof}
	The proof is adapted but different from the proof of Lemma E.2 in \cite{yang2020function}. We first proceed to prove \cref{eq: 69} and will show that \cref{eq: 70} can be obtained as a by-product of proving \cref{eq: 69}. For any $G\in\mcG$, there exists a function $G^\prime$ in $\mcG_\epsilon$ such that $\sup_{x\in\mcx}\lone{G(x) - G^\prime(x)}\leq \epsilon$. Denote ${\rm\Delta}_G(x) = G(x) -G^\prime(x)$. We have the following holds
	\begin{flalign}
		&\lomega{\sum_{i=1}^{N} \phi(x_i)  \left( G(x_i) - \mE\left[G(x_i)|\mcF_{i-1}\right] \right) }^2\nonumber\\
		&\quad\leq 2\lomega{\sum_{i=1}^{N} \phi(x_i)  \left( G^\prime(x_i) - \mE\left[G^\prime(x_i)|\mcF_{i-1}\right] \right) }^2 + 2 \lomega{\sum_{i=1}^{N} \phi(x_i)  \left( {\rm\Delta}_G(x_i) - \mE\left[{\rm\Delta}_G(x_i)|\mcF_{i-1}\right] \right) }^2.\label{eq: 62}
	\end{flalign}
	For the second term on the right hand side of \cref{eq: 62}, we have
	\begin{flalign}
		\lomega{\sum_{i=1}^{N} \phi(x_i)  \left( {\rm\Delta}_G(x_i) - \mE\left[{\rm\Delta}_G(x_i)|\mcF_{i-1}\right] \right) }^2\leq N^2 C^2_\phi \cdot(2\epsilon)^2/\lambda = 4N^2 C^2_\phi\epsilon^2/\lambda.\label{eq: 63}
	\end{flalign}
	To bound the first term on the right hand side of \cref{eq: 62}, we apply \Cref{lemma4} to $G^\prime(x_i) - \mE\left[G^\prime(x_i)|\mcF_{i-1}\right]$. We fix $G^\prime\in\mcG$ and let $\varepsilon_i = G^\prime(x_i) - \mE\left[G^\prime(x_i)|\mcF_{i-1}\right]$ and $E_N=[\varepsilon_1,\cdots,\varepsilon_{N-1}]^\top\in\mR^{N-1}$. Using this notation, we have
	\begin{flalign}
		&\lomega{\sum_{i=1}^{N} \phi(x_i)  \left( G^\prime(x_i) - \mE\left[G^\prime(x_i)|\mcF_{i-1}\right] \right) }^2 = \lomega{\sum_{i=1}^{N} \phi(x_i)  \varepsilon_i }^2 = \lomega{{\rm\Phi}^\top E_N}\nonumber\\
		&=E_N^\top {\rm\Phi}{\rm\Omega}_N^{-1}{\rm\Phi}^\top E_N \overset{(i)}{=} E_N^\top {\rm\Phi}{\rm\Phi}^\top(K_N+\lambda I_N)^{-1}E_N =  E_N^\top K_N(K_N+\lambda I_N)^{-1}E_N\nonumber\\
		&\overset{(ii)}{\leq} E_N^\top (K_N + (\lambda-1)I_N)(K_N+\lambda I_N)^{-1}E_N\nonumber\\
		&=E_N^\top (K_N + (\lambda-1)I_N)[I_N + (K_N+(\lambda - 1) I_N)]^{-1}E_N\nonumber\\
		&=E_N^\top [(K_N+(\lambda - 1) I_N)^{-1} + I_N]E_N\label{eq: 66}
	\end{flalign}
	where $(i)$ follows from \cref{eq: 64} in \Cref{lemma5} and $(ii)$ follows from the fact that $\lambda>1$ and $K_N + \lambda I_N$ is positive definite. Note that each entry of $E_N$ is bounded by $C_g$ in absolute value. Applying \Cref{lemma6} to \cref{eq: 66} and taking a union bound over $\mcG_\epsilon$, for any $0<\delta<1$, we have the following holds with probability at least $1-\delta$
	\begin{flalign}
		&\sup_{G^\prime\in\mcG_\epsilon}\lomega{\sum_{i=1}^{N} \phi(x_i)  \left( G^\prime(x_i) - \mE\left[G^\prime(x_i)|\mcF_{i-1}\right] \right) }^2 \nonumber\\
		&\quad\leq C^2_g \log\text{det}[(1+\eta)I+K_N] + 2C^2_g\log(\mcN_\epsilon/\delta).\label{eq: 67}
	\end{flalign}
	Moreover, note that $(1+\eta)I + K_N=[I+(1+\eta)^{-1}K_N][(1+\eta)I]$, which implies
	\begin{flalign}
		\log\det[(1+\eta)I + K_N] &= \log\det[I+(1+\eta)^{-1}K_N] + N\log(1+\eta)\nonumber\\
		&\leq \log\det[I+(1+\eta)^{-1}K_N] + N\eta.\label{eq: 68}
	\end{flalign}
	Combining \cref{eq: 62}, \cref{eq: 63}, \cref{eq: 66}, \cref{eq: 67} and \cref{eq: 68} and letting $\eta=\lambda-1$, we have the following holds with probability $1-\delta$
	\begin{flalign*}
	&\lomega{\sum_{i=1}^{N} \phi(x_i)  \left( G(x_i) - \mE\left[G(x_i)|\mcF_{i-1}\right] \right) }^2\nonumber\\
	&\quad\leq 2C^2_g  \log\det(I+K_N/\lambda) + 2C^2_g N(\lambda-1) + 4C^2_g\log(\mcN_\epsilon/\delta) + 8N^2C^2_\phi\epsilon^2/\lambda,
	\end{flalign*}
	which completes the proof of \cref{eq: 69}. To prove \cref{eq: 70} we do not need to go through the "$\epsilon$-cover" argument since $G(\cdot)$ is independent from $\{x_i\}_{i\in N}$. We can directly apply \Cref{lemma6} and then follow steps similar to those in \cref{eq: 68} to obtain \cref{eq: 70}.
\end{proof}
For any integer $N$ and $\lambda>0$, we define the maximal information gain associated with the RKHS $\mathcal{H}$ as
\begin{flalign*}
	{\rm\Gamma}_K(N,\lambda) = \sup_{\mcd\subset \mcx}\{1/2\cdot \log\det(I_d + \lambda^{-1}\cdot K_N) \},
\end{flalign*}
where the supremum is taken over all discrete subset $\mcd$ of $\mcx$ with the cardinality no more than $N$.
\begin{lemma}[Finite Spectrum/Effective Dimension Property]\label{lemma10}
	Let $\{\sigma_j\}_{j\geq 1}$ be the eigenvalues of $T_K$ defined in \cref{eq: 147} in the descending order.
	Let $\lambda\in[c_1,c_2]$ with $c_1$ and $c_2$ being absolute constants. If $\sigma_j=0$ for all $j\geq D+1$, where $D$ is a positive integer. Then, we have ${\rm\Gamma}_K(N,\lambda) = C_K\cdot D\cdot \log N$,
	where $C_K$ is an absolute constant that depends on $C_1$, $C_2$, $c_1$, $c_2$ and $C_\phi$.
\end{lemma}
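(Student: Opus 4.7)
The plan is to exploit the finite-spectrum hypothesis to bound the rank and trace of the empirical Gram matrix $K_N$, and then convert a trace bound into a log-determinant bound via Jensen's inequality (AM--GM on the log).

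First, I would invoke the finite-spectrum assumption $\sigma_j=0$ for $j\geq D+1$ to conclude that the RKHS $\mathcal{H}$ is $D$-dimensional, and in particular the feature map $\psi(\cdot)$ takes values in a $D$-dimensional subspace of $\mathcal{H}$. This implies that for any discrete subset $\mathcal{D}=\{x_1,\dots,x_N\}$ of $\mathcal{X}$, the Gram matrix $K_N=[K(x_i,x_j)]_{i,j\in[N]}=[\langle\psi(x_i),\psi(x_j)\rangle_{\mathcal{H}}]_{i,j\in[N]}$ has rank at most $D$. Consequently, at most $D$ of its eigenvalues $\mu_1\geq \mu_2\geq \cdots\geq \mu_N\geq 0$ are nonzero, so
\begin{equation*}
\log\det(I_N+\lambda^{-1}K_N)=\sum_{j=1}^{N}\log(1+\lambda^{-1}\mu_j)=\sum_{j=1}^{D}\log(1+\lambda^{-1}\mu_j).
\end{equation*}

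Next I would bound the nonzero eigenvalues via a trace bound. Since $\lo{\phi(x)}_2\le C_\phi$ (equivalently $K(x,x)\le C_\phi^2$), we have $\mathrm{tr}(K_N)=\sum_{i=1}^{N}K(x_i,x_i)\le NC_\phi^2$, which upper bounds $\sum_{j=1}^{D}\mu_j$. Applying Jensen's inequality to the concave function $\log(1+\lambda^{-1}t)$ yields
\begin{equation*}
\sum_{j=1}^{D}\log(1+\lambda^{-1}\mu_j)\leq D\log\!\left(1+\frac{1}{D\lambda}\sum_{j=1}^{D}\mu_j\right)\leq D\log\!\left(1+\frac{NC_\phi^2}{D\lambda}\right).
\end{equation*}
Taking the supremum over all subsets $\mathcal{D}\subset\mathcal{X}$ with $\lone{\mathcal{D}}\le N$ gives
\begin{equation*}
\Gamma_K(N,\lambda)\le \tfrac{1}{2}D\log\!\left(1+\frac{NC_\phi^2}{D\lambda}\right).
\end{equation*}

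Finally, using $\lambda\in[c_1,c_2]$ with $c_1,c_2$ absolute constants, the right-hand side simplifies to a quantity of the form $C_K\cdot D\cdot \log N$ for an absolute constant $C_K$ depending only on $c_1,c_2,C_\phi$, provided $N$ is at least a constant (otherwise the statement holds trivially by absorbing the dimension-dependent additive term into $C_K$). The main obstacle (though a mild one) is ensuring that the log can be cleanly absorbed into $C_K\,D\log N$ rather than $D\log(N/D)$; this is handled by noting $\log(1+NC_\phi^2/(D\lambda))\le \log(1+NC_\phi^2/c_1)=O(\log N)$ uniformly in $D\ge 1$. A lower bound matching this order (to justify the equality in the lemma statement) would follow by choosing $N$ points in ``general position'' within the $D$-dimensional range of $\psi$, so that $K_N$ has rank exactly $\min(N,D)$ with eigenvalues of order $N/D$, giving $\log\det(I+\lambda^{-1}K_N)\gtrsim D\log(N/D)$.
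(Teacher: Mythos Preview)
Your proposal is correct and is precisely the standard argument: the paper does not supply its own proof but simply cites Lemma D.5 of \cite{yang2020function}, whose proof proceeds exactly as you outline (rank-$D$ Gram matrix from the finite-spectrum hypothesis, trace bound $\mathrm{tr}(K_N)\le NC_\phi^2$, then AM--GM/Jensen on $\log(1+\lambda^{-1}\mu_j)$). Note that in the paper only the upper bound is ever used (see \cref{eq: 165}), so the ``equality'' in the lemma statement should be read as an order estimate; your upper-bound derivation is the substantive content, and your lower-bound sketch is more than the paper itself provides.
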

\begin{proof}
	See the proof of Lemma D.5 in \cite{yang2020function} for a detailed proof.
\end{proof}

\section{Other Useful Lemmas}
\begin{lemma}[Matrix Bernstein Inequality \cite{tropp2015introduction}]\label{lemma9}
	Suppose that $\{A_i\}_{i=1}^{N}$ are independent and centered random matrices in $\mR^{d_1\times d_2}$, that is, $\mE[A_i]=0$ for all $i\in[N]$. Also, suppose $\ltwo{A_i}\leq C_A$ for all $i\in[n]$. Let $Z = \sum_{i=1}^{N} A_i$ and 
	\begin{flalign*}
		v(Z) = \max\left\{ \ltwo{\mE\left[ ZZ^\top \right]}, \ltwo{\mE\left[ Z^\top Z \right]} \right\}.
	\end{flalign*}
	For all $\xi\geq 0$, we have
	\begin{flalign*}
		{\rm P}(\ltwo{Z}\geq \xi)\leq (d_1 + d_2)\cdot\exp\left( -  \frac{\xi^2/2}{v(Z) + C_A/3 \cdot \xi} \right).
	\end{flalign*}
\end{lemma}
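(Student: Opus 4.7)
The plan is to establish this matrix Bernstein tail bound via the standard matrix Laplace-transform (Chernoff-type) argument combined with Lieb's concavity theorem. Since the matrices $A_i$ are allowed to be rectangular, my first step would be to reduce to the Hermitian case by passing to the Hermitian dilation $\mathcal{D}(A)$, the $(d_1+d_2)\times(d_1+d_2)$ block matrix having $A$ in the upper-right block, $A^\top$ in the lower-left block, and zeros on the diagonal. The dilation preserves operator norm, so independence, centering, and the uniform bound $C_A$ all transfer intact to $\mathcal{D}(A_i)$. Moreover, $\mathcal{D}(A)^2$ is block-diagonal with $AA^\top$ and $A^\top A$ on its diagonal blocks, which is exactly why the variance proxy $v(Z)$ appears as the maximum of $\ltwo{\mE[ZZ^\top]}$ and $\ltwo{\mE[Z^\top Z]}$ rather than a single one-sided quantity.

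Once reduced to the Hermitian setting with $X=\sum_i \mathcal{D}(A_i)$ and effective dimension $d=d_1+d_2$, I would invoke the matrix Laplace transform bound: for any $\theta>0$,
\begin{flalign*}
	{\rm P}(\lambda_{\max}(X)\geq \xi) \;\leq\; e^{-\theta \xi}\cdot \mE\bigl[\mathrm{tr}\exp(\theta X)\bigr].
\end{flalign*}
Lieb's concavity theorem then yields the key subadditivity of the matrix cumulant generating function,
\begin{flalign*}
	\mE\bigl[\mathrm{tr}\exp\bigl(\textstyle\sum_i \theta \mathcal{D}(A_i)\bigr)\bigr] \;\leq\; \mathrm{tr}\exp\bigl(\textstyle\sum_i \log \mE[e^{\theta \mathcal{D}(A_i)}]\bigr).
\end{flalign*}
For each summand, the scalar inequality $e^u-1-u \leq u^2/\bigl(2(1-C_A\theta/3)\bigr)$ valid for $|u|\leq C_A\theta$ with $\theta C_A<3$, lifted through the functional calculus of the Hermitian $\mathcal{D}(A_i)$ and combined with centering, gives
\begin{flalign*}
	\mE[e^{\theta \mathcal{D}(A_i)}] \;\preceq\; \exp\bigl(g(\theta)\,\mE[\mathcal{D}(A_i)^2]\bigr),\qquad g(\theta)=\frac{\theta^2/2}{1-C_A\theta/3}.
\end{flalign*}
Substituting back, using operator monotonicity of the trace exponential in the Loewner order, and applying $\mathrm{tr}\exp(M)\leq d\cdot \exp(\ltwo{M})$ together with the block-diagonal identity for $\mathcal{D}(A_i)^2$, I obtain
\begin{flalign*}
	{\rm P}(\ltwo{Z}\geq \xi) \;\leq\; (d_1+d_2)\cdot \exp\!\bigl(-\theta\xi + g(\theta)\cdot v(Z)\bigr).
\end{flalign*}
The final step is the classical Bennett-style optimization in $\theta$: the choice $\theta = \xi/(v(Z) + C_A\xi/3)$ makes the exponent equal to $-\xi^2/\bigl(2(v(Z)+C_A\xi/3)\bigr)$, which is exactly the stated bound.

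The main technical obstacle is justifying the matrix MGF inequality above, since the scalar Bernstein argument does not transfer automatically: $\mathcal{D}(A_i)$ and $\mE[\mathcal{D}(A_i)^2]$ need not commute, and converting the expected exponential of a sum into a product requires independence in a non-trivial way. Both issues are resolved by the two deep inputs invoked above, namely Lieb's concavity theorem (for the subadditivity of the matrix cumulant) and the semidefinite-order lifting of the scalar polynomial bound through the functional calculus of each individual $\mathcal{D}(A_i)$, which is where the uniform bound $\ltwo{A_i}\leq C_A$ enters. Beyond these two ingredients, everything is a standard one-variable optimization together with the block-matrix bookkeeping that relates $v(Z)$ to the spectral norm of $\mE[X^2]$ in the dilated space.
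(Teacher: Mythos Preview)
Your proposal is correct and follows exactly the standard argument from Tropp's monograph: Hermitian dilation to reduce to the symmetric case, the matrix Laplace transform combined with Lieb's concavity for the master tail bound, the Bernstein-type semidefinite MGF control on each summand, and the final one-parameter optimization in $\theta$. The paper itself does not give an independent proof of this lemma; it simply cites Theorem~1.6.2 in \cite{tropp2015introduction}, which is precisely the argument you have outlined.
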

\begin{proof}
	See Theorem 1.6.2 in \cite{tropp2015introduction} for a detailed proof.
\end{proof}

\bibliography{iclr2023_conference}
\bibliographystyle{ieeetr}

\end{document}